\numberwithin{equation}{section}
\numberwithin{table}{section}
\numberwithin{figure}{section}
\newtheorem{lemma}{Lemma}
\newtheorem{theorem}{Theorem}
\newtheorem{corollary}{Corollary}
\newtheorem{proposition}{Proposition}
\newtheorem{remark}{Remark}
\newtheorem{assumption}{Assumption}
\newcommand{\supp}{\text{supp}}
\newcommand{\argmin}{\mathop{\arg\min}}
\newcommand{\Tr}{\text{Tr}}
\newcommand{\s}{\setminus s}
\newcommand{\var}{\text{Var}}
\newlength{\fwtwo} \setlength{\fwtwo}{0.45\textwidth}
\title{Learning Pairwise Graphical Models with Nonlinear Sufficient Statistics}
\author{
  Xiao-Tong Yuan$^{1,2}$,\ \  Ping Li$^{2,3}$, \ \ Tong Zhang$^{2}$\\\\
  1. Department of Statistical Science, Cornell University \\
  Ithaca, New York, 14853, USA \\
  \and
  2. Department of Statistics \& Biostatistics, Rutgers University \\
  Piscataway, New Jersey, 08854, USA \\
  \and
  3. Department of Computer Science, Rutgers University \\
  Piscataway, New Jersey, 08854, USA \\  \\
  E-mail: \{\texttt{xtyuan1980@gmail.com}, \texttt{pingli@stat.rutgers.edu}, \texttt{tzhang@stat.rutgers.edu}\}
  }
\date{}
\begin{document}

\maketitle

\begin{abstract}
We investigate a generic problem of learning pairwise exponential
family graphical models with pairwise sufficient statistics defined
by a global mapping function, e.g., Mercer kernels. This subclass of
pairwise graphical models allow us to flexibly capture complex
interactions among variables beyond pairwise product. We propose two
$\ell_1$-norm penalized maximum likelihood estimators to learn the
model parameters from i.i.d. samples. The first one is a joint
estimator which estimates all the parameters simultaneously. The
second one is a node-wise conditional estimator which estimates the
parameters individually for each node. For both estimators, we show
that under proper conditions the extra flexibility gained in our
model comes at almost no cost of statistical and computational
efficiency. We demonstrate the advantages of our model over
state-of-the-art methods on synthetic and real datasets.
\end{abstract}

\subparagraph{Key words.} Graphical Models, Exponential Family,
Mercer Kernels, Sparsity.

\newpage

%\subparagraph{AMS subject classifications (2010).} Provide up to
%five subject classification codes here; search for the string
%``MSC'' at \url"www.ams.org".

\section{Introduction}
As an important class of statistical models for exploring the
interrelationship among a large number of random variables,
undirected graphical models (UGMs) have enjoyed popularity in a wide
range of scientific and engineering domains, including statistical
physics, computer vision, data mining, and computational biology.
Let $X = [X_1,...,X_p]$ be a $p$-dimensional random vector with each
variable $X_i$ taking values in a set $\mathcal {X}$. Suppose
$G=(V,E)$ is an undirected graph consists of a set of vertices
$V=\{1,...,p\}$ and a set of unordered pairs $E$ representing edges
between the vertices. The UGMs over $X$ corresponding to $G$ are a
set of distributions which satisfy Markov independence assumptions
with respect to the graph $G$: $X_s$ is independent of $X_t$ given
$\{X_u : u \neq s, t\}$ if and only if $(s, t) \notin E$. According
to the Hammersley-Clifford theorem~\citep{Clifford-1990}, the
general form for a (strictly positive) probability density encoded
by an undirected graph $G$ can be written as the following
exponential family distribution~\citep{Wainwright-2008}:
\[
\mathbb{P}(X;\theta) \propto \exp\left\{ \sum_{c \in
\texttt{Cliques}(G)} \theta_c f_c(X_c)\right\},
\]
where the sum is taken over all cliques, or fully connected subsets
of vertices of the graph $G$, $\{f_c\}$ are the clique-wise
sufficient statistics and $\theta = \{\theta_c\}$ are the weights
over the sufficient statistics. Learning UGMs from data within this
exponential family framework can be reduced to estimating the
weights $\theta$. Particularly, the cliques of \emph{pairwise} UGMs
consist of the set of nodes $V$ and the set of edges $E$, so that
\begin{equation}\label{prob:ggm_pairwise}
\mathbb{P}(X; \theta) \propto \exp\left\{ \sum_{s \in V} \theta_s
f_s(X_s) + \sum_{(s,t) \in E} \theta_{st} f_{st} (X_s, X_t)\right\}.
\end{equation}
In such a pairwise model, $(X_s, X_t)$ are conditionally independent
(given the rest variables) if and only if the weight $\theta_{st}$
is zero. A fundamental issue that arises is to specify
sufficient statistics, i.e., $\{f_s(X_s),
f_{st}(X_s,X_t)\}$, for modeling the interactions among variables. The
most popular instances of pairwise UGMs are Gaussian graphical
models (GGMs) and Ising (or Potts) models. GGMs use the node-wise
values and pairwise product of variables, i.e., $\{X_s, X_sX_t\}$,
as sufficient statistics and these are useful for modeling
real-valued
data~\citep{Speed-1986,Banerjee-2008,Rothman-2008,Yuan-Lin-2007}.
However, the multivariate normal distributional assumption imposed
by GGMs is quite stringent because this implies the marginal
distribution of any variable must also be Gaussian. In the case of binary or finite
nominal discrete random variables, Ising models are popular choices which also use pairwise product as
sufficient statistics to define the interactions among
variables~\citep{Ravikumar-AoS-2010,Jalali-AISTAT-2011}. This
subclass of models, however, are not suitable for modeling
count-valued variables such as non-negative integers. To find a
broader class of parametric graphical models,
\citet{Yang-GMGLM-NIPS-2012,Yang-EFGM-2013} proposed
\emph{exponential family graphical models} (EFGMs) as a unified
framework to learn UGMs with node-wise conditional distributions
arising from generalized linear models (GLMs). The distribution of
EFGMs is given by
\begin{equation}\label{equat:efgm_distr}
\mathbb{P}(X;\theta) \propto \exp\left\{ \sum_{s \in V}\theta_s
f(X_s) + \sum_{(s,t) \in E} \theta_{st} X_sX_t \right\},
\end{equation}
where $f(\cdot)$ is the base measure function which defines the
node-wise sufficient statistics. It is a special case of
distribution~\eqref{prob:ggm_pairwise} with $f_s(X_s) \equiv f(X_s)$
and $f_{st}(X_s,X_t)=X_sX_t$. An important merit of this model is
its flexibility in deriving multivariate graphical model
distributions from univariate exponential family distributions, such
as the Gaussian, binomial/multinomial, Poisson, exponential
distributions, etc..

\subsection{Motivation}
It is noteworthy that the extra gain of
flexibility in EFGMs mostly attributes to the node-wise base measure
$f(\cdot)$ which characterizes the node-conditional distributions.
The pairwise sufficient statistics, however, are still the pairwise
product as used for GGMs and Ising models. This is clearly
restrictive in the scenarios where the underlying pairwise
interactions of variables could be highly nonlinear. To
illustrate this restriction, we consider a
special case of the distribution~\eqref{prob:ggm_pairwise} in which
$f_s(X_s)\equiv 0$ and $f_{st}(X_s,X_t) = \exp\{|X_s - X_t|^2\}$,
i.e.,
\begin{equation}\label{equat:example_distr}
\mathbb{P}(X;\theta) \propto \exp\left\{\sum_{(s, t) \in E}
\theta_{st} \exp\{|X_s - X_t|^2\}\right\}.
\end{equation}
Assume that the underlying graph has a block structure as shown in
Figure~\ref{fig:example}(a) with parameters $\theta_{st} = 1$ for
connected pairs $(X_s,X_t)$. Let $p=50$ and each variate $X_s$ take
values in the real interval $[-10,10]$. Using Gibbs sampling, we
generate $10$ data samples from this graphical model.
Figure~\ref{fig:example}(b) shows the recovered graph structure by
fitting the data with the GGMs~\eqref{equat:efgm_distr}. It can be
clearly seen that GGMs fail when applied to this synthetic data with
highly nonlinear interactions among variables. This example
motivates us to investigate an important subclass of pairwise
graphical models in which the underlying exponential family employs
sufficient statistics beyond pairwise product.

\begin{figure}[h!]
\centering \subfigure[Truth structure]{
\includegraphics[width=50mm]{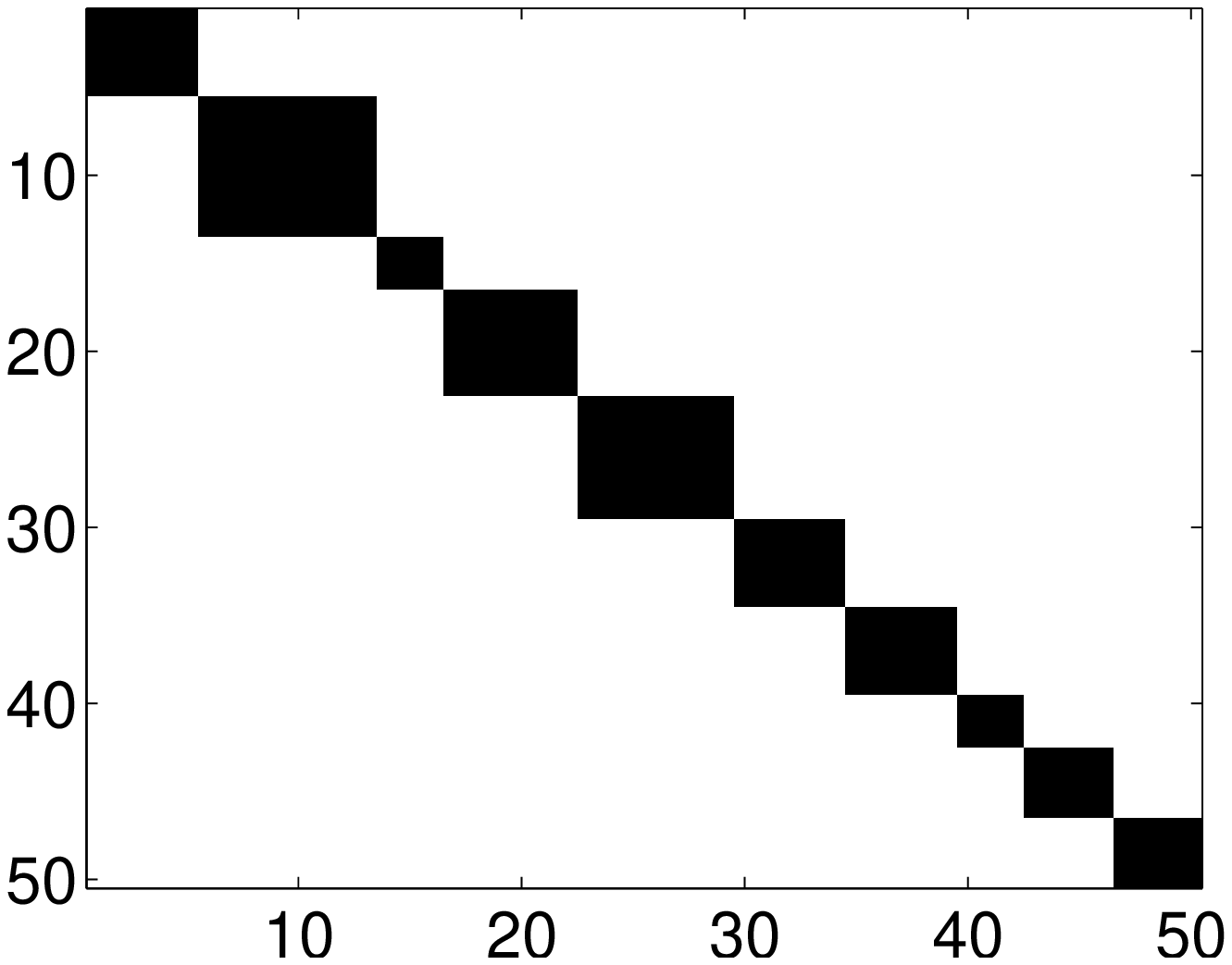}\label{fig:example_gt}
} \subfigure[GGMs]{
\includegraphics[width=50mm]{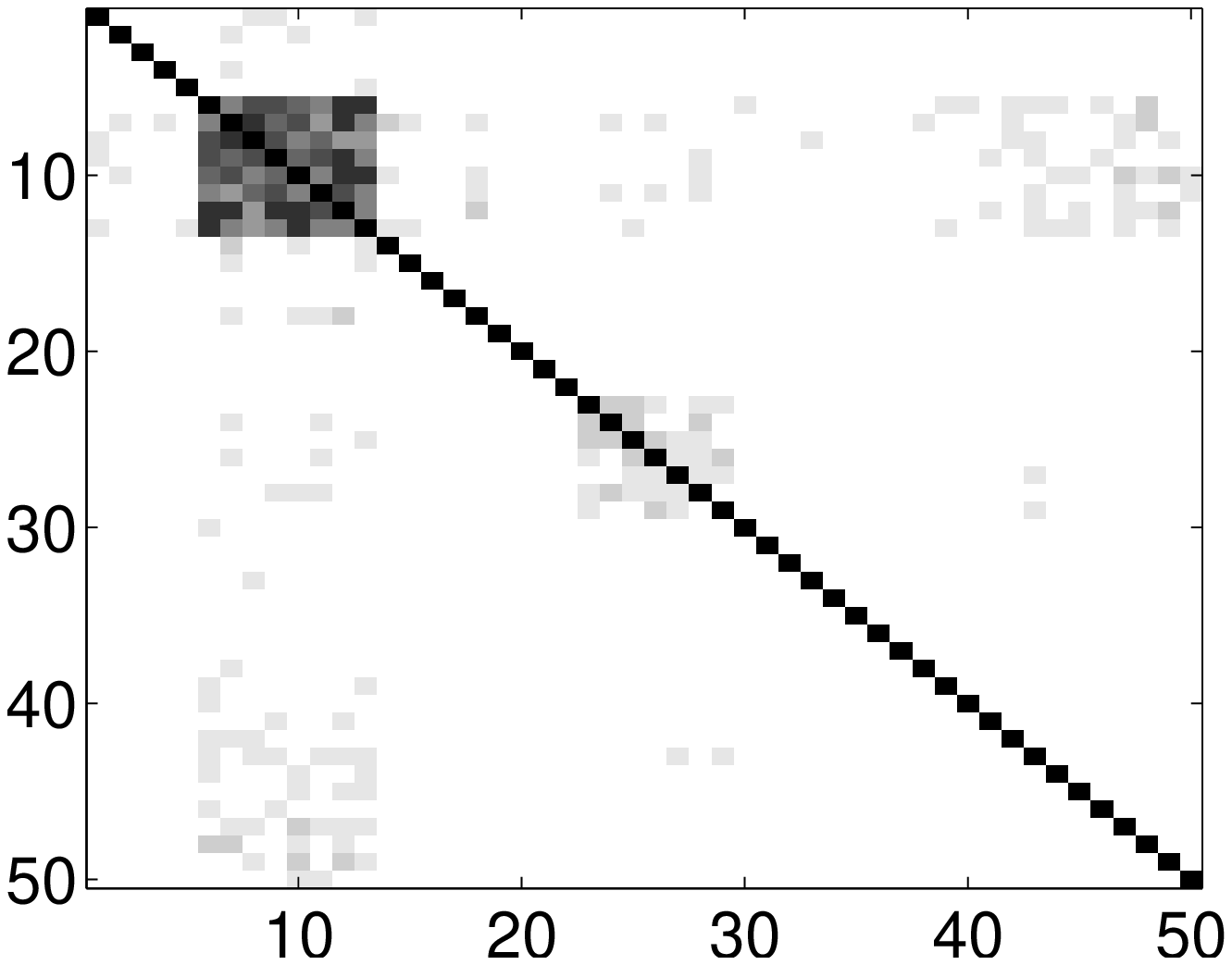}\label{fig:example_ggms}
} \subfigure[Our approach]{
\includegraphics[width=50mm]{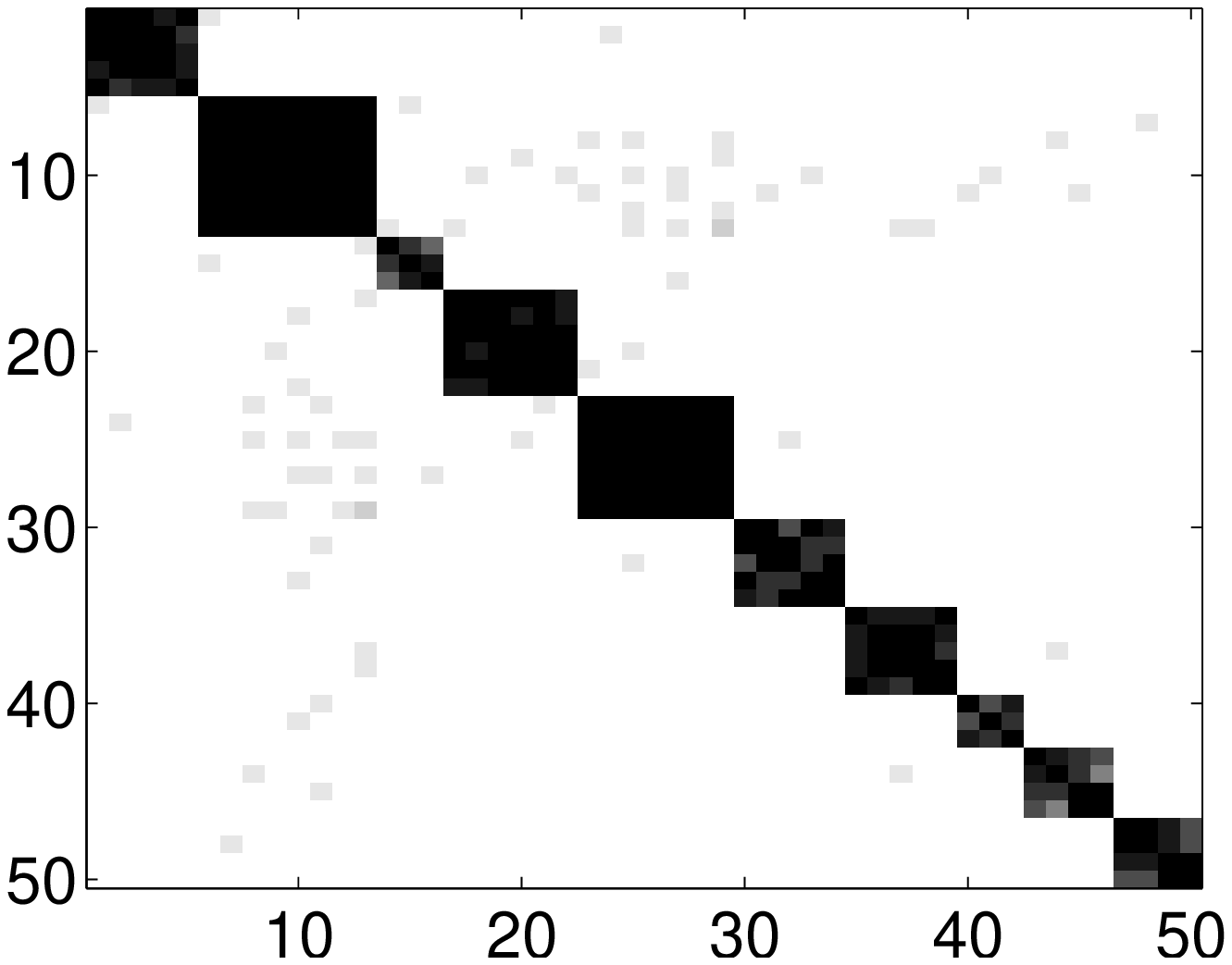}\label{fig:example_nlggm}
} \caption{An illustrative example to justify the importance of
modeling nonlinear interactions among random variables in a
graphical model. The intensity of each entry is the frequency of nonzeros identified for this entry out of 10 replications. White indicates 10 zeros identified out of 10 runs, and black indicates 0/10.} \label{fig:example}
\end{figure}

\subsection{Our Contribution}

In this paper, we address the problem of learning pairwise UGMs with
pairwise sufficient statistics defined by
$\{f_{st}(X_s,X_t):=\phi(X_s,X_t)\}$ where $\phi$ is the global
parametric function. This subclass of UGMs allow us to
model highly nonlinear interactions among the variables. In
contrast, most existing parametric/semiparametric graphical models
use pairwise product of variables (or properly transformed
variables) as sufficient
statistics~\citep{Ravikumar-EJS-2011,Liu-Nonparanormal,Yang-GMGLM-NIPS-2012},
and thus in their nature are unable to capture underlying complex
interactions among the variables. We propose two estimators to learn
the weights in the proposed UGMs. The first estimator is formulated
as $\ell_1$-norm penalized maximum likelihood estimation (MLE). The
second estimator is formulated as $\ell_1$-norm penalized node-wise
conditional MLE. The parameters of the global mapping $\phi$ can be
selected through, e.g., cross-validations.

One contribution of this paper is the statistical
efficiency analysis of the proposed estimators in terms of parameter
estimation error. We prove that under proper conditions the joint
MLE estimator achieves convergence rate $O(\sqrt{|E|\ln p /n})$
where $|E|$ is the number of edges. For the node-wise conditional
estimator, we prove that under proper conditions it achieves
convergence rate $O(\sqrt{d \ln p/n})$ in which $d$ is the degree of
the underlying graph $G$. For GGMs, these convergence rates are
known to be minimax optimal~\citep{Yuan-JMLR-2010,Cai-CLIME-2011}.
We have also analyzed the computational efficiency of the proposed
estimators. Particularly, when the mapping $\phi$ is a Mercer
kernel, we show that with proper relaxation the joint MLE estimator
reduces to a log-determinant program and the conditional MLE
estimator reduces to a Lasso program. These relaxed estimators can
be  efficiently optimized using off-the-shelf algorithms.

We conduct careful numerical studies on simulated and real data to
support our claims. Our simulation results show that, when the data
are drawn from an underlying UGMs with highly nonlinear sufficient
statistics, our approach significantly outperforms GGMs and
Nonparanormal estimators in most cases. The experimental results on
a stock price data show that our method recovers more accurate links
than  GMMs and Nonparanormal estimators. Continuing
with the aforementioned illustrative example,
Figure~\ref{fig:example}(c) shows the graph structure recovered
by our proposed semi-parametric model with heat kernel
$\phi(X_s,X_t)=\exp\{|X_s-X_t|^2/\sigma^2\}$. It can be clearly seen
that our model performs well while the GGMs fail on this example.

\subsection{Related Work}
In order to model random variables beyond parametric UGMs such as
GGMs and Ising models, researchers recently investigated
semiparametric/nonparametric extensions of these parametric models.
The Nonparanormal~\citep{Liu-Nonparanormal} and copula-based
methods~\citep{Dobra-2011} are semiparametric graphical models which assume that data is Gaussian after applying a monotone transformation. The network structure of these models can be recovered by fitting GGMs over the transformed variables. This class of models are also known as Gaussian copula family~\citep{Klaassen-1997,Tsukahara-2005}. More broadly,
one could learn transformations of the variables and then fit any
parametric UGMs (e.g., EFGMs) over the transformed variables. In two
recent papers~\citep{Liu-RankSemiparaGGM,Zou-RankSemiparaGGM}, the
rank-based estimators were used to estimate correlation matrix $\hat
S$ and then fit the following model GGMs.
\begin{equation}\label{equat:nonparanormal_distr}
\mathbb{P}(X;\Omega) = \frac{1}{\sqrt{(2\pi)^p (\det \Omega)^{-1}}}
\exp\left\{ - \Tr (\Omega \hat S) \right\},
\end{equation}
where $\Omega\succ 0$ is the precision matrix to be estimated. The
sufficient statistics used in this model are encoded in the
correlation matrix $\hat S$. Very recently, \citet{Gu-SINICA2013}
proposed a functional minimization framework to estimate the
nonparametric model~\eqref{prob:ggm_pairwise} over a Reproducing
Hilbert Kernel Space (RKHS). In this framework, to infer geometric
structure, a ``hypothesis testing'' method is used to eliminate
those weak interaction terms (edges). The forest density
estimation~\citep{Lafferty-SparseNonParaGM} is a fully nonparametric
method for estimating UGMs with structure restricted to be a forest.
Combinatorial approaches were proposed by~\citet{Lauritzen-1996,Bishop-2007} for fitting graphical models over multivariate count data. A kernel method
was proposed by~\citet{Bach-NIPS-2002} for learning the structure of
graphical models by treating variables as Gaussians in a mapped
high-dimensional feature space.

\vspace{-0.05in}
\subsection{Notation and Outline}

\noindent\textbf{Notation.} In the following, $\theta =
(\theta_i)\in \mathbb{R}^p$ is a vector; $\Theta = (\Theta_{ij}) \in
\mathbb{R}^{p\times p}$ is a matrix. The following notations will be
used in the text.

\begin{itemize}
\item $\supp(\theta)$: the support (set of nonzero elements) of $\theta$.
\item $\|\theta\|_0$: the number of nonzero of $\theta$.
\item $\|\theta\|_1 =\sum_{i=1}^p |\theta_i|$: the $\ell_1$-norm of vector $\theta$.
\item $\|\theta\|_2=\sqrt{\theta^\top \theta}$: the Euclidean norm of vector $\theta$.
\item $|\Theta|_1 = \sum_{i=1}^p\sum_{j=1}^q |\Theta_{ij}|$: the element-wise $\ell_1$-norm of matrix $\Theta$.
\item  $\|\Theta\|_{2,\infty} = \max_{j} \sqrt{\sum_i \Theta^2_{ij}}$: the max column Euclidean norm of $\Theta$.
\item $\supp(\Theta)=\{(i,j):\Theta_{ij}\neq 0\}$: the support of $\Theta$.
\item $\Tr(\Theta)$: the trace (sum of diagonal elements) of a matrix $\Theta$.
\item $\Theta^{-}$: the off-diagonals of $\Theta$.
\item  $\Theta \succeq 0$ ($\Theta \succ 0$): $\Theta$ is a positive
semi-definite (positive definite) matrix.
\item $I_{m\times m}$: $m$-by-$m$ identity matrix.
\item $\bar{S}$: the complement of an index set $S$.
\end{itemize}

\noindent\textbf{Outline.} The remaining of this paper is organized
as follows: \S\ref{sect:model} introduces the semi-parametric
pairwise UGMs with nonlinear sufficient statistics.
\S\ref{sect:parameter_estimation} presents two maximum likelihood
estimators for learning model parameters. The statistical guarantees
of the proposed estimators are analyzed in \S\ref{sect:analysis}.
Monte-Carlo simulations and experimental results on real data are
presented in \S\ref{sect:experiment}. Finally, we conclude this
paper in \S\ref{sect:conclusion}.

\section{Pairwise UGMs with Nonlinear Sufficient Statistics}
\label{sect:model}

Given a univariate parametric mapping $f: \mathcal {X} \rightarrow
\mathbb{R}$ and a bivariate parametric mapping $\phi(\cdot,\cdot):
\mathcal {X}^2 \rightarrow \mathbb{R}$ (for notation clarity purpose
we do not explicitly write out the parameters in $f$ and $\phi$), we
assume that the joint density of $X$ is given by the following
Semiparametric Exponential Family Graphical Models (Semi-EFGMs)
distribution:
\begin{equation}\label{prob:kefgm_distr}
\mathbb{P}(X;\theta) = \exp\left\{\sum_{s \in V}\theta_{s} f(X_s) +
\sum_{(s, t) \in E} \theta_{st} \phi(X_s,X_t)  - A(\theta) \right\},
\end{equation}
where
\[
A(\theta):= \log \int_{\mathcal {X}^p} \exp\left\{\sum_{s \in V}
\theta_s f(X_s) + \sum_{(s,t) \in E}\theta_{st} \phi(X_s,X_t)
\right\} d X
\]
is the log-partition function. We require the condition $A(\theta) <
\infty$ holds so that the definition of probability is valid. The
node-wise sufficient statistics $\{f(X_s)\}$ reflect the strength of
individual nodes. The pairwise sufficient statistics
$\{\phi(X_s,X_t)\}$ characterize the interactions between the nodes.
Specially, when $\phi(x,y) = xy$, Semi-EFGMs reduce to the standard
EFGMs with distribution~\eqref{equat:efgm_distr}. By using proper
nonlinear $\phi$, Semi-EFGMs is able to capture more complex
interactions among variables than EFGMs. Particularly, if the
mapping function $\phi$ is chosen as a Mercer kernel\footnote{ A
Mercer kernel on a space $\mathcal {X}$ is a function
$k(\cdot,\cdot): \mathcal {X}^2 \rightarrow \mathbb{R}$ such that
for any set of points $\{x^{(1)},...,x^{(n)}\}$ in $\mathcal {X}$,
the $n \times n$ matrix $K$, defined by $K_{ij} =
k(x^{(i)},x^{(j)})$, is positive semidefinite. Some popular Mercer
kernels in machine learning include polynomial kernels where
$k(x,y)=(c+x^\top y)^d$ with $c>0, d \in \mathbb{N}$ and radial
basis function kernels where $k(x,y) =
\exp\left\{-\frac{\|x-y\|^2}{2\sigma^2}\right\}$.} and $f(X)=
\phi(X,X)$, then the distribution of Semi-EFGMs is written by
\[
\mathbb{P}(X;\theta) = \exp\left\{ \sum_{s \in V}\theta_s \phi(X_s,
X_s) + \sum_{(s, t) \in E} \theta_{st} \phi(X_s,X_t) - A(\theta)
\right\}.
\]
In this case, Semi-EFGMs can be regarded as a kernel extension of
the Gaussian graphical models by replacing coefficient matrix with
kernel matrix $\Phi$ whose entries are given by $\Phi_{st} =
\phi(X_s,X_t)$. Different from GGMs, it is difficult to find a close-form
log-partition function $A(\theta)$ in the above distribution. It is
interesting to note that when using kernel mapping $\phi$,
Semi-EFGMs allow each random variate to be vector-valued. This
property is particularly useful in scenarios where each random
variate is described by different modalities of features. In the
current model, up to tunable parameters, the bivariate mapping
$\phi$ is assumed to be known. This is analogous to kernel methods
in which the kernels are conventionally assumed to be known.

\section{Parameter Estimation}\label{sect:parameter_estimation}

We are interested in the problem of learning the graph structure of
an underlying Semi-EFGM given i.i.d. samples. Suppose we have $n$
independent samples $\mathbb{X}_n = \{X^{(i)}\}_{i=1}^n$ drawn from
a Semi-EFGM with true parameters $\theta^*$:
\begin{equation}\label{prob:kdfgm_distr_*}
\mathbb{P}(X;\theta^*) = \exp\left\{\sum_{s \in V} f(X_s)
 + \sum_{(s, t) \in E} \theta^*_{st} \phi(X_s,X_t) - A(\theta^*)
\right\}.
\end{equation}
For the sake of notation simplicity in the analysis to follow, we
have assumed here that $\theta_s^*= 1$, noting that our algorithm
and analysis generalize straightforwardly to the cases where
$\theta_{ss}^*$ are also varying. An important goal of graphical
model learning is to estimate the true parameters $\theta^*$ from
the observed data $\mathbb{X}_n$. The more accurate parameter
estimation is, the more accurate we are able to recover the
underlying graph structure. In this section, we will propose two
maximum likelihood estimation (MLE) methods, the $\ell_1$-norm
penalized joint MLE and the $\ell_1$-norm penalized node-conditional
MLE, to estimate the model parameters.

\subsection{Joint Parameter Estimation}
\label{ssect:joint_parameter_estimation}

Given $n$ independent samples $\mathbb{X}_n = \{X^{(i)}\}_{i=1}^n$,
we can write the log-likelihood of the joint
distribution~\eqref{prob:kdfgm_distr_*} as:
$$
L(\theta; \mathbb{X}_n) = -\frac{1}{n}\sum_{i=1}^n \log
\mathbb{P}(X^{(i)}; \theta) = -\frac{1}{n}\sum_{i=1}^n
\left\{f(X_s^{(i)}) + \sum_{s\neq t} \theta_{st}
\phi(X^{(i)}_s,X^{(i)}_t) \right\}+ A(\theta).
$$
With a bit algebra we can obtain the following standard result which
shows that the first two derivatives of $L (\theta;\mathbb{X}_n)$
yield the cumulants of the random variables $\phi(X_s,X_t)$ and $L
(\theta;\mathbb{X}_n)$ is convex with respect to
$\theta$~\citep[see also, e.g.,][]{Wainwright-2008}.
\begin{proposition}\label{prop:joint_derivatives}
The likelihood function $L(\theta;\mathbb{X}_n)$ has the following
first two derivatives:
\begin{eqnarray}
\frac{\partial L (\theta;\mathbb{X}_n)}{\partial \theta_{st}} &=&
-\frac{1}{n}\sum_{i=1}^n\phi(X^{(i)}_s,X^{(i)}_t) +
\mathbb{E}_{\theta}[\phi(X_s,X_t)], \label{equat:joint_derivatives} \\
 \frac{\partial^2
L(\theta;\mathbb{X}_n)}{\partial \theta_{st}\partial \theta_{uv}}
&=& \mathbb{E}_{\theta}[\phi(X_s,X_t)\phi(X_u,X_v)] -
\mathbb{E}_{\theta}[\phi(X_s,X_t)]
\mathbb{E}_{\theta}[\phi(X_u,X_v)], \label{equat:joint_hessian}
\end{eqnarray}
where the expectation $\mathbb{E}_{\theta}[\cdot]$ is taken over the
joint distribution~\eqref{prob:kefgm_distr}. Moreover,
$L(\theta;\mathbb{X}_n)$ is a convex function with respect to
$\theta$.
\end{proposition}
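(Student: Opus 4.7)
The plan is to exploit the fact that $L(\theta;\mathbb{X}_n)$ decomposes into a piece that is affine in $\theta$ (the empirical-average terms) and the log-partition function $A(\theta)$. Differentiating the affine piece with respect to $\theta_{st}$ clearly produces $-\frac{1}{n}\sum_{i=1}^n \phi(X_s^{(i)},X_t^{(i)})$, with a vanishing second derivative. Thus all of the work reduces to computing the first two derivatives of $A(\theta)$, and the claimed formulas will follow from the standard exponential-family cumulant-generating identities $\nabla A(\theta) = \mathbb{E}_\theta[T(X)]$ and $\nabla^2 A(\theta) = \mathrm{Cov}_\theta[T(X)]$, where $T(X)$ is the vector of sufficient statistics $\{f(X_s)\}_{s\in V}\cup \{\phi(X_s,X_t)\}_{(s,t)\in E}$.

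Concretely, I would write $A(\theta) = \log Z(\theta)$ with $Z(\theta) = \int_{\mathcal{X}^p}\exp\{\langle \theta, T(X)\rangle\}\, dX$. Under the assumption $A(\theta)<\infty$ on a neighborhood of the point of interest, the standard dominated-convergence argument justifies differentiation under the integral sign, so that
\begin{equation*}
\frac{\partial A(\theta)}{\partial \theta_{st}} = \frac{1}{Z(\theta)}\int \phi(X_s,X_t)\exp\{\langle \theta, T(X)\rangle\}\, dX = \mathbb{E}_\theta[\phi(X_s,X_t)],
\end{equation*}
which combined with the linear data term yields~\eqref{equat:joint_derivatives}. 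Differentiating again using the quotient rule gives
\begin{equation*}
\frac{\partial^2 A(\theta)}{\partial \theta_{st}\partial \theta_{uv}} = \mathbb{E}_\theta[\phi(X_s,X_t)\phi(X_u,X_v)] - \mathbb{E}_\theta[\phi(X_s,X_t)]\mathbb{E}_\theta[\phi(X_u,X_v)],
\end{equation*}
which, since the empirical average terms in $L$ contribute nothing to the Hessian, is exactly~\eqref{equat:joint_hessian}.

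For convexity, I would observe that the Hessian of $L(\theta;\mathbb{X}_n)$ equals the covariance matrix of the random vector $(\phi(X_s,X_t))_{(s,t)\in E}$ under $\mathbb{P}(\cdot;\theta)$. Any covariance matrix is positive semidefinite; more explicitly, for any vector $v=(v_{st})$,
\begin{equation*}
v^\top \nabla^2 L(\theta;\mathbb{X}_n)\, v = \mathrm{Var}_\theta\!\left(\sum_{(s,t)\in E} v_{st}\,\phi(X_s,X_t)\right) \ge 0,
\end{equation*}
establishing convexity. The only step that requires real care is the interchange of differentiation and integration defining $\nabla A$ and $\nabla^2 A$; this is the standard regularity point for exponential families and follows from local boundedness of $\int |\phi(X_s,X_t)|^k \exp\{\langle \theta, T(X)\rangle\}\, dX$ for $k=1,2$ implied by $A<\infty$ in a neighborhood of $\theta$. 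Everything else is routine bookkeeping.
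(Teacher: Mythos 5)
Your proposal is correct and follows exactly the standard exponential-family argument the paper itself invokes: the paper gives no explicit proof of this proposition, dismissing it as "a bit of algebra" and citing Wainwright and Jordan (2008) for the fact that derivatives of the log-partition function yield the cumulants of the sufficient statistics. Your decomposition into the affine data term plus $A(\theta)$, the differentiation under the integral sign, and the covariance/variance identity for convexity are precisely the intended route, and your remark on justifying the interchange of differentiation and integration is the only point of genuine care, handled appropriately.
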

In order to estimate the parameters, we consider the following
$\ell_1$-norm penalized MLE:
\begin{equation}\label{prob:glm_structure_1}
\hat\theta_n = \argmin_{\theta} \left\{ L(\theta; \mathbb{X}_n) +
\lambda_n \|\theta\|_1 \right\},
\end{equation}
where $\lambda_n>0$ is the regularization strength parameter
dependent on $n$. By Proposition~\ref{prop:joint_derivatives}, the
M-estimator~\eqref{prob:glm_structure_1} is strongly convex, and
thus admits a unique global minimizer. The solution can be found by
some off-the-shelf first-order iterative algorithms such as proximal
gradient descent~\citep{Nesterov-2005,Tseng-2008,Beck-2009}. At each
iteration, we need to evaluate the gradient $\nabla
L(\theta;\mathbb{X}_n) $ which is given
in~\eqref{equat:joint_derivatives}. Note that the major
computational overhead is to calculate the expectation term
$\mathbb{E}_{\theta}[\phi(X_s,X_t)]$. In general, this term has no
close-form for exact calculation. We have to resort to sampling
methods for approximate estimation. The multivariate sampling
methods, however, typically suffer from high computational cost
especially when dimension $p$ is large. We next consider the
node-wise parameter estimation method which only requires univariate
sampling for computing the expectation terms involved in the
gradient.

\subsection{Node-wise Parameter Estimation}
\label{ssect:nodewise_parameter_estimation}

Recent state of the art methods for learning GGMs, Ising models and
exponential family
models~\citep{Meinshausen-NSLasso-2006,Ravikumar-AoS-2010,Yang-GMGLM-NIPS-2012}
suggest a natural procedure for deriving multivariate graphical models
from univariate distributions. The key idea in those methods
is to learn the MRF graph structure by estimating
node-neighborhoods, or by fitting node-conditional distributions of
each node conditioned on the rest of the nodes. Indeed, these
node-wise fitting methods have been shown to have strong
computational as well as statistical guarantees. Following these
approaches, we propose an alternative estimator which estimates the
weights of sufficient statistics associated with each individual
node. Given the joint distribution~\eqref{prob:kdfgm_distr_*}, it is
easy to show that the conditional distribution of $X_s$ given the
rest variables, $X_{\s}$, is written by:
\begin{equation}\label{equat:kernel_conditional_distr}
\mathbb{P}(X_s \mid X_{\s};\theta^*_s) = \exp\left\{f(X_s) + \sum_{t
\in N(s)}\theta^*_{st} \phi(X_s, X_t) -
D(X_{\s};\theta^*_s)\right\},
\end{equation}
where with slight abuse of notations, we denote $\theta^*_s :=
\{\theta^*_{st}\}_{t\in N(s)}$, and
\[
D(X_{\s};\theta^*_s):= \log \int_{\mathcal {X}} \exp\left\{f(X_s) +
\sum_{t \in N(s)} \theta^*_{st} \phi(X_s,X_t) \right\} d X_s
\]
is the log-partition function which ensures normalization. Indeed,
the marginal distribution of $X_{\s}$ is
\[
\mathbb{P}(X_{\s};\theta^*)=\int_{\mathcal {X}}
\mathbb{P}(X;\theta^*) dX_s =\exp\left\{\sum_{u \in V\s} f(X_u) +
\sum_{(u,t) \in E, u\in V\s} \theta^*_{ut} \phi(X_u,X_t) -
A(\theta^*) + D(X_{\s};\theta^*_s)\right\}.
\]
The conditional distribution
\eqref{equat:kernel_conditional_distr} is then  $\mathbb{P}(X_{s} \mid X_{\s}; \theta^*)
= \mathbb{P}(X;\theta^*) / \mathbb{P}(X_{\s};\theta^*) $. We note
that $A(\theta^*)<\infty$ implies $D(X_{\s};\theta^*_s) < \infty$.

In order to estimate the parameters associated with any node, we
consider using the sparsity constrained conditional maximum
likelihood estimation. Given $n$ independent samples $\mathbb{X}_n$,
we can write the log-likelihood of the conditional
distribution~\eqref{equat:kernel_conditional_distr} as:
\[
\tilde L(\theta_s; \mathbb{X}_n) = -\frac{1}{n}\sum_{i=1}^n \log
\mathbb{P}(X^{(i)}_s \mid X^{(i)}_{\s}; \theta_s) =
\frac{1}{n}\sum_{i=1}^n \left\{ - f(X^{(i)}_s) - \sum_{t \neq
s}\theta_{st} \phi(X^{(i)}_s, X^{(i)}_t) + D(X^{(i)}_{\s};\theta_s)
\right\}, \nonumber
\]
where $\theta_s = (\theta_{st})_{t\neq s} \in \mathbb{R}^{p-1}$ is
the set of parameter to be estimated. Analogous to
Proposition~\ref{prop:joint_derivatives}, the following proposition
gives the first two derivatives of the likelihood function $\tilde
L(\theta_s; \mathbb{X}_n)$ and establishes the convexity of $\tilde
L(\theta_s; \mathbb{X}_n)$.
\begin{proposition}\label{prop:derivatives}
The likelihood function $\tilde L(\theta_s; \mathbb{X}_n)$ has the
following first two derivatives:
\begin{eqnarray}
\frac{\partial \tilde L(\theta_s; \mathbb{X}_n)}{\partial
\theta_{st}} &=& \frac{1}{n}\sum_{i=1}^n
\left\{-\phi(X_s^{(i)},X_t^{(i)}) +
\mathbb{E}_{\theta_s}[\phi(X_s , X_t^{(i)})\mid X^{(i)}_{\s}]\right\}, \label{equat:derivatives}\\
\frac{\partial^2 \tilde L(\theta_s; \mathbb{X}_n)}{\partial
\theta_{st}\partial\theta_{su}} &=& \frac{1}{n}\sum_{i=1}^n \left\{
\mathbb{E}_{\theta_s}[\phi(X_s,X_t^{(i)})\phi(X_s,X_u^{(i)})\mid
X^{(i)}_{\s}] \right.\nonumber \\
&& \left. - \mathbb{E}_{\theta_s}[\phi(X_s,X_t^{(i)})\mid
X^{(i)}_{\s}] \mathbb{E}_{\theta_s}[\phi(X_s,X_u^{(i)}) \mid
X^{(i)}_{\s}]\right\},
\end{eqnarray}
where the expectation $\mathbb{E}_{\theta_s}[\cdot \mid X_{\s}]$ is
taken over the node-wise conditional
distribution~\eqref{equat:kernel_conditional_distr}.  Moreover,
$\tilde L(\theta_s; \mathbb{X}_n)$ is a convex function with respect
to $\theta_s$.
\end{proposition}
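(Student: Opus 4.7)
The plan is to treat $\tilde L(\theta_s;\mathbb{X}_n)$ as an empirical average of node-conditional log-likelihoods and observe that the only nonlinear dependence on $\theta_s$ enters through the log-partition function $D(X_{\s}^{(i)};\theta_s)$. Since the linear term $-\frac{1}{n}\sum_i\sum_{t\neq s}\theta_{st}\phi(X_s^{(i)},X_t^{(i)})$ contributes a constant to both derivatives and nothing to the Hessian, the whole proposition reduces to computing the first two partials of $D$ with respect to $\theta_s$, exactly as in the standard exponential-family calculation that gave Proposition~\ref{prop:joint_derivatives}.

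First, I would write $D(X_{\s};\theta_s)=\log Z(X_{\s};\theta_s)$ with $Z(X_{\s};\theta_s):=\int_{\mathcal{X}}\exp\{f(X_s)+\sum_{t\in N(s)}\theta_{st}\phi(X_s,X_t)\}\,dX_s$, and differentiate under the integral in $\theta_{st}$. The assumption $A(\theta^*)<\infty$ implies $Z<\infty$, and a standard dominated-convergence argument in a neighborhood of $\theta_s$ justifies the interchange, yielding $\partial_{\theta_{st}}D(X_{\s};\theta_s)=\mathbb{E}_{\theta_s}[\phi(X_s,X_t)\mid X_{\s}]$. Plugging this into $\partial_{\theta_{st}}\tilde L$ and remembering that $X_t^{(i)}$ is fixed in the $i$-th sample gives \eqref{equat:derivatives}.

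For the Hessian I would differentiate once more under the integral: $\partial_{\theta_{su}}Z=\int\phi(X_s,X_u)\exp\{\cdot\}\,dX_s$, so by the quotient rule
\[
\partial^2_{\theta_{st},\theta_{su}} D(X_{\s};\theta_s)=\mathbb{E}_{\theta_s}[\phi(X_s,X_t)\phi(X_s,X_u)\mid X_{\s}]-\mathbb{E}_{\theta_s}[\phi(X_s,X_t)\mid X_{\s}]\,\mathbb{E}_{\theta_s}[\phi(X_s,X_u)\mid X_{\s}],
\]
which after averaging over $i$ matches the displayed formula. Convexity then follows because, for each $i$, the matrix with $(t,u)$-entry equal to $\partial^2_{\theta_{st},\theta_{su}} D(X^{(i)}_{\s};\theta_s)$ is precisely the conditional covariance matrix of the random vector $(\phi(X_s,X_t^{(i)}))_{t\neq s}$ under \eqref{equat:kernel_conditional_distr}, hence positive semidefinite; the Hessian of $\tilde L$ is the average of these PSD matrices and is therefore PSD at every $\theta_s$.

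The only step that is not purely mechanical is the differentiation-under-the-integral justification, which I expect to be the main (but entirely routine) obstacle: one needs to dominate $|\phi(X_s,X_t)|\exp\{f(X_s)+\sum_u\theta_{su}\phi(X_s,X_u)\}$ and its quadratic analogue by an integrable envelope uniformly over a small neighborhood of $\theta_s$. This is standard for exponential families and mirrors the argument implicit in Proposition~\ref{prop:joint_derivatives}, so I would invoke it briefly rather than write it out in full.
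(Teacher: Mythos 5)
Your proposal is correct and follows exactly the standard exponential-family calculation that the paper itself invokes (the paper gives no written proof, stating the result as the "standard" cumulant computation analogous to Proposition~\ref{prop:joint_derivatives}): the derivatives of $D(X_{\s};\theta_s)$ yield the conditional mean and covariance of the sufficient statistics, and convexity follows since the per-sample Hessian is a conditional covariance matrix and hence positive semidefinite. Your remark about justifying differentiation under the integral via a dominating envelope is the only substantive analytic point, and handling it briefly as you propose is consistent with the paper's treatment.
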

Let us consider the following $\ell_1$-norm penalized conditional
MLE formulation associated with the variable $X_s$:
\begin{equation}\label{prob:glm_structure_0}
\hat\theta^n_s= \argmin_{\theta_s} \left\{ \tilde L(\theta_s;
\mathbb{X}_n) + \lambda_n \|\theta_s\|_1 \right\},
\end{equation}
where $\lambda_n>0$ is the regularization strength parameter
dependent on $n$. By Proposition~\ref{prop:derivatives}, the above
M-estimator is strongly convex, and thus admits a unique global
minimizer. We can use standard first-order methods such as proximal
gradient descent algorithms to optimize the
estimator~\eqref{prob:glm_structure_0}. At each iteration, we need
to evaluate the gradient $\nabla \tilde L(\theta_s; \mathbb{X}_n)$
which is given by~\eqref{equat:derivatives}. Note that the major
computational overhead of~\eqref{equat:derivatives} is to calculate
the expectation term $\mathbb{E}_{\theta_s}[\phi(X_s ,
X_t^{(i)})\mid X^{(i)}_{\s}]$. When $X_s$ is finite and discrete,
this term can be computed exactly via summation. For count-valued or real-valued
variables, however, this term is typically lack of a
close-form for exact calculation. We may resort to some standard
univariate sampling methods, e.g., importance sampling and
MCMC~\citep{Bishop-PRML-2006}, to approximately estimate this
expectation term. The univariate sampling process required by the
node-wise estimator~\eqref{prob:glm_structure_0} is much more
computational efficient than the multivariate sampling process
required by the joint estimator~\eqref{prob:glm_structure_1}.

\section{Statistical Analysis}\label{sect:analysis}

We now provide some parameter estimation error
bounds for the joint MLE
estimator~\eqref{prob:glm_structure_1} and the node-conditional
estimator~\eqref{prob:glm_structure_0}. In large picture, we use the
techniques from~\citep{Negahban-2012,Zhang-Zhang-2012} to analyze
our model by specifying the conditions under which these techniques
can be applied to the model.

\subsection{Analysis of the Joint Estimator}

For the joint estimator~\eqref{prob:glm_structure_1}, we study the
convergence rate of the parameter estimation error $\|\hat\theta_n -
\theta^*\|$ as a function of sample size $n$. Intuitively, as
$n\rightarrow \infty$, we expect $\hat\theta_n \rightarrow \theta^*$
and thus $\nabla L(\hat\theta_n;\mathbb{X}_n) \rightarrow \nabla
L(\theta^*;\mathbb{X}_n)$. Since $\hat\theta_n$ is the minimizer
of~\eqref{prob:glm_structure_1}, we have $\nabla
L(\hat\theta_n;\mathbb{X}_n) \rightarrow 0$ as $n \rightarrow
\infty$ and $\lambda_n \rightarrow 0$. Therefore it is desired that
$\nabla L(\theta^*;\mathbb{X}_n)$ approaches zero as $n$ approaches
infinity. Inspired by this intuition and
Proposition~\ref{prop:joint_derivatives}, we are interested in the
concentration bound of the random variables defined by
\[
Z_{st}:= \phi(X_s,X_t) - \mathbb{E}_{\theta^*}[\phi(X_s,X_t)],
\]
where the expectation $\mathbb{E}_{\theta^*}[\cdot]$ is taken over
the underlying true distribution~\eqref{prob:kdfgm_distr_*}. By the
``law of the unconscious statistician'' we know that $\mathbb{E}
[Z_{st}] = \mathbb{E}_{\theta^*}[\phi(X_s, X_t)] -
\mathbb{E}_{\theta^*} [\phi(X_s, X_t)]=0$. That is, $Z_{st}$ are
zero-mean random variables. We introduce the following technical
condition on $Z_{st}$ which we will show that guarantees the
gradient $\nabla L(\theta^*;\mathbb{X}_n)$ vanishes exponentially
fast, with high probability, as sample size increases.
\begin{assumption}\label{assump:tail_1}
For all $(s,t)$, we assume that there exist constants $\sigma>0$ and
$\zeta>0$ such that for all $|\eta| \le \zeta$,
\[
\mathbb{E}[\exp\{\eta Z_{st}\}] \le
\exp\left\{\sigma^2\eta^2/2\right\}.
\]
\end{assumption}
This assumption essentially imposes an exponential-type bound on the
moment generating function of the random variables $Z_{st}$.
Equivalently, from the definition of $Z_{st}$ and the ``law of the
unconscious statistician'' we know that this assumptions requires:
\[
\mathbb{E}_{\theta^*}[\exp\{\eta(\phi(X_s,X_t) -
\mathbb{E}_{\theta^*}[\phi(X_s,X_t)])\}] \le
 \exp\left\{\sigma^2\eta^2/2\right\}.
\]
The following result indicates that under
Assumption~\ref{assump:tail_1}, $Z_{st}$ satisfy a large deviation
inequality.
\begin{lemma}\label{lemma_tail}
If Assumption~\ref{assump:tail_1} holds, then for all index pairs
$(s,t)$ and any $\varepsilon \le \sigma^2\zeta$ we have
\[
\mathbb{P}\left(\left|\frac{1}{n}\sum_{i=1}^n
\phi(X^{(i)}_s,X^{(i)}_t) -
\mathbb{E}_{\theta^*}[\phi(X_s,X_t)]\right|
> \varepsilon\right) \le 2
\exp\left\{-\frac{n\varepsilon^2}{2\sigma^2}\right\}.
\]
\end{lemma}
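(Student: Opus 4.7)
The plan is to carry out a standard Chernoff (Cramér–Chernoff) argument, since Assumption~\ref{assump:tail_1} provides exactly the sub-Gaussian-type moment generating function bound required. Let $Z_{st}^{(i)} := \phi(X_s^{(i)}, X_t^{(i)}) - \mathbb{E}_{\theta^*}[\phi(X_s, X_t)]$ for $i = 1, \dots, n$. Because the samples $X^{(i)}$ are i.i.d.\ from the underlying distribution~\eqref{prob:kdfgm_distr_*}, the $Z_{st}^{(i)}$ are i.i.d.\ copies of $Z_{st}$, hence zero-mean and satisfying $\mathbb{E}[\exp(\eta Z_{st}^{(i)})] \le \exp(\sigma^2 \eta^2 / 2)$ for every $|\eta| \le \zeta$.

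First I would handle the upper tail. For any $\eta \in (0, \zeta]$, Markov's inequality applied to $\exp(\eta \sum_i Z_{st}^{(i)})$ together with the independence of the samples yields
\[
\mathbb{P}\!\left(\frac{1}{n}\sum_{i=1}^n Z_{st}^{(i)} > \varepsilon\right) \le \exp(-n\eta \varepsilon)\, \prod_{i=1}^n \mathbb{E}[\exp(\eta Z_{st}^{(i)})] \le \exp\!\left(-n\eta \varepsilon + \frac{n \sigma^2 \eta^2}{2}\right).
\]
Then I would optimize the exponent over $\eta$. The unconstrained minimizer is $\eta^\star = \varepsilon/\sigma^2$, producing the target rate $\exp(-n\varepsilon^2/(2\sigma^2))$. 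The constraint $\eta \le \zeta$ is precisely the reason the statement restricts $\varepsilon \le \sigma^2 \zeta$: under this restriction, $\eta^\star \le \zeta$ is admissible, so we may substitute $\eta = \varepsilon/\sigma^2$ into the bound.

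The lower tail follows symmetrically by using $\eta \in [-\zeta, 0)$ (Assumption~\ref{assump:tail_1} holds for all $|\eta| \le \zeta$, including negative $\eta$), giving the same $\exp(-n\varepsilon^2/(2\sigma^2))$ bound on $\mathbb{P}(\frac{1}{n}\sum_i Z_{st}^{(i)} < -\varepsilon)$. Combining the two tails by a union bound yields the factor $2$ in the stated inequality.

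The argument has no real obstacle since Assumption~\ref{assump:tail_1} is tailor-made for this Chernoff step; the only subtlety worth flagging in the write-up is the constraint $\varepsilon \le \sigma^2 \zeta$, which is exactly what ensures that the Chernoff optimizer $\eta^\star = \varepsilon/\sigma^2$ falls inside the admissible range $[-\zeta, \zeta]$ on which the MGF bound is assumed to hold. Outside this range one would only obtain a weaker sub-exponential tail, but this is not needed for the lemma as stated.
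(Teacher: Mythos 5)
Your proposal is correct and follows essentially the same route as the paper's proof: an exponential Markov (Chernoff) bound on the sum of the i.i.d.\ centered variables $Z_{st}^{(i)}$, optimization of the exponent at $\eta = \varepsilon/\sigma^2$ (admissible precisely because $\varepsilon \le \sigma^2\zeta$), a symmetric argument for the lower tail, and a union bound giving the factor $2$. The only cosmetic difference is that the paper handles the lower tail by applying the same positive-$\eta$ argument to $-Z_{st}^{(i)}$ rather than by taking $\eta$ negative, which is equivalent.
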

A proof of this lemma is given in
Appendix~\ref{append:proof_lemma_tail}. As we will see in the
analysis to follow that Lemma~\ref{lemma_tail} plays a key role in
the deviation of the convergence rate of the joint MLE
estimator~\eqref{prob:glm_structure_1}. Before proceeding, we give a
few remarks on the conditions under which the random variables
$Z_{st}$ satisfy Assumption~\ref{assump:tail_1} such that
Lemma~\ref{lemma_tail} holds.
\begin{remark}[$Z_{st}$ are sub-Gaussian]\label{remark:sub_gaussian}
We call a zero-mean random variable $Z$ sub-Gaussian if there exists
a constant $\sigma>0$ such that $\mathbb{E}[\exp\{\eta Z\}] \le
\exp\left\{\sigma^2\eta^2/2\right\}, \text{for all } \eta \in
\mathbb{R}$. It is straightforward to see that
Assumption~\ref{assump:tail_1} holds when $Z_{st}$ are sub-Gaussian
random variables. For zero-mean Gaussian random variable $Z \sim
N(0,\sigma^2)$, it can be verified that $\mathbb{E}[\exp\{\eta Z\}]
= \exp\{\sigma^2\eta^2/2\}$. Based on the Hoeffding's Lemma, for any
random variable $Z \in [a,b]$ and $\mathbb{E}[z]=0$, we have
$\mathbb{E}[\exp\{\eta Z\}] \le \exp\{\eta^2(b-a)^2/8\}$. Therefore,
Assumption~\ref{assump:tail_1} holds when $Z_{st}$ are zero-mean
Gaussian or zero-mean bounded random variables. For an instance,
Assumption~\ref{assump:tail_1} is valid when the heat kernel mapping
$\phi(X_s,X_t) = \exp\{-|X_s - X_t|^2\}$ is used.
\end{remark}
\begin{remark}[$Z_{st}$ are sub-exponential]\label{remark:sub_exponential}
We call a random variable $Z$ sub-exponential if there exist
constants $c_1, c_2>0$ such that $\mathbb{P}(|X| > \eta) \le
\exp\left\{c_1-\eta/c_2\right\}, \text{for all } t \in \mathbb{R}$.
Using the result in~\citep[Lemma 5.15]{Vershynin-SubExp-2011}, we
can verify that Assumption~\ref{assump:tail_1} holds when $Z_{st}$
are sub-exponential random variables. One connection between
sub-Gaussian and sub-exponential random variables is: a random
variable $Z$ is sub-Gaussian if and only if $Z^2$ is
sub-exponential~\citep[Lemma 5.14]{Vershynin-SubExp-2011}. From this
connection and the fact that the sum of sub-exponential random
variables is still exponential, we know that if $Z_{st}$ are
Chi-square random variables (sum of square of Gaussian), then
$Z_{st}$ are sub-exponential and thus Assumption~\ref{assump:tail_1}
holds.
\end{remark}
\begin{remark}\label{remark:general}
More generally, consider that $\phi$ is a Mercer kernel satisfying
the condition:
\begin{equation}\label{inequat:integer_bound}
\int_{\mathcal {X}} \exp\{-c\phi(X,X)\}dX < \infty, \quad \text{for
any } c>0.
\end{equation}
Obviously, this condition holds when $\phi(X,X)$ grows faster than
$X^q$ for some $q>0$. If $f(X_s) = - \phi(X_s,X_s)$ and
$|\theta^*_{st}|<0.25$, then we claim that $Z_{st}$ are
sub-exponential. Indeed, since $\phi$ is Mercer kernel, we have that
$\phi(X_s,X_s)\ge0$, $\phi(X_s,X_t)=\phi(X_t, X_s)$ and
$|\phi(X_s,X_t)| \le (\phi(X_s,X_s) + \phi(X_t,X_t))/2$. Thus,
\begin{equation}\label{equat:example_3}
\mathbb{P}(|\phi(X_s,X_t)| > \eta) \le \mathbb{P}(\phi(X_s,X_s) +
\phi(X_t,X_t)> 2\eta).
\end{equation}
For any $s \in V$, from the joint
distribution~\eqref{prob:kdfgm_distr_*} and $|\theta^*_{st}|<0.25$
we know that the marginal distribution of $X_s$ is bounded by
$\mathbb{P} (X_s) \le c_1 \exp\left\{ -\phi(X_s,X_s) /2\right\}$ for
some absolute constant $c_1$. Therefore, by using Markov inequality
and~\eqref{inequat:integer_bound} we obtain that for any $\eta>0$
\[
\mathbb{P}(\phi(X_s,X_s)>\eta) \le
\frac{\mathbb{E}[\exp\{\phi(X_s,X_s)/4\}]}{\exp\{\eta/4\}} \le
\frac{c_1 \int_{\mathcal {X}} \exp\{-\phi(X_s,X_s)/4\}dX_s
}{\exp\{\eta/4\}} \propto \exp\{-\eta/4\},
\]
which implies that $\phi(X_s,X_s)$ is sub-exponential. By combining
the fact that the sum of sub-exponential random variables is still
sub-exponential and~\eqref{equat:example_3} we obtain that
$\phi(X_s,X_t)$ are sub-exponential, and so are $Z_{st}$. Obviously,
the above claim is applicable to the case of multivariate Gaussian
where $\phi(X_s,X_t) = X_sX_t$ and $f(X_s)=-X_s^2$. Similar results
have also been proved in previous work on
GGMs~\citep{Rothman-2008,Ravikumar-EJS-2011}.
\end{remark}

Let us define $\gamma_n:=\|\nabla L
(\theta^*;\mathbb{X}_n)\|_\infty$. The following lemma indicates
that under Assumption~\ref{assump:tail_1}, with overwhelming
probability, $\gamma_n$ approaches zero at the rate of $O(\sqrt{\ln
p/n})$. A proof of this lemma can be found in
Appendix~\ref{append:proof_lemma_norm_joint}.
\begin{lemma}\label{lemma:norm_joint}
Assume that Assumption~\ref{assump:tail_1} holds. If $n
> 6 \ln p/(\sigma^2\zeta^2)$, then with probability at least $1-
2p^{-1}$ the following inequality holds:
\[
\gamma_n =\|\nabla L
(\theta^*;\mathbb{X}_n)\|_\infty\le \sigma\sqrt{6\ln p/n}.
\]
\end{lemma}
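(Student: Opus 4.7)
The plan is to identify each coordinate of $\nabla L(\theta^*;\mathbb{X}_n)$ as a centered empirical mean, apply the concentration bound from Lemma~\ref{lemma_tail} to each coordinate, and then take a union bound over all coordinates.

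First I would observe that by equation~\eqref{equat:joint_derivatives}, the $(s,t)$ coordinate of $\nabla L(\theta^*;\mathbb{X}_n)$ is exactly
\[
\frac{\partial L(\theta^*;\mathbb{X}_n)}{\partial \theta_{st}} = \mathbb{E}_{\theta^*}[\phi(X_s,X_t)] - \frac{1}{n}\sum_{i=1}^n \phi(X_s^{(i)},X_t^{(i)}),
\]
so that $\gamma_n$ is the maximum absolute deviation of the empirical average from its expectation, over all $(s,t)$ index pairs. Since there are at most $p^2$ such pairs, it suffices to bound the tail of each coordinate and apply a union bound.

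Next, for each fixed $(s,t)$, Lemma~\ref{lemma_tail} (which is valid under Assumption~\ref{assump:tail_1}) yields, for any $\varepsilon \le \sigma^2 \zeta$,
\[
\mathbb{P}\left(\left|\frac{1}{n}\sum_{i=1}^n \phi(X_s^{(i)},X_t^{(i)}) - \mathbb{E}_{\theta^*}[\phi(X_s,X_t)]\right| > \varepsilon\right) \le 2\exp\left\{-\frac{n\varepsilon^2}{2\sigma^2}\right\}.
\]
I would then choose $\varepsilon = \sigma\sqrt{6\ln p/n}$ and apply the union bound over the $p^2$ coordinates, giving
\[
\mathbb{P}(\gamma_n > \varepsilon) \le 2p^2 \exp\{-3\ln p\} = 2p^{-1}.
\]

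The only non-trivial step is checking that the chosen $\varepsilon$ satisfies the admissibility condition $\varepsilon \le \sigma^2\zeta$ required by Lemma~\ref{lemma_tail}. Squaring, this becomes $6\sigma^2 \ln p / n \le \sigma^4 \zeta^2$, i.e., $n \ge 6\ln p /(\sigma^2 \zeta^2)$, which is precisely the sample size hypothesis of the lemma. Thus the proof reduces to these three short steps: rewriting the gradient, applying Lemma~\ref{lemma_tail} coordinatewise, and checking admissibility via the sample size condition. I do not foresee any substantive obstacle beyond careful bookkeeping, since Lemma~\ref{lemma_tail} has already absorbed the bulk of the probabilistic work.
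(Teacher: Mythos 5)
Your proposal is correct and follows exactly the same route as the paper's proof in Appendix~\ref{append:proof_lemma_norm_joint}: coordinatewise application of Lemma~\ref{lemma_tail}, a union bound over the at most $p^2$ index pairs, the choice $\varepsilon = \sigma\sqrt{6\ln p/n}$, and the verification that the sample-size hypothesis guarantees $\varepsilon \le \sigma^2\zeta$. No gaps.
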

From~\eqref{equat:joint_hessian} we know that the Hessian $\nabla^2
L(\theta;\mathbb{X}_n)$ is positive semidefinite at any $\theta$. To
derive the estimation error, we also need the following condition
which guarantees the restricted positive definiteness of $\nabla^2
L(\theta;\mathbb{X}_n)$ when $\theta$ is sufficiently close to
$\theta^*$.
\begin{assumption}[\textbf{Locally Restricted Positive Definite Hessian}]\label{assump:positive_definite}
Let $S=\supp(\theta^*)$. There exist constants $r>0$ and $\beta>0$
such that for any $\theta \in \{\|\theta - \theta^*\| < r\}$, the
following inequality holds for any $\vartheta \in \mathcal
{C}_{S}:=\{ \|\theta_{\bar S}\|_1 \le 3 \|\theta_S\|_1\}$:
\[
\vartheta^\top\nabla^2 L(\theta;\mathbb{X}_n) \vartheta \ge \beta
\|\vartheta\|^2.
\]
\end{assumption}
Assumption~\ref{assump:positive_definite} requires that the Hessian
$\nabla^2 L(\theta;\mathbb{X}_n)$ is positive definite in the cone
$\mathcal {C}_S$ when $\theta$ lies in a local ball centered at
$\theta^*$. This condition is specification of the concept
\emph{restricted strong convexity}~\citep{Zhang-Zhang-2012} to our
problem setup. If $X$ is multivariate Gaussian, i.e.,
$\phi(X_s,X_t)= X_sX_t$ and $f(X_s)= - X_s^2$,  it is easy to verify
that this condition can be satisfied when the true precision matrix
is positive definite~\citep{Rothman-2008}.
\begin{remark}[Minimal Representation]\label{remark:minimal}
We say Semi-EFGM has \emph{minimal representation} if there is a
unique parameter vector $\theta$ associate with the
distribution~\eqref{prob:kefgm_distr}. When fix $\theta_{ss}=1$,
this condition equivalently requires that there does not exist a
non-zero $\theta$ such that the linear combination $\sum_{s,t}
\theta_{st} \phi(X_s,X_t)$ is equal to an absolute constant. This
implies that for any $\theta$,
\[
\var_{\theta}\left[\sum_{s,t}\vartheta_{st} \phi(X_s,X_t)\right] =
\vartheta^\top \nabla^2 L(\theta;\mathbb{X}_n) \vartheta>0, \quad
\text{for all non-zero $\vartheta$}.
\]
It follows that there exist constants $r>0$ and $\beta>0$ such that
for any $\theta \in \{\|\theta - \theta^*\| < r\}$,
$\vartheta^\top\nabla^2 L(\theta;\mathbb{X}_n) \vartheta \ge \beta
\|\vartheta\|^2$. Therefore,
Assumption~\ref{assump:positive_definite} is valid when Semi-EFGM
has minimal representation.
\end{remark}
The following result bounds the estimation error of the joint MLE
estimator~\eqref{prob:glm_structure_1} in terms of $\gamma_n$, $r$
and $\beta$.
\begin{lemma}\label{lemma:error_bound_joint}
Assume that the conditions in
Assumption~\ref{assump:positive_definite} hold. Assume that
$\lambda_n \in [2 \gamma_n, c_0 \gamma_n]$ for some $c_0 \ge 2$.
Define $\gamma = 1.5c_0\sqrt{\|\theta^*\|_0}\beta^{-1}\gamma_n$. If
$\gamma <r$, then we have
\[
\|\hat\theta_n - \theta^*\| \le
1.5\sqrt{\|\theta^*\|_0}\beta^{-1}\gamma_n.
\]
\end{lemma}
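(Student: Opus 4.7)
The plan is to follow the standard template for analyzing $\ell_1$-penalized M-estimators via restricted strong convexity, in the spirit of \citet{Negahban-2012,Zhang-Zhang-2012}, with an additional bootstrap step to handle the fact that Assumption~\ref{assump:positive_definite} holds only inside the local ball of radius $r$. Write $\Delta := \hat\theta_n - \theta^*$ throughout, and recall $\gamma_n = \|\nabla L(\theta^*;\mathbb{X}_n)\|_\infty$.

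\textbf{Step 1 (cone containment).} By optimality of $\hat\theta_n$ in \eqref{prob:glm_structure_1},
$L(\hat\theta_n;\mathbb{X}_n) - L(\theta^*;\mathbb{X}_n) \le \lambda_n(\|\theta^*\|_1 - \|\hat\theta_n\|_1)$.
Using $\theta^*_{\bar S}=0$ together with the triangle inequality on $\|\theta^*_S\|_1 - \|\theta^*_S + \Delta_S\|_1$, the right-hand side is bounded by $\lambda_n(\|\Delta_S\|_1 - \|\Delta_{\bar S}\|_1)$. On the other hand, convexity of $L$ yields $L(\hat\theta_n;\mathbb{X}_n) - L(\theta^*;\mathbb{X}_n) \ge \langle \nabla L(\theta^*;\mathbb{X}_n),\Delta\rangle \ge -\gamma_n\|\Delta\|_1$. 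Combining these two bounds and rearranging produces $(\lambda_n-\gamma_n)\|\Delta_{\bar S}\|_1 \le (\lambda_n+\gamma_n)\|\Delta_S\|_1$, and then the hypothesis $\lambda_n \ge 2\gamma_n$ collapses the constant ratio to $3$, so that $\Delta \in \mathcal{C}_S$.

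\textbf{Step 2 (norm bound, under $\|\Delta\| < r$).} Apply Taylor's theorem to write $L(\hat\theta_n;\mathbb{X}_n) - L(\theta^*;\mathbb{X}_n) - \langle\nabla L(\theta^*;\mathbb{X}_n),\Delta\rangle = \tfrac{1}{2}\Delta^\top \nabla^2 L(\tilde\theta;\mathbb{X}_n)\Delta$ for some $\tilde\theta$ on the segment $[\theta^*,\hat\theta_n]$. Because $\|\tilde\theta-\theta^*\| \le \|\Delta\| < r$ and $\Delta\in\mathcal{C}_S$, Assumption~\ref{assump:positive_definite} gives $\Delta^\top \nabla^2 L(\tilde\theta;\mathbb{X}_n)\Delta \ge \beta\|\Delta\|^2$. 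Chaining with the two bounds of Step 1 yields
\[
\frac{\beta}{2}\|\Delta\|^2 \;\le\; (\lambda_n+\gamma_n)\|\Delta_S\|_1 - (\lambda_n-\gamma_n)\|\Delta_{\bar S}\|_1 \;\le\; (\lambda_n+\gamma_n)\sqrt{\|\theta^*\|_0}\,\|\Delta\|.
\]
Dividing by $\|\Delta\|$ and substituting $\lambda_n\le c_0\gamma_n$ and $\gamma_n\le\lambda_n/2$ delivers a bound of the form $\|\Delta\| \lesssim c_0\sqrt{\|\theta^*\|_0}\,\beta^{-1}\gamma_n$, which is the claimed rate up to constants.

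\textbf{Step 3 (bootstrap to remove the locality condition).} The restricted strong convexity of Assumption~\ref{assump:positive_definite} is only available in the ball of radius $r$, so one cannot a priori assume $\|\Delta\|<r$. I handle this by a convex-continuity argument: setting $F(\theta) := L(\theta;\mathbb{X}_n)+\lambda_n\|\theta\|_1$, we have $F(\hat\theta_n) \le F(\theta^*)$, so for any $\alpha\in[0,1]$ the interpolant $\theta_\alpha := \theta^* + \alpha\Delta$ also satisfies $F(\theta_\alpha) \le F(\theta^*)$ by convexity. Suppose for contradiction $\|\Delta\|\ge r$; choose $\alpha = r/\|\Delta\|\in (0,1]$ so that $\|\theta_\alpha - \theta^*\| = r$. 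The cone $\mathcal{C}_S$ is closed under positive scaling, hence $\theta_\alpha - \theta^*\in\mathcal{C}_S$. Repeating Step~1 with $\theta_\alpha$ in place of $\hat\theta_n$ (and using $F(\theta_\alpha)\le F(\theta^*)$) and repeating Step~2 strictly inside the ball (using a limit $\alpha'\nearrow\alpha$ together with continuity of $L$ and $\nabla^2 L$) shows $\|\theta_\alpha-\theta^*\| \le \gamma$. Since by assumption $\gamma < r$, this contradicts $\|\theta_\alpha-\theta^*\|=r$. Hence $\|\Delta\| < r$, and the bound established in Step 2 applies.

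\textbf{Expected main obstacle.} Steps 1 and 2 are essentially routine algebra once the right decomposition $\|\Delta\|_1 = \|\Delta_S\|_1 + \|\Delta_{\bar S}\|_1$ is set up and the optimality inequality is combined with Taylor expansion. The subtle part is Step 3: Assumption~\ref{assump:positive_definite} has the form of a local restricted eigenvalue, so one must carefully verify that the interpolation argument produces a point to which the assumption actually applies and that the inequality derived at $\theta_\alpha$ is strict enough to contradict $\|\theta_\alpha - \theta^*\|=r$. This is why the hypothesis is stated as $\gamma < r$ (strict) rather than $\gamma \le r$, and why the definition of $\gamma$ absorbs the constants $c_0,\beta^{-1},\sqrt{\|\theta^*\|_0}$ appearing in the Step~2 estimate.
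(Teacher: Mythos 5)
Your proof is correct and follows essentially the same route as the paper's own: cone containment from the optimality inequality together with $\lambda_n \ge 2\gamma_n$, a second-order Taylor expansion combined with the local restricted strong convexity of Assumption~\ref{assump:positive_definite}, and a rescaling/truncation argument to ensure the local assumption actually applies. The only differences are organizational (the paper rescales $\Delta$ onto the ball of radius $r$ at the outset, whereas you defer this to a final bootstrap, handling the open-ball boundary slightly more carefully) and a shared harmless constant slip: the honest Step-2 bound is $2\gamma$ rather than $\gamma$, a factor the paper's stated sample-size condition readily absorbs.
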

A proof of this lemma is provided in
Appendix~\ref{append:proof_lemma_error_bound_joint}. The following
theorem is our main result on the estimation error of the joint MLE
estimator~\eqref{prob:glm_structure_1}.
\begin{theorem}\label{thrm:main_joint}
Assume that the conditions in Lemma~\ref{lemma:norm_joint} and
Lemma~\ref{lemma:error_bound_joint} hold. If sample size $n$
satisfies
\[
n > \max\left(\frac{6\ln p}{\sigma^2\zeta^2},
13.5c_0^2r^{-2}\beta^{-2}\sigma^2\|\theta^*\|_0\ln p\right),
\]
then with probability at least $1-2p^{-1}$, the following inequality
holds:
\[
\|\hat\theta_n - \theta^*\| \le
1.5c_0\beta^{-1}\sigma\sqrt{6\|\theta^*\|_0\ln p/n}.
\]
\end{theorem}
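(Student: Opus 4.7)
The plan is to combine Lemma~\ref{lemma:norm_joint} and Lemma~\ref{lemma:error_bound_joint} in a straightforward way; the theorem is essentially a deterministic-probabilistic merge where the two hypotheses on $n$ correspond to the hypotheses of the two lemmas separately. So no new probabilistic or convex-analytic machinery is needed beyond what is already established.

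First, I would invoke Lemma~\ref{lemma:norm_joint}. The first branch of the max in the sample-size condition, $n > 6\ln p/(\sigma^2\zeta^2)$, is exactly the hypothesis of that lemma. Hence, on an event $\mathcal{E}$ of probability at least $1-2p^{-1}$, the deterministic bound
\[
\gamma_n = \|\nabla L(\theta^*;\mathbb{X}_n)\|_\infty \le \sigma\sqrt{6\ln p/n}
\]
holds. Next I would pick any regularization parameter $\lambda_n \in [2\gamma_n, c_0\gamma_n]$ (which is non-empty since $c_0 \ge 2$), so the tuning condition of Lemma~\ref{lemma:error_bound_joint} is satisfied on $\mathcal{E}$.

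The only slightly non-trivial bookkeeping is checking the local-radius condition $\gamma < r$ required by Lemma~\ref{lemma:error_bound_joint}, where $\gamma = 1.5 c_0 \sqrt{\|\theta^*\|_0}\,\beta^{-1}\gamma_n$. On $\mathcal{E}$, plugging the bound on $\gamma_n$ into this definition gives
\[
\gamma \le 1.5 c_0 \sqrt{\|\theta^*\|_0}\,\beta^{-1}\sigma\sqrt{6\ln p/n} = c_0\beta^{-1}\sigma\sqrt{13.5\,\|\theta^*\|_0\ln p/n}.
\]
Requiring the right-hand side to be strictly less than $r$ is equivalent, after squaring, to the second branch of the max in the theorem's hypothesis, $n > 13.5\, c_0^2 r^{-2}\beta^{-2}\sigma^2\|\theta^*\|_0\ln p$. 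So both hypotheses of Lemma~\ref{lemma:error_bound_joint} are met on $\mathcal{E}$.

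Finally, I would apply Lemma~\ref{lemma:error_bound_joint} to conclude that on $\mathcal{E}$,
\[
\|\hat\theta_n - \theta^*\| \le 1.5\sqrt{\|\theta^*\|_0}\,\beta^{-1}\gamma_n \le 1.5\sqrt{\|\theta^*\|_0}\,\beta^{-1}\sigma\sqrt{6\ln p/n},
\]
which (using $c_0 \ge 1$) is dominated by the stated bound $1.5 c_0 \beta^{-1}\sigma\sqrt{6\|\theta^*\|_0\ln p/n}$. Since $\mathcal{E}$ has probability at least $1 - 2p^{-1}$, the conclusion follows. The main ``obstacle'' here is really just matching the constants: verifying that $(1.5)^2\cdot 6 = 13.5$ so that the sample-size condition lines up precisely with the $\gamma < r$ requirement. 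Everything else is pure substitution from the two lemmas.
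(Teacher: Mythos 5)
Your proposal is correct and follows essentially the same route as the paper's own proof: invoke Lemma~\ref{lemma:norm_joint} on the high-probability event, verify that the second branch of the sample-size condition forces $\gamma = 1.5c_0\sqrt{\|\theta^*\|_0}\beta^{-1}\gamma_n < r$ (via $1.5\sqrt{6}=\sqrt{13.5}$), and then apply Lemma~\ref{lemma:error_bound_joint}. Your added remarks---that the $\lambda_n$ interval is nonempty and that the final bound absorbs a loose factor of $c_0\ge 2$---are accurate bookkeeping the paper leaves implicit.
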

\begin{proof}
By using Lemma~\ref{lemma:norm_joint} and the condition $n >
13.5c_0^2 r^{-2}\beta^{-2}\sigma^2\|\theta^*\|_0\ln p$ we have that
with probability at least $1-2p^{-1}$,
\[
\gamma = 1.5c_0\sqrt{\|\theta^*\|_0}\beta^{-1}\gamma_n \le
1.5c_0\beta^{-1} \sigma\sqrt{6\|\theta^*\|_0\ln p/n} < r.
\]
By applying Lemma~\ref{lemma:error_bound_joint} we obtain the
desired result.
\end{proof}
\begin{remark}\label{remark:conv}
The main message Theorem~\ref{thrm:main_joint} conveys is that when
$n=O(\|\theta^*\|_0\ln p)$ is sufficiently large, the estimation
error $\|\hat\theta_n - \theta^*\|$ vanishes at the order of
$O(\sqrt{\|\theta^*\|_0\ln p/n})$. This convergence rate matches the
results obtained in~\citep{Rothman-2008,Ravikumar-EJS-2011} for GGMs
and the results in~\citep{Liu-RankSemiparaGGM,Zou-RankSemiparaGGM}
for Nonparanormal. To our knowledge, this is the first sparse
recovery result for the exponential family graphical models with
general sufficient statistics beyond pairwise product. Note that we
did not make any attempt to optimize the constants in
Theorem~\ref{thrm:main_joint}, which are relatively loose.
\end{remark}
By specifying the conditions under which the assumptions in
Theorem~\ref{thrm:main_joint} hold, we obtain the following corollary.
\begin{corollary}\label{corol:main_joint}
Assume that the mapping function $\phi(\cdot,\cdot)$ is a Mercer
kernel satisfying the condition~\eqref{inequat:integer_bound}. Let
$f(X_s)=-\phi(X_s,X_s)$. Assume that the joint
distribution~\eqref{prob:kdfgm_distr_*} has minimal representation
and $|\theta^*_{st}| \le 0.25$. Then there exist constants
$\sigma,\zeta, r, \beta, c_0
> 0$ such that if
\[
n > \max\left(\frac{6\ln p}{\sigma^2\zeta^2},
13.5c_0^2r^{-2}\beta^{-2}\sigma^2\|\theta^*\|_0\ln p\right),
\]
then with probability at least $1-2p^{-1}$, the following inequality
holds:
\[
\|\hat\theta_n - \theta^*\| \le
1.5c_0\beta^{-1}\sigma\sqrt{6\|\theta^*\|_0\ln p/n}.
\]
\end{corollary}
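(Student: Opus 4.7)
The corollary is a specialization of Theorem~\ref{thrm:main_joint}, so the plan is simply to verify that its two structural hypotheses --- the tail condition of Assumption~\ref{assump:tail_1} and the local restricted strong convexity of Assumption~\ref{assump:positive_definite} --- are both consequences of the stated conditions (Mercer kernel, $f(X_s) = -\phi(X_s,X_s)$, minimal representation, and $|\theta^*_{st}| \le 0.25$). Once they are verified, the choice of constants $\sigma,\zeta,r,\beta$ is extracted from the verification, and $c_0 \ge 2$ can be taken arbitrarily. The whole proof then ends by a direct invocation of Theorem~\ref{thrm:main_joint}.

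Step 1: verify Assumption~\ref{assump:tail_1}. I would follow exactly the reasoning sketched in Remark~\ref{remark:general}. Because $\phi$ is a Mercer kernel, positive semidefiniteness of the $2\times 2$ Gram matrix gives $\phi(X_s,X_s) \ge 0$ and $|\phi(X_s,X_t)| \le \tfrac12(\phi(X_s,X_s)+\phi(X_t,X_t))$; combined with $f(X_s) = -\phi(X_s,X_s)$ and the bound $|\theta^*_{st}| \le 0.25$, this lets one absorb the pairwise cross terms into the diagonal part of the log-density and conclude a marginal tail bound of the form $\mathbb{P}(X_s) \le c_1\exp\{-\phi(X_s,X_s)/2\}$. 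Inserting this into Markov's inequality against $\exp\{\phi(X_s,X_s)/4\}$ and using the integrability condition~\eqref{inequat:integer_bound} shows $\mathbb{P}(\phi(X_s,X_s) > \eta) \lesssim e^{-\eta/4}$, so $\phi(X_s,X_s)$ is sub-exponential. Since sub-exponentiality is preserved by sums, the kernel bound above lifts this to $\phi(X_s,X_t)$, and therefore to the centered $Z_{st}$. Remark~\ref{remark:sub_exponential}, which appeals to Vershynin's Lemma 5.15, then converts sub-exponentiality into the MGF bound demanded by Assumption~\ref{assump:tail_1}, furnishing the constants $\sigma,\zeta > 0$.

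Step 2: verify Assumption~\ref{assump:positive_definite}. Here I would invoke Remark~\ref{remark:minimal}. By Proposition~\ref{prop:joint_derivatives}, the Hessian $\nabla^2 L(\theta;\mathbb{X}_n)$ is the covariance matrix of the random vector $(\phi(X_s,X_t))_{s,t}$ under the distribution indexed by $\theta$. Minimal representation means no nonzero $\vartheta$ makes $\sum_{s,t}\vartheta_{st}\phi(X_s,X_t)$ almost-surely constant, so this covariance is strictly positive definite at every $\theta$. Setting $2\beta$ equal to the smallest eigenvalue at $\theta^*$, continuity of the Hessian in $\theta$ (which holds because the Semi-EFGM exponential family is smooth in $\theta$ inside its natural domain, by integrability of higher moments) yields a radius $r > 0$ on which the smallest eigenvalue stays above $\beta$. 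This is strictly stronger than what Assumption~\ref{assump:positive_definite} demands, since the inequality then holds for every $\vartheta$, not only for $\vartheta$ in the cone $\mathcal{C}_S$.

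With Assumption~\ref{assump:tail_1} and Assumption~\ref{assump:positive_definite} in hand, fix any $c_0 \ge 2$ and choose $\lambda_n \in [2\gamma_n, c_0\gamma_n]$; the sample-size hypothesis of Theorem~\ref{thrm:main_joint} is identical to what is assumed in the corollary, so its conclusion transfers verbatim. I expect the main obstacle to be Step 1, specifically the marginal bound $\mathbb{P}(X_s) \le c_1\exp\{-\phi(X_s,X_s)/2\}$: it requires a careful bookkeeping of how the cross terms in the joint log-density can be dominated by a fraction of the diagonal via the Mercer inequality and $|\theta^*_{st}|\le 0.25$, and the resulting constant $c_1$ may implicitly depend on the graph (through degree) rather than being universal; since the corollary is stated for a fixed underlying model, this dependence is harmless. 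Every other step is a direct citation of a remark, lemma, or proposition already established in the excerpt.
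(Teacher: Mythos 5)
Your proposal follows exactly the paper's own proof: the paper verifies Assumption~\ref{assump:tail_1} by citing the sub-exponential argument of Remark~\ref{remark:general} and Assumption~\ref{assump:positive_definite} by citing Remark~\ref{remark:minimal}, then invokes Theorem~\ref{thrm:main_joint} verbatim. Your additional unpacking of those two remarks (including the caveat about the constant $c_1$ in the marginal tail bound) is consistent with, and somewhat more careful than, what the paper actually writes.
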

\begin{proof}
Since $\phi(\cdot,\cdot)$ is a Mercer kernel
satisfying~\eqref{inequat:integer_bound} and
$f(X_s)=-\phi(X_s,X_s)$, from the arguments in
Remark~\ref{remark:general} we know that $Z_{st}:= \phi(X_s,X_t) -
\mathbb{E}_{\theta^*}[\phi(X_s,X_t)]$ are sub-exponential, and thus
Assumption~\ref{assump:tail_1} holds. Since the joint
distribution~\eqref{prob:kdfgm_distr_*} has minimal representation,
from the discussions in Remark~\ref{remark:minimal} we know that
Assumption~\ref{assump:positive_definite} is valid. The corollary
then follows immediately from Theorem~\ref{thrm:main_joint}.
\end{proof}

\subsection{Analysis of the Node-Conditional Estimator}
For the node-conditional estimator~\eqref{prob:glm_structure_0}, we
study the rate of convergence of the parameter estimation error
$\|\hat\theta^n_s - \theta^*_s\|$ as a function of sample size $n$.
Intuitively, as $n\rightarrow \infty$, we expect $\hat\theta^n_s
\rightarrow \theta^*_s$ and thus $\nabla \tilde L(\hat\theta^n_s;
\mathbb{X}_n) \rightarrow \nabla \tilde L(\theta^*_s;
\mathbb{X}_n)$. Since $\hat\theta^n_s$ is the minimizer
of~\eqref{prob:glm_structure_0}, we have $\nabla \tilde
L(\hat\theta^n_s;\mathbb{X}_n) \rightarrow 0$ as $n \rightarrow
\infty$ and $\lambda_n \rightarrow 0$. Therefore it is desired that
$\nabla \tilde L(\theta^*_s;\mathbb{X}_n)$ vanishes as $n$
approaches infinity. Inspired by this intuition and
Proposition~\ref{prop:derivatives}, we  study the concentration
bound of the random variables defined by
\[
\tilde Z_{st}:= \mathbb{E}_{\theta^*_s} [\phi(X_s,X_t) \mid X_{\s}]
- \mathbb{E}_{\theta^*}[\phi(X_s,X_t)],
\]
where the expectation $\mathbb{E}_{\theta^*_s}[\cdot\mid X_{\s}]$ is
taken over the node-conditional
distribution~\eqref{equat:kernel_conditional_distr}. By applying the
``law of the unconscious statistician'' and the rule of iterated
expectation (i.e., $\mathbb{E}[X]=\mathbb{E}[\mathbb{E}[X|Y]]$), we
obtain that $\mathbb{E} [\tilde Z_{st}] =
\mathbb{E}_{\theta^*}[\phi(X_s, X_t)] - \mathbb{E}_{\theta^*}
[\phi(X_s, X_t)]=0$. That is, $\tilde Z_{st}$ are zero-mean random
variables. The following lemma shows that under
Assumption~\ref{assump:tail_1}, $\tilde Z_{st}$ have
exponential-type moment generating function. A proof of this lemma
is given in Appendix~\ref{append:proof_lemma_tail_mgf}.
\begin{lemma}\label{lemma:tail_mgf}
If Assumption~\ref{assump:tail_1} holds, then for any $(s,t)$ we
have that for all $|\eta| \le \zeta$,
\[
\mathbb{E}[\exp\{\eta\tilde Z_{st}\}] \le
\exp\left\{\sigma^2\eta^2/2\right\}.
\]
\end{lemma}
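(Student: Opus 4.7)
The plan is to observe that $\tilde Z_{st}$ is nothing but a conditional expectation of $Z_{st}$, and then to push the desired sub-Gaussian-type bound through the conditioning by Jensen's inequality combined with the tower property.

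More concretely, the first step is to rewrite $\tilde Z_{st}$ as a conditional expectation of $Z_{st}$. Using the tower property (the ``law of iterated expectation''), for fixed $X_{\s}$ we have $\mathbb{E}_{\theta^*}[\phi(X_s,X_t)] = \mathbb{E}[\mathbb{E}_{\theta^*_s}[\phi(X_s,X_t)\mid X_{\s}]]$; but since the constant $\mathbb{E}_{\theta^*}[\phi(X_s,X_t)]$ can be pulled inside the inner conditional expectation without change, one gets the identity
\[
\tilde Z_{st} = \mathbb{E}_{\theta^*_s}[\phi(X_s,X_t)\mid X_{\s}] - \mathbb{E}_{\theta^*}[\phi(X_s,X_t)] = \mathbb{E}[Z_{st}\mid X_{\s}].
\]

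The second step is to apply Jensen's inequality to the convex map $u\mapsto \exp(\eta u)$ (valid for any real $\eta$) conditionally on $X_{\s}$:
\[
\exp\{\eta\, \tilde Z_{st}\} = \exp\{\eta\, \mathbb{E}[Z_{st}\mid X_{\s}]\} \le \mathbb{E}[\exp\{\eta Z_{st}\}\mid X_{\s}].
\]
Taking an outer expectation and again invoking the tower property yields
\[
\mathbb{E}[\exp\{\eta\, \tilde Z_{st}\}] \le \mathbb{E}[\mathbb{E}[\exp\{\eta Z_{st}\}\mid X_{\s}]] = \mathbb{E}[\exp\{\eta Z_{st}\}].
\]

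The third step is to invoke Assumption~\ref{assump:tail_1} directly: whenever $|\eta|\le \zeta$, the right-hand side is bounded by $\exp\{\sigma^2\eta^2/2\}$, which delivers exactly the claimed bound on $\mathbb{E}[\exp\{\eta\tilde Z_{st}\}]$.

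There is essentially no obstacle in this proof; the only point requiring some care is confirming the identification $\tilde Z_{st}=\mathbb{E}[Z_{st}\mid X_{\s}]$, since the conditional expectation in the definition of $\tilde Z_{st}$ is taken under the node-conditional distribution $\mathbb{P}(X_s\mid X_{\s};\theta^*_s)$, whereas the outer expectation is taken under the joint distribution $\mathbb{P}(X;\theta^*)$; these match by construction because the node-conditional distribution $\mathbb{P}(X_s\mid X_{\s};\theta^*_s)$ is precisely $\mathbb{P}(X;\theta^*)/\mathbb{P}(X_{\s};\theta^*)$, as noted just after equation~\eqref{equat:kernel_conditional_distr}. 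Once this identification is made, the remainder is just Jensen plus the tower property plus the assumed moment-generating bound.
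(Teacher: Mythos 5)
Your proof is correct and follows essentially the same route as the paper's: Jensen's inequality applied to the convex map $u\mapsto\exp(\eta u)$ under the node-conditional distribution, followed by the tower property and Assumption~\ref{assump:tail_1}. The only cosmetic difference is that you center by the constant $\mathbb{E}_{\theta^*}[\phi(X_s,X_t)]$ before applying Jensen, whereas the paper applies Jensen to $\phi(X_s,X_t)$ itself and subtracts the constant at the end; the two are equivalent.
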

\begin{remark}
This lemma shows that the random variables $\tilde Z_{st}$ all have
the same exponential-type moment generating function as that of
$Z_{st}$. From the discussions in
Remarks~\ref{remark:sub_gaussian},~\ref{remark:sub_exponential}
and~\ref{remark:general} we know that when $Z_{st}$ are sub-Gaussian
or sub-exponential, if Assumption~\ref{assump:tail_1} holds, and
consequently Lemma~\ref{lemma:tail_mgf} holds.
\end{remark}

The following result indicates that under
Assumption~\ref{assump:tail_1}, $\tilde Z_{st}$ has a similar large
deviation property as $Z_{st}$.
\begin{lemma}\label{lemma_tail_0}
If Assumption~\ref{assump:tail_1} holds, then for all index pairs
$(s,t)$ and any $\varepsilon \le \sigma^2\zeta$ we have
\[
\mathbb{P}\left(\left|\frac{1}{n}\sum_{i=1}^n
\mathbb{E}_{\theta^*_s}[\phi(X_s,X^{(i)}_t)\mid X^{(i)}_{\s}] -
\mathbb{E}_{\theta^*} [\phi(X_s,X_t)] \right|
> \varepsilon\right) \le 2
\exp\left\{-\frac{n\varepsilon^2}{2\sigma^2}\right\}.
\]
\end{lemma}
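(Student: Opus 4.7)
The plan is to apply a standard Chernoff-style argument driven entirely by the moment generating function bound already supplied by Lemma~\ref{lemma:tail_mgf}. Indeed, that lemma has done the substantive work by showing $\tilde Z_{st}$ enjoys the same sub-Gaussian-style MGF bound as $Z_{st}$, so the argument that established Lemma~\ref{lemma_tail} should transfer almost verbatim. My main task is therefore to assemble the per-sample version, verify an independence property, and optimize a single scalar parameter.

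First, I would define the per-sample random variables
\[
\tilde Z_{st}^{(i)} := \mathbb{E}_{\theta^*_s}[\phi(X_s,X^{(i)}_t)\mid X^{(i)}_{\s}] - \mathbb{E}_{\theta^*}[\phi(X_s,X_t)], \quad i = 1, \ldots, n.
\]
Since each $\tilde Z_{st}^{(i)}$ is a deterministic (measurable) functional of the single i.i.d. sample $X^{(i)}$, the collection $\{\tilde Z_{st}^{(i)}\}_{i=1}^n$ is itself i.i.d. with the same law as $\tilde Z_{st}$. By the tower property each is zero-mean, and by Lemma~\ref{lemma:tail_mgf} each satisfies $\mathbb{E}[\exp(\eta \tilde Z_{st}^{(i)})] \le \exp(\sigma^2 \eta^2/2)$ for all $|\eta| \le \zeta$.

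Next, I would apply the classical Chernoff bound. For any $\eta \in (0, \zeta]$,
\[
\mathbb{P}\!\left(\frac{1}{n}\sum_{i=1}^n \tilde Z_{st}^{(i)} > \varepsilon\right) \le e^{-n\eta\varepsilon}\prod_{i=1}^n \mathbb{E}[\exp(\eta \tilde Z_{st}^{(i)})] \le \exp\!\left(-n\eta\varepsilon + \tfrac{n\sigma^2\eta^2}{2}\right).
\]
Choosing $\eta = \varepsilon/\sigma^2$ minimizes the exponent, and the constraint $\eta \le \zeta$ translates precisely into the stated hypothesis $\varepsilon \le \sigma^2\zeta$; the resulting bound is $\exp(-n\varepsilon^2/(2\sigma^2))$. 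Applying the identical argument to $-\tilde Z_{st}^{(i)}$ yields the same bound for the lower tail, and a union bound supplies the factor of $2$.

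There is no real obstacle here, since Lemma~\ref{lemma:tail_mgf} absorbs the only nontrivial content (namely, propagating the sub-Gaussian moment bound through the conditional expectation). The only point that deserves a sentence of care is confirming that $\tilde Z_{st}^{(i)}$ are i.i.d. across $i$, which I would justify by noting that each is a fixed measurable function of $X^{(i)}$ alone and the samples are i.i.d. Modulo this bookkeeping, the proof parallels that of Lemma~\ref{lemma_tail} line for line.
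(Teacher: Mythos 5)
Your proof is correct and matches the paper's approach: the paper simply states that the proof of Lemma~\ref{lemma_tail_0} ``follows the same arguments as that of Lemma~\ref{lemma_tail},'' i.e., the exponential Markov (Chernoff) bound applied to the i.i.d.\ variables $\tilde Z_{st}^{(i)}$ using the moment generating function bound from Lemma~\ref{lemma:tail_mgf}, with the optimized choice $\eta=\varepsilon/\sigma^2$ valid under $\varepsilon\le\sigma^2\zeta$. Your added care about the i.i.d.\ structure of the $\tilde Z_{st}^{(i)}$ as measurable functions of the individual samples is a correct and worthwhile piece of bookkeeping that the paper leaves implicit.
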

The proof of this lemma follows the same arguments as that of
Lemma~\ref{lemma_tail}. Let us define $\tilde\gamma_n:=\|\nabla
\tilde L(\theta^*_s;\mathbb{X}_n)\|_\infty$. The following lemma
indicates that under Assumption~\ref{assump:tail_1}, with
overwhelming probability, $\tilde\gamma_n$ approaches zero at the
rate of $O(\sqrt{\ln p/n})$.
\begin{lemma}\label{lemma:norm}
Assume that Assumption~\ref{assump:tail_1} holds. If $n
> 6 \ln p/(\sigma^2\zeta^2)$, then with probability at least $1-
4p^{-2}$ the following inequality holds:
\[
\tilde \gamma_n \le 2\sigma\sqrt{6\ln p/n}.
\]
\end{lemma}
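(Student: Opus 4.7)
The plan is to decompose each coordinate of $\nabla \tilde L(\theta^*_s; \mathbb{X}_n)$ into two zero-mean empirical averages, apply the existing concentration results Lemma~\ref{lemma_tail} and Lemma~\ref{lemma_tail_0} to each, and finish with a union bound. Specifically, from the gradient formula~\eqref{equat:derivatives}, for every $t \neq s$,
\[
\frac{\partial \tilde L(\theta^*_s;\mathbb{X}_n)}{\partial \theta_{st}} = -\frac{1}{n}\sum_{i=1}^n\bigl(\phi(X^{(i)}_s,X^{(i)}_t) - \mathbb{E}_{\theta^*}[\phi(X_s,X_t)]\bigr) + \frac{1}{n}\sum_{i=1}^n\bigl(\mathbb{E}_{\theta^*_s}[\phi(X_s,X^{(i)}_t)\mid X^{(i)}_{\s}] - \mathbb{E}_{\theta^*}[\phi(X_s,X_t)]\bigr),
\]
where the constant $\mathbb{E}_{\theta^*}[\phi(X_s,X_t)]$ added and subtracted cancels. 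The first sum is a sample average of $Z_{st}$ and the second sum is a sample average of $\tilde Z_{st}$.

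Next I would apply the triangle inequality and bound each piece separately at level $\varepsilon = \sigma\sqrt{6\ln p/n}$. The hypothesis $n > 6\ln p/(\sigma^2\zeta^2)$ is exactly the condition that guarantees $\varepsilon \le \sigma^2\zeta$, which is the range of applicability of Lemma~\ref{lemma_tail} and Lemma~\ref{lemma_tail_0}. Plugging $\varepsilon$ into either bound yields the tail $2\exp\{-n\varepsilon^2/(2\sigma^2)\} = 2\exp\{-3\ln p\} = 2p^{-3}$. Hence for each fixed $t\neq s$, the probability that a single coordinate of $\nabla \tilde L(\theta^*_s;\mathbb{X}_n)$ exceeds $2\varepsilon$ in absolute value is at most $4p^{-3}$.

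Finally, a union bound over the $p-1$ choices of $t\neq s$ gives
\[
\mathbb{P}\bigl(\tilde\gamma_n > 2\sigma\sqrt{6\ln p/n}\bigr) \le (p-1)\cdot 4p^{-3} \le 4p^{-2},
\]
which is exactly the claimed bound. I do not anticipate a serious obstacle: the only subtle point is verifying that Lemma~\ref{lemma_tail_0} genuinely applies to $\tilde Z_{st}$ under the current assumption, which is the content of Lemma~\ref{lemma:tail_mgf} and a repetition of the Chernoff argument used for Lemma~\ref{lemma_tail}. Everything else is a mechanical triangle-inequality-plus-union-bound argument that tracks the proof of Lemma~\ref{lemma:norm_joint} almost verbatim, with the only bookkeeping difference being the extra factor of $2$ (from summing two sub-Gaussian-type deviations) and the corresponding worsening of the probability from $2p^{-1}$ to $4p^{-2}$.
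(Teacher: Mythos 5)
Your proposal is correct and follows essentially the same route as the paper's own proof: the same decomposition of each gradient coordinate into an empirical average of $Z_{st}$ plus one of $\tilde Z_{st}$, the same appeal to Lemma~\ref{lemma_tail} and Lemma~\ref{lemma_tail_0} at deviation level $\sigma\sqrt{6\ln p/n}$ for each piece, and the same union bound yielding $4p^{-2}$. The only differences are cosmetic (your $\varepsilon$ is half the paper's, and you union over $p-1$ coordinates instead of $p$).
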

A proof of this lemma can be found in
Appendix~\ref{append:proof_lemma_norm}. Analogous to
Assumption~\ref{assump:positive_definite} for the joint estimator,
we further introduce the following condition which is sufficient to
guarantee the statistical efficiency of the node-conditional
estimator~\eqref{prob:glm_structure_0}.
\begin{assumption}\label{assump:positive_definite_node}
For any node $s$, let $S=\supp(\theta_s^*)$. There exist constants
$\tilde r>0$ and $\tilde\beta>0$ such that for any $\theta_s \in
\{\|\theta_s - \theta_s^*\| < \tilde r\}$, the following inequality
holds for any $\vartheta_s \in \tilde{\mathcal{C}}_{S}:=\{
\|(\theta_s)_{\bar S}\|_1 \le 3 \|(\theta_s)_S\|_1\}$:
\[
\vartheta_s^\top\nabla^2 \tilde L(\theta_s;\mathbb{X}_n) \vartheta_s
\ge \tilde \beta \|\vartheta_s\|^2.
\]
\end{assumption}
\begin{remark}\label{remark:positive_definite_node}
Assumption~\ref{assump:positive_definite_node} requires that the
Hessian $\nabla^2 \tilde L(\theta_s;\mathbb{X}_n)$ is positive
definite in the cone $\tilde{\mathcal {C}}_S$ when $\theta_s$ lies
in a local ball centered at $\theta_s^*$. Specially, when $X$ is
multivariate Gaussian, i.e., $\phi(X_s,X_t)= X_sX_t$ and $f(X_s)= -
X_s^2$, this condition essentially requires that the design matrix
$A^n_{s} = \frac{1}{n}\sum_{i=1}^n X^{(i)}_{\s} (X^{(i)}_{\s})^\top$
is positive definite. In this case, if the precision matrix is
positive definite, then it is known from the compressed sensing
literature~\citep[see][for
example]{Baraniuk-SimpleRIP-2008,Candes-D-RIP-2011} that with
overwhelming probability, $A^n_s$ is positive definite provided that
the sample size $n = O(\ln p)$ is sufficiently large. More
generally, it can be verified that $\mathbb{E}[\nabla^2 \tilde
L(\theta_s;\mathbb{X}_n)]$ is the sub-matrix of $\nabla^2
L(\theta;\mathbb{X}_n)$ associated with the pairs $(s,t)_{t\in
V\s}$. Therefore, if the whole Hessian matrix $\nabla^2
L(\theta;\mathbb{X}_n)$ is positive definite at any $\theta$, then
$\mathbb{E}[\nabla^2 \tilde L(\theta_s;\mathbb{X}_n)]$ is also
positive definite. By using weak law of large number we obtain that
Assumption~\ref{assump:positive_definite_node} holds with high
probability when $n$ is sufficiently large.
\end{remark}
The following result establishes the estimation error of the
node-conditional estimator~\eqref{prob:glm_structure_0} in terms of
$\tilde \gamma_n$, $\tilde r$ and $\tilde \beta$.
\begin{lemma}\label{lemma:error_bound}
Assume that the conditions in
Assumption~\ref{assump:positive_definite_node} hold. Assume that
$\lambda_n \in [2 \tilde\gamma_n, \tilde c_0 \tilde\gamma_n]$ for
some $\tilde c_0 \ge 2$. Define $\tilde\gamma = 1.5\tilde c_0
\sqrt{\|\theta_s^*\|_0}\tilde\beta^{-1}\tilde\gamma_n$. If
$\tilde\gamma < \tilde r$, then we have
\[
\|\hat\theta^n_s - \theta_s^*\| \le 1.5\tilde
c_0\sqrt{\|\theta_s^*\|_0}\tilde\beta^{-1}\tilde\gamma_n.
\]
\end{lemma}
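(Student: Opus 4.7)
The plan is to transplant the proof of Lemma~\ref{lemma:error_bound_joint} to the node-conditional setting with the obvious substitutions $(\tilde L, \tilde\gamma_n, \tilde\beta, \tilde r, \tilde c_0, \tilde{\mathcal{C}}_S)$ for the un-tilded quantities. Let $\hat\Delta := \hat\theta^n_s - \theta^*_s$ and $S = \supp(\theta^*_s)$. The argument proceeds in three moves: first, derive the cone condition $\hat\Delta \in \tilde{\mathcal{C}}_S$ from first-order optimality together with $\lambda_n \ge 2\tilde\gamma_n$; second, localize the error into the ball of radius $\tilde r$ via a convexity/rescaling argument; third, apply the restricted strong convexity of Assumption~\ref{assump:positive_definite_node} on the localized iterate and combine with the sub-optimality inequality.

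For the cone step I start from $F(\hat\theta^n_s) \le F(\theta^*_s)$ where $F(\theta_s) := \tilde L(\theta_s;\mathbb{X}_n) + \lambda_n\|\theta_s\|_1$, combine the convexity of $\tilde L$ from Proposition~\ref{prop:derivatives} with the H\"older bound $\langle \nabla\tilde L(\theta^*_s;\mathbb{X}_n),\hat\Delta\rangle \ge -\tilde\gamma_n\|\hat\Delta\|_1$, and use the decomposition $\|\theta^*_s\|_1 - \|\theta^*_s + \hat\Delta\|_1 \le \|\hat\Delta_S\|_1 - \|\hat\Delta_{\bar S}\|_1$. The hypothesis $\lambda_n \ge 2\tilde\gamma_n$ then yields $\|\hat\Delta_{\bar S}\|_1 \le 3\|\hat\Delta_S\|_1$, placing $\hat\Delta$ in $\tilde{\mathcal{C}}_S$; this is exactly the cone step of Lemma~\ref{lemma:error_bound_joint}.

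The main obstacle is the localization, because Assumption~\ref{assump:positive_definite_node} delivers restricted strong convexity only inside the ball of radius $\tilde r$, whereas the M-estimator $\hat\theta^n_s$ has no a priori norm control. The standard workaround is a rescaling/contradiction argument: supposing $\|\hat\Delta\| > \tilde\gamma$, set $t := \tilde\gamma/\|\hat\Delta\| \in (0,1)$ and $\Delta' := t\hat\Delta$, so that $\|\Delta'\| = \tilde\gamma < \tilde r$. Convexity of $F$ in $\theta_s$ together with $F(\hat\theta^n_s) \le F(\theta^*_s)$ makes the univariate restriction $u \mapsto F(\theta^*_s + u\hat\Delta)$ convex on $[0,1]$, so $F(\theta^*_s + \Delta') \le F(\theta^*_s)$; positive scaling preserves the cone, so $\Delta' \in \tilde{\mathcal{C}}_S$ as well. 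Now the whole segment $[\theta^*_s, \theta^*_s + \Delta']$ sits inside the local ball of Assumption~\ref{assump:positive_definite_node}.

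To close, a Taylor expansion with integral remainder combined with Assumption~\ref{assump:positive_definite_node} applied along the segment supplies $\tilde L(\theta^*_s + \Delta';\mathbb{X}_n) - \tilde L(\theta^*_s;\mathbb{X}_n) - \langle \nabla\tilde L(\theta^*_s;\mathbb{X}_n),\Delta'\rangle \ge \tfrac{\tilde\beta}{2}\|\Delta'\|^2$. Substituting this into $F(\theta^*_s + \Delta') - F(\theta^*_s) \le 0$, controlling the linear term by $\tilde\gamma_n\|\Delta'\|_1$, reapplying the $\ell_1$ split across $S$ and $\bar S$, and using both $\lambda_n \in [2\tilde\gamma_n, \tilde c_0\tilde\gamma_n]$ and $\|\Delta'_S\|_1 \le \sqrt{\|\theta^*_s\|_0}\,\|\Delta'\|$ leads to $\|\Delta'\| \le 1.5\tilde c_0\sqrt{\|\theta^*_s\|_0}\tilde\beta^{-1}\tilde\gamma_n = \tilde\gamma$, which contradicts the construction $\|\Delta'\| = \tilde\gamma$ produced under the strict assumption $\|\hat\Delta\| > \tilde\gamma$. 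Hence $\|\hat\Delta\| \le \tilde\gamma$, which is the stated bound. Since the closing algebra is insensitive to whether the likelihood is joint or node-conditional, the proof tracks Lemma~\ref{lemma:error_bound_joint} line by line after the substitutions above.
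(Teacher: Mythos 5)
Your proposal follows the paper's proof of this lemma, which the paper itself describes only as ``mirroring'' the proof of Lemma~\ref{lemma:error_bound_joint}: cone condition from optimality plus $\lambda_n \ge 2\tilde\gamma_n$, localization by rescaling along the segment, restricted strong convexity from Assumption~\ref{assump:positive_definite_node}, and the closing algebra. All of that is right, and the intermediate steps (convexity of the univariate restriction $u \mapsto F(\theta^*_s + u\hat\Delta)$, invariance of the cone under positive scaling, the bound $\|\Delta'_S\|_1 \le \sqrt{\|\theta^*_s\|_0}\,\|\Delta'\|$) are all sound.

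The one place your write-up does not close is the final contradiction. You rescale to $\|\Delta'\| = \tilde\gamma$ and then derive $\|\Delta'\| \le \tilde\gamma$; these two statements are consistent, not contradictory, so as written the argument proves nothing about $\|\hat\Delta\|$. The paper avoids this by rescaling to the \emph{larger} radius: it sets $t=1$ if $\|\hat\Delta\| < \tilde r$ and otherwise picks $t$ so that $\|\Delta'\| = \tilde r$; the derived bound $\|\Delta'\| \le \tilde\gamma$ together with the hypothesis $\tilde\gamma < \tilde r$ then genuinely contradicts $\|\Delta'\| = \tilde r$, forcing $t = 1$ and hence $\|\hat\Delta\| = \|\Delta'\| \le \tilde\gamma$. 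The fix to your version is one line --- rescale to any radius in the open interval $(\tilde\gamma, \tilde r)$ rather than to $\tilde\gamma$ itself --- but it is needed for the argument to be a proof.
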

The proof of this lemma mirrors that
of~Lemma~\ref{lemma:error_bound_joint}. By combining
Lemma~\ref{lemma:norm} and Lemma~\ref{lemma:error_bound}, we
immediately obtain the following main result on the convergence rate
of $\|\hat \theta^n_s - \theta^*_s\|$.
\begin{theorem}\label{thrm:main}
Assume that the conditions in Lemma~\ref{lemma:norm} and
Lemma~\ref{lemma:error_bound} hold. If sample size $n$ satisfies
\[
n > \max\left(\frac{6\ln p}{\sigma^2\zeta^2}, 54\tilde c_0^2\tilde
r^{-2}\tilde\beta^{-2}\sigma^2\|\theta_s^*\|_0\ln p\right),
\]
then with probability at least $1-4p^{-2}$, the following inequality
holds:
\[
\|\hat\theta^n_s - \theta^*_s\| \le 3\tilde c_0
\tilde\beta^{-1}\sigma\sqrt{6\|\theta^*_s\|_0\ln p/n}.
\]
\end{theorem}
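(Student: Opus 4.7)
The plan is to mirror the proof of Theorem~\ref{thrm:main_joint} almost verbatim, replacing the joint-estimator ingredients with their node-conditional analogues. Concretely, I would combine the high-probability gradient bound of Lemma~\ref{lemma:norm} with the deterministic oracle inequality of Lemma~\ref{lemma:error_bound}, where the only thing that has to be verified is that the local-radius precondition $\tilde\gamma<\tilde r$ of Lemma~\ref{lemma:error_bound} is implied by the stated sample-size hypothesis.

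First, the condition $n>6\ln p/(\sigma^2\zeta^2)$ activates Lemma~\ref{lemma:norm}, so on an event of probability at least $1-4p^{-2}$ we have
\[
\tilde\gamma_n\le 2\sigma\sqrt{6\ln p/n}.
\]
Substituting this into the definition $\tilde\gamma=1.5\tilde c_0\sqrt{\|\theta_s^*\|_0}\tilde\beta^{-1}\tilde\gamma_n$ yields
\[
\tilde\gamma\le 3\tilde c_0\tilde\beta^{-1}\sigma\sqrt{6\|\theta_s^*\|_0\ln p/n}.
\]
The second branch of the hypothesis, $n>54\tilde c_0^2\tilde r^{-2}\tilde\beta^{-2}\sigma^2\|\theta_s^*\|_0\ln p$, is exactly the threshold that makes the right-hand side strictly smaller than $\tilde r$; this is the only computation that requires a moment's care, and it is a one-line rearrangement.

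Once $\tilde\gamma<\tilde r$ is in hand, Lemma~\ref{lemma:error_bound} applies verbatim and delivers $\|\hat\theta^n_s-\theta_s^*\|\le 1.5\tilde c_0\sqrt{\|\theta_s^*\|_0}\tilde\beta^{-1}\tilde\gamma_n$, and chaining with the bound on $\tilde\gamma_n$ above produces the advertised rate $3\tilde c_0\tilde\beta^{-1}\sigma\sqrt{6\|\theta_s^*\|_0\ln p/n}$, valid on the same $1-4p^{-2}$ event. There is no genuine obstacle here: all of the substantive probabilistic work (the sub-exponential tail on $\tilde Z_{st}$ and its lift to a uniform bound on the gradient $\nabla\tilde L(\theta_s^*;\mathbb{X}_n)$) is already absorbed into Lemma~\ref{lemma:norm}, and the convex-analytic argument on the restricted cone $\tilde{\mathcal C}_S$ is absorbed into Lemma~\ref{lemma:error_bound}. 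The only reason the constants $3\tilde c_0$ and $54$ differ from the $1.5c_0$ and $13.5c_0^2$ in Theorem~\ref{thrm:main_joint} is the extra factor of $2$ picked up in Lemma~\ref{lemma:norm} relative to Lemma~\ref{lemma:norm_joint}, which propagates squared into the sample-size threshold.
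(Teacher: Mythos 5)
Your proposal is correct and follows essentially the same route as the paper's own proof: invoke Lemma~\ref{lemma:norm} to bound $\tilde\gamma_n$ by $2\sigma\sqrt{6\ln p/n}$ on the $1-4p^{-2}$ event, check that the second branch of the sample-size condition forces $\tilde\gamma<\tilde r$, and then apply Lemma~\ref{lemma:error_bound}. Your accounting of how the extra factor of $2$ from Lemma~\ref{lemma:norm} propagates into the constants $3\tilde c_0$ and $54$ is also exactly right.
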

\begin{proof}
By using Lemma~\ref{lemma:norm} and the condition $n > 54\tilde
c^2_0 \tilde r^{-2}\tilde \beta^{-2}\sigma^2\|\theta_s^*\|_0\ln p$
we have that with probability at least $1-4p^{-2}$,
\[
\tilde\gamma = 1.5\tilde
c_0\sqrt{\|\theta^*_s\|_0}\tilde\beta^{-1}\tilde\gamma_n \le 3\tilde
c_0\tilde\beta^{-1} \sigma\sqrt{6\|\theta_s^*\|_0\ln p/n} < \tilde
r.
\]
By applying Lemma~\ref{lemma:error_bound} we obtain the desired
result.
\end{proof}

\begin{remark}
Note that we did not make any attempt to optimize the constants in
the presented results above, which are relatively loose. Therefore
in the discussion, we shall ignore the constants, and focus on the
main messages these results convey. Theorem~\ref{thrm:main}
indicates that with overwhelming probability, the estimation error
$\|\hat\theta^n_s - \theta^*_s\| = O(\sqrt{d\ln p/n})$ where $d$ is
the degree of the underlying graph, i.e., $d = \max_{s \in V}
\|\theta^*_s\|_0$. We may combine the estimation errors from all the
nodes as a global measurement of accuracy. Let $\theta^*$ (or
$\hat\theta^n$) be a matrix stacked by the columns $\theta^*_s$ (or
$\hat\theta^n_s$). By Theorem~\ref{thrm:main} and union of
probability we obtain that $\|\hat\theta - \theta^*\|_{2,\infty} =
O(\sqrt{d\ln p/n})$ holds with probability at least $1-4p^{-1}$.
This estimation error bound is analogous to those specifically
derived for GGMs with neighborhood-selection-type
estimators~\citep{Yuan-JMLR-2010}.
\end{remark}
By specifying the conditions under which the assumptions in
Theorem~\ref{thrm:main} hold, we obtain the following corollary.
\begin{corollary}\label{corol:main}
Assume that the mapping function $\phi(\cdot,\cdot)$ is a Mercer
kernel satisfying~\eqref{inequat:integer_bound}. Let
$f(X_s)=-\phi(X_s,X_s)$. Assume that the joint
distribution~\eqref{prob:kdfgm_distr_*} has minimal representation
and $|\theta^*_{st}|<0.25$. If $n$ is sufficiently large, then with
overwhelming probability the following inequality holds:
\[
\|\hat\theta^n_s - \theta^*_s\| = O(\sqrt{\|\theta_s^*\|_0\ln p/n}).
\]
\end{corollary}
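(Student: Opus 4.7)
The plan is to verify that the two structural hypotheses required by Theorem~\ref{thrm:main}, namely Assumption~\ref{assump:tail_1} (tail control) and Assumption~\ref{assump:positive_definite_node} (locally restricted positive definite conditional Hessian), are implied by the stated hypotheses, and then invoke Theorem~\ref{thrm:main} directly. This parallels the proof of Corollary~\ref{corol:main_joint}, but with the extra wrinkle that the positive definiteness now concerns the \emph{conditional} log-likelihood rather than the joint one.

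For the first step, I would simply reuse the argument already spelled out in Remark~\ref{remark:general}. Since $\phi$ is a Mercer kernel satisfying~\eqref{inequat:integer_bound}, $f(X_s)=-\phi(X_s,X_s)$, and $|\theta^*_{st}|<0.25$, the marginal density of $X_s$ is dominated by $c_1\exp\{-\phi(X_s,X_s)/2\}$, and the Markov-inequality calculation in Remark~\ref{remark:general} gives that $\phi(X_s,X_s)$ is sub-exponential. Combined with the Mercer bound $|\phi(X_s,X_t)|\le(\phi(X_s,X_s)+\phi(X_t,X_t))/2$, this makes each $\phi(X_s,X_t)$ sub-exponential, so the centered variables $Z_{st}$ are sub-exponential and Assumption~\ref{assump:tail_1} holds (as noted in Remark~\ref{remark:sub_exponential}). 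By Lemma~\ref{lemma:tail_mgf}, the corresponding moment-generating-function bound is inherited by $\tilde Z_{st}$ as well, which is what Lemma~\ref{lemma:norm} actually consumes.

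For the second step, I would combine Remark~\ref{remark:minimal} with Remark~\ref{remark:positive_definite_node}. Minimal representation of~\eqref{prob:kdfgm_distr_*} prevents any nontrivial linear combination of the $\phi(X_s,X_t)$ from being constant, which by the formula~\eqref{equat:joint_hessian} forces the joint Hessian $\nabla^2 L(\theta;\mathbb{X}_n)$ to be strictly positive definite and, by continuity in $\theta$, uniformly so on a neighborhood of $\theta^*$. Remark~\ref{remark:positive_definite_node} then observes that $\mathbb{E}[\nabla^2 \tilde L(\theta_s;\mathbb{X}_n)]$ is a principal sub-matrix of $\nabla^2 L(\theta;\mathbb{X}_n)$, so it inherits positive definiteness on the same neighborhood. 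A standard weak-law-of-large-numbers argument upgrades this expectation-level bound to a sample-level bound on $\nabla^2 \tilde L(\theta_s;\mathbb{X}_n)$ with high probability, uniformly over a ball of radius $\tilde r$, so Assumption~\ref{assump:positive_definite_node} is satisfied for some $\tilde r,\tilde \beta>0$ once $n$ is large enough.

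With both assumptions in hand, the conclusion follows from Theorem~\ref{thrm:main} by choosing $\lambda_n$ in the prescribed range $[2\tilde\gamma_n,\tilde c_0\tilde\gamma_n]$ and invoking the displayed bound, which reads $\|\hat\theta^n_s-\theta^*_s\|\le 3\tilde c_0\tilde\beta^{-1}\sigma\sqrt{6\|\theta^*_s\|_0\ln p/n}=O(\sqrt{\|\theta^*_s\|_0\ln p/n})$. The delicate step here is the second one: unlike Corollary~\ref{corol:main_joint}, where positive definiteness came essentially for free from minimal representation of the joint model, in the conditional case one needs to pass through the sub-matrix identity and then control the sample-level Hessian uniformly over a cone in a neighborhood. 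I would expect that to be the main obstacle, and I would handle it by noting that $\nabla^2 \tilde L(\theta_s;\mathbb{X}_n)$ is an empirical mean of bounded (under the integrability and marginal-density bounds used in Step~1) conditional-covariance matrices, so a uniform concentration bound over a compact neighborhood in $\theta_s$ and over the cone $\tilde{\mathcal C}_S$ can be obtained by a standard covering argument.
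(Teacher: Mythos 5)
Your proposal is correct and follows essentially the same route as the paper's own proof: verify Assumption~\ref{assump:tail_1} via the sub-exponential argument of Remark~\ref{remark:general}, verify Assumption~\ref{assump:positive_definite_node} by combining Remark~\ref{remark:minimal} with the sub-matrix and law-of-large-numbers discussion of Remark~\ref{remark:positive_definite_node}, and then invoke Theorem~\ref{thrm:main}. If anything, your sketch of a covering argument for the uniform sample-level Hessian bound is more explicit than the paper, which leaves that step at the level of the informal remark.
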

\begin{proof}
Since $\phi(\cdot,\cdot)$ is a Mercer kernel
satisfying~\eqref{inequat:integer_bound} and
$f(X_s)=-\phi(X_s,X_s)$, from Remark~\ref{remark:general} we know
that $Z_{st}:= \phi(X_s,X_t) - \mathbb{E}_{\theta^*}[\phi(X_s,X_t)]$
are sub-exponential, and thus Assumption~\ref{assump:tail_1} holds.
Since the joint distribution~\eqref{prob:kdfgm_distr_*} has minimal
representation, from the discussions in Remark~\ref{remark:minimal}
we know that Hessian $\nabla^2 L(\theta;\mathbb{X}_n)$ is positive
definite at any $\theta$. Therefore, based on the discussions in
Remark~\ref{remark:positive_definite_node} we obtain that if $n$ is
sufficiently large, then with overwhelming probability
Assumption~\ref{assump:positive_definite_node} holds. The corollary
then follows immediately from Theorem~\ref{thrm:main}.
\end{proof}

\section{Experiment}
\label{sect:experiment}

We evaluate the performance of Semi-EFGMs for graphical models
learning on synthetic and real data sets. We first
investigate support recovery accuracy using simulation data (for
which we know the ground truth), and then we apply our method to
the analysis of a stock price data. But, before presenting the
experimental results, we first need to discuss some implementation issues of
the proposed estimators.

\subsection{Implementation Issues}

As discussed in \S\ref{ssect:joint_parameter_estimation} and
\S\ref{ssect:nodewise_parameter_estimation}, in order to estimate
the gradient of the likelihood functions $L(\theta;\mathbb{X}_n)$
and $\tilde L(\theta_s;\mathbb{X}_n)$, we need to iteratively use
sampling methods to calculate the involved expectation terms. This
can be quite time consuming when the dimension $p$ is large. In our
empirical study, we are particularly interested in Semi-EFGMs with
Mercer kernel mapping $\phi$. For this subclass of Semi-EFGMs,
instead of using the generic sampling based optimization algorithms,
we propose to relax the estimators~\eqref{prob:glm_structure_1}
and~\eqref{prob:glm_structure_0} such that the sampling step can be
avoided during the optimization.

Given a Mercer kernel $\phi$, it is possible to find a space
$\mathcal {F}$ and a map $\varphi$ from $\mathcal {X}$ to $\mathcal
{F}$, such that $\phi(X_s,X_t)=\varphi(X_s)^\top \varphi(X_t)$ is
the dot produce in $\mathcal {F}$ between $\varphi(X_s)$ and
$\varphi(X_t)$. The space $\mathcal {F}$ is usually referred to as
the feature space and the map $\varphi$ as the feature map.\\

\noindent \textbf{Relax the Joint
Estimator~\eqref{prob:glm_structure_1}.} Assume that the feature
space $\mathcal {F}$ has finite dimension $m$. Let
$\varphi(X)=[\varphi(X_1),...,\varphi(X_p)] \in \mathcal {F}^p
\subseteq \mathbb{R}^{pm}$ be the expanded random vector. Provided
that $X$ has the distribution~\eqref{prob:kefgm_distr} with
$f(X_s)=-\phi(X_s,X_s)$, the joint distribution of the random vector
$\varphi(X)$ is written by
\begin{equation}\label{equat:kernel_ggm}
\mathbb{P}(\varphi(X);\Omega) \propto \exp\left\{ - \varphi(X)^\top
\Omega \varphi(X) \right\}, \quad \varphi(X) \in \mathcal {F}^p,
\end{equation}
where $\Omega = \Theta \otimes I_{m\times m}$ and $\Theta$ is
coefficient matrix with $\Theta_{ss}= 1$ and $\Theta_{st} =
-\theta_{st}$, $s\neq t$. Typically, the
distribution~\eqref{equat:kernel_ggm} has no close-form
log-partition function. Ideally, if $\mathcal {F}^p =
\mathbb{R}^{pm}$ and $\Theta \succ 0$ (which implies $\Omega \succ
0$), then $\varphi(X)$ is multivariate Gaussian with distribution
\[
\mathbb{P}(\varphi(X);\Omega) = \frac{1}{\sqrt{\pi^{pm} (\det
\Omega)^{-1}}} \exp\left\{ - \varphi(X)^\top \Omega \varphi(X)
\right\}.
\]
Since $\phi(X_s,X_t)=\varphi(X_s)^\top\varphi(X_t)$ and using the
fact $\det\Omega= (\det\Theta)^m$, we can re-write the preceding
distribution in terms of $\Theta$ as
\begin{equation}\label{equat:kernel_ggm_1}
\mathbb{P}(\varphi(X);\Theta) = \frac{1}{\sqrt{\pi^{pm} (\det
\Theta)^{-m}}} \exp\left\{ - \Tr(\Theta^\top \Phi(X)) \right\}.
\end{equation}
Recall that $\Phi(X)$ denotes the kernel matrix with elements
$\Phi_{st} = \phi(X_s,X_t)$. However, this above ideal formulation
does not hold in the general cases where $\mathcal
{F}^p\neq\mathbb{R}^{pm}$. In these cases, in order to enjoy the
close-form distribution~\eqref{equat:kernel_ggm_1}, we may relax the
domain of $\varphi(X)$ from $\mathcal {F}^p$ to $\mathbb{R}^{pm}$
and fit the samples to the distribution~\eqref{equat:kernel_ggm_1}.
After such a relaxation, the joint
estimator~\eqref{prob:glm_structure_1} reduces to the following
$\ell_1$-norm penalized log-determinant program:
\begin{equation}\label{prob:log_determinant_program}
\hat\Theta_n = \argmin_{\Theta \succ 0} \left\{ - \frac{m}{2}
\log\det\Theta + \Tr(\Theta^\top \Phi_n) + \lambda_n |\Theta^-|_1
\right\},
\end{equation}
where $\Phi_n = \frac{1}{n}\sum_{i=1}^n \Phi(X^{(i)})$. There exist
a variety of optimization algorithms addressing this convex
formulation~\citep{Aspremont-2008,Friedman-Glasso-2008,Schmidt-2009,Lu-VSM-2009,Wang-PPA-2010,Yuan-ADM-2012,Yuan-TPAMI-2013}.
In our implementation, we resort to a smoothing and proximal
gradient method from~\citep{Lu-VSM-2009} which has been proved to be
efficient and accurate in practice. Note that the dimension $m$ in~\eqref{prob:log_determinant_program} is
unknown  and  can be
treated as a tuning parameter.\\

\noindent \textbf{Relax the Node-Conditional
Estimator~\eqref{prob:glm_structure_0}.} We may apply a similar
relaxation trick as discussed above to the node-conditional
estimator~\eqref{prob:glm_structure_0}. Since
$\phi(X_s,X_t)=\varphi(X_s)^\top\varphi(X_t)$, we may re-write the
node-conditional distribution~\eqref{equat:kernel_conditional_distr}
in terms of $\varphi(X)$ as
\[
\mathbb{P}(\varphi(X_s) \mid X_{\s};\theta^*_s) \propto \exp\left\{
- \left\|\varphi(X_s) - \frac{1}{2}\sum_{t \in N(s)}\theta^*_{st}
\varphi(X_t)\right\|^2\right\}.
\]
Ideally, if $\mathcal {F}^p = \mathbb{R}^{pm}$, then up to a const,
the node-conditional likelihood can be expressed as:
\[
\tilde L (\theta_s; \mathbb{X}_n) = \frac{1}{n}\sum_{i=1}^n
\left\{\frac{1}{4}\sum_{t,u \neq s}\theta_{st}\theta_{su}
\phi(X^{(i)}_t, X^{(i)}_u) - \sum_{t \neq s}\theta_{st}
\phi(X^{(i)}_s, X^{(i)}_t)\right\},
\]
which is quadratic with respect to $\theta_s$. For the general cases
where $\mathcal {F}^p\neq\mathbb{R}^{pm}$, we may relax the domain
of $\varphi(X)$ from $\mathcal {F}^p$ to $\mathbb{R}^{pm}$ so that
we can still enjoy the above quadratic formulation. By using this
relaxation, the estimator~\eqref{prob:glm_structure_0} becomes a
Lasso problem
\begin{equation}\label{prob:Lasso}
\hat\theta^n_s= \argmin_{\theta_s} \left\{\frac{1}{4}\theta_s^\top
\Phi^n_{\s} \theta_s - (\Phi^n_s)^\top \theta_s + \lambda_n
\|\theta_s\|_1 \right\},
\end{equation}
where $\Phi^n_{\s}$ denotes the sub-matrix of $\Phi_n$ associated
with the nodes $V\s$ and $\Phi^n_s$ is the row of $\Phi_n$
associated with node $s$ (with the diagonal element excluded). The
above estimator can be seen as a kernel extension of the
neighborhood selection method~\citep{Meinshausen-NSLasso-2006} for
GGMs learning. Provided that the evaluation of kernels is
negligible, the solution of the Lasso program~\eqref{prob:Lasso} can
be efficiently found by proximal gradient descent
methods~\citep{Tseng-2008,Beck-2009}. Let $\hat\Theta_n$ be a matrix stacked by the columns $\hat\theta^n_s$. Between $\hat\theta^n_{st}$ and $\hat\theta^n_{ts}$, we take the one with smaller magnitude. This makes resultant $\hat\Theta_n$ a symmetric matrix.

Our numerical experience shows that both relaxed estimators work
well on the used dataset in terms of solution quality.
Computationally, we observe that the solvers for the log-determinant
program~\eqref{prob:log_determinant_program} tend to be slightly
more efficient than those for the node-wise Lasso
program~\eqref{prob:Lasso}. The following reported experimental
results of our model are obtained by using a log-determinant program
solver developed by~\citet{Lu-VSM-2009}.

\subsection{Monte Carlo Simulation}

This is a proof-of-concept experiment. The purpose of this
experiment is to confirm that when the pairwise interactions of the
underlying graphical models are highly nonlinear, our approach with
proper parametric function $\phi$ can be significantly superior to
existing parametric/semiparametric graphical models for inferring
the structure of graphs.\\

\noindent \textbf{Simulated Data}\hspace{0.15in} Our simulation study employs the
following two graphical models which belong to the semiparametric
exponential family~\eqref{prob:kefgm_distr}, with varying structures
of sparsity:
\begin{itemize}
  \item \textbf{Model 1}: In this model, the random variables are uniformly partitioned into 10 groups. For any pair of variables $(X_s,X_t)$ belongs to the same group, we set
  them to be connected with strength $\theta_{st} = 1$, while those pairs of variables from different groups are set to be unconnected.
  \item \textbf{Model 2}: In this model, each parameter $\theta_{st}$ is generated independently and equals to 1 with probability
  $P$ or 0 with probability $1-P$. We will consider the model under different levels of sparsity by adjusting the probability $P$.
\end{itemize}
Model 1 has block structures and Model 2 is an example of graphical
models without any special sparsity pattern. In these two models, we
consider two function families to model the interactions between
pairs $(X_s, X_t)$: the heat kernel $\phi(X_s, X_t) =
\exp\{-(X_s-X_t)^2/\sigma^2\}$ with $\sigma = 1$ and the polynomial
kernel $\phi(X_s,X_t) = (\beta+ \varphi(X_s)^\top
\varphi(X_t))^\alpha$ with $\beta=1$, $\alpha = 2$ and
$\varphi(X_s)$ denotes a $d$-dimensional feature vector (with
unit-length) representation of $X_s$. We set $d=5$ in our study. For
each variate $X_s$, we set $C(X_s,X_s) = -\frac{1}{2}\phi(X_s,X_t)$.
Using Gibbs sampling, we generate a training sample of size $n$ from
the true graphical model, and an independent sample of the same size
from the same distribution for tuning $\lambda_n$ and the parameters
$\sigma$, $\alpha$ and $\beta$ in the function families. We compare
performance for $n=200$, different values of $p\in \{50, 100, 150,
200, 250, 300\}$, and different sparsity levels $P=\{0.02, 0.05,
0.1\}$, replicated 10 times each.\\

\noindent \textbf{Comparison of Models}\hspace{0.15in} We compare the performance of
our estimator to GLasso~\citep{Friedman-Glasso-2008} as a GGMs
estimator and SKEPTIC~\citep{Liu-RankSemiparaGGM} as a Nonparanormal
estimator. In order to apply GLasso to the data with vector-valued
variates, we  treat each dimension of the feature vector
$\varphi(X_s)$ as a sample, and hence we have $200 \times 5 = 1000$
samples which are assumed to be drawn from GGMs. In this setup, GGMs
can be taken as a special case of Semi-EFGMs with linear kernel
$\phi(X_s,X_t) = \varphi(X_s)^\top\varphi(X_t)$. The same treatment
of data is also applied to SKEPTIC. In addition, we
use a version of SKEPTIC with Kendall's tau to infer the
correlation.\\

\noindent \textbf{Evaluation Criterion}\hspace{0.15in} To evaluate the support
recovery performance, we use the standard F-score from the
information retrieval literature~\citep{Rijsbergen-IR-1979}. The
larger the F-score, the better the support recovery performance. The
numerical values over $10^{-3}$ in magnitude are considered to be
nonzero.\\

\noindent \textbf{Results}\hspace{0.15in} Overall, the experimental results on the
simulated data suggest that:
\begin{itemize}
  \item When the pairwise interactions are modeled by the heat kernel function (see the left panels of Figure~\ref{fig:fscore_model_1} and
  Figure~\ref{fig:fscore_model_2}), we have the performance order: Semi-EFGMs $\gg$ Nonparanormal $>$
  GGMs.
  \item When the pairwise interactions are modeled by polynomial kernel function (see the right panels of Figure~\ref{fig:fscore_model_1} and
  Figure~\ref{fig:fscore_model_2}), we have the performance order: Semi-EFGMs $\ge$ Nonparanormal $\ge$ GGMs.
\end{itemize}

\begin{figure}[h!]
\centering
\includegraphics[width=78mm]{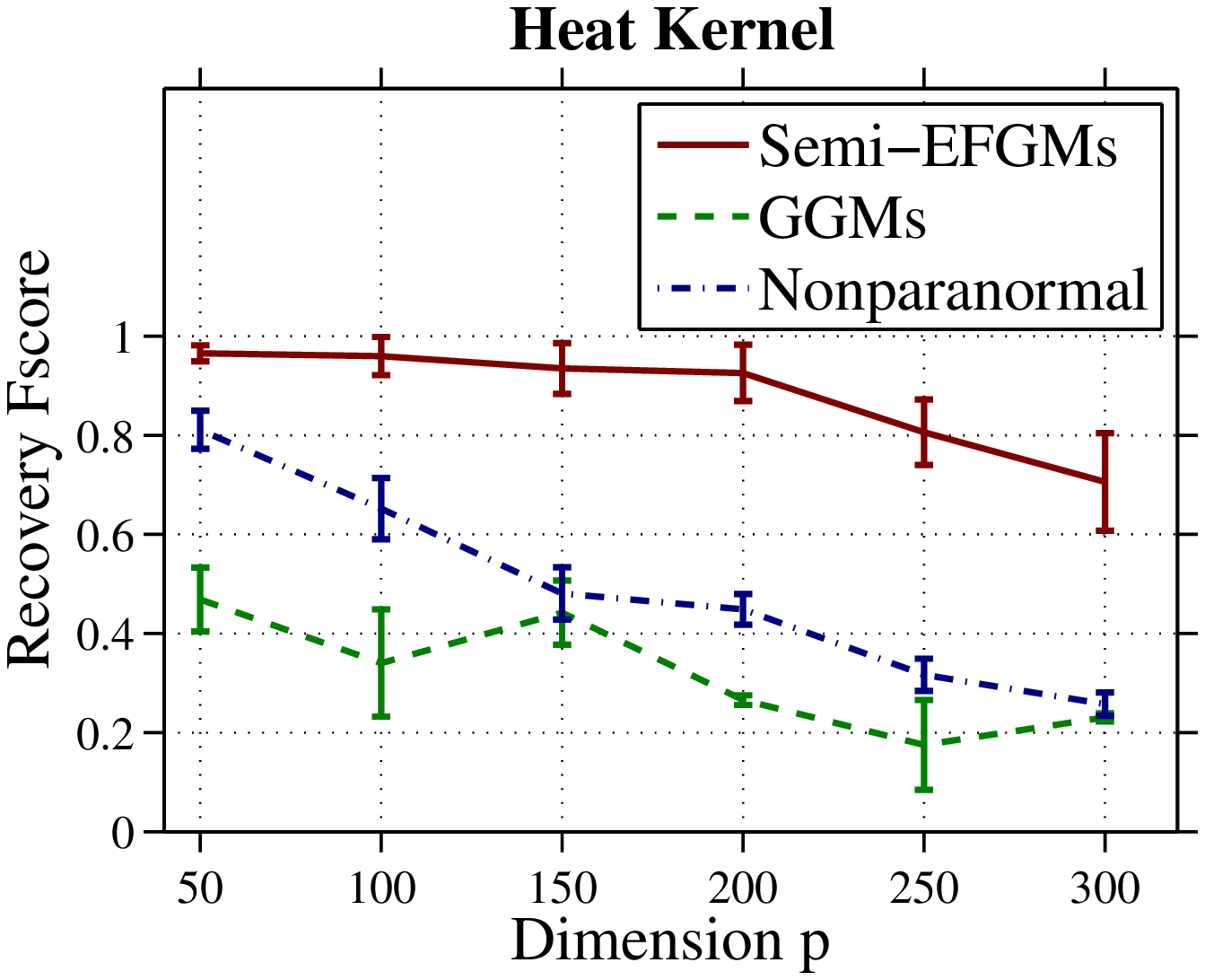}\label{fig:model_heat}
\includegraphics[width=78mm]{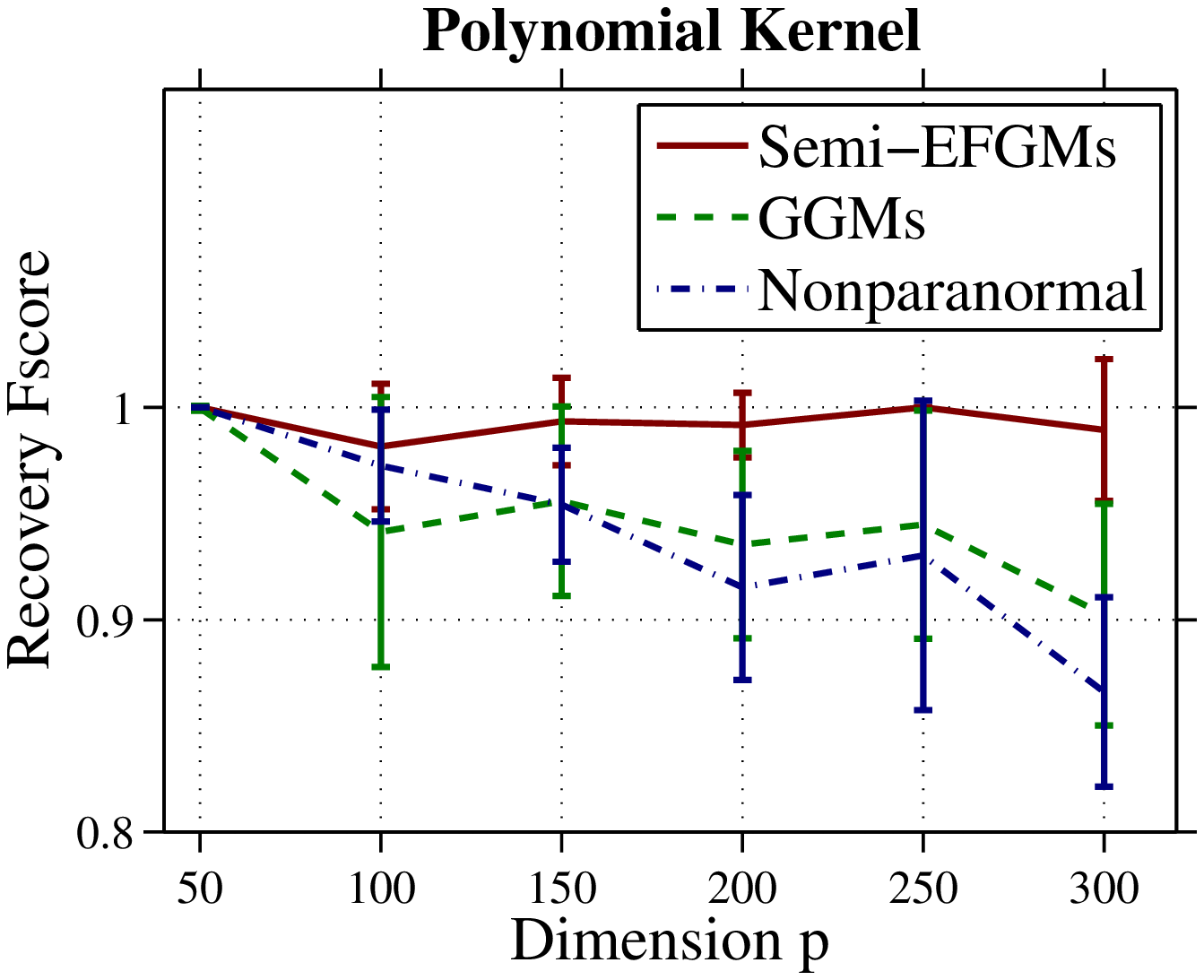}\label{fig:model_poly}
 \caption{Support recovery F-score results on data model 1. \label{fig:fscore_model_1}}
\end{figure}

We now describe in detail these observations.
Figure~\ref{fig:fscore_model_1} shows the support recovery F-scores
on data model 1. From the left panel we can see that Semi-EFGMs work
significantly better than the other two considered methods when the
mutual interactions of variables are modeled by heat kernel
function. In this case, we also observe that Nonparanormal is much
more accurate than GGMs in structure recovery. When polynomial
kernel function is used to define mutual sufficient statistics (see
right panel), Semi-EFGMs is slightly better than the two considered
methods when $p<=100$ and the gap becomes more and more apparent as
$p$ increases. The advantage of Semi-EFGMs is as expected because
this approach explicitly models the nonlinear pairwise interactions
which is hard to be captured by the traditional GGMs and
Nonparanormals. Figure~\ref{fig:fscore_model_2} shows the support
recovery F-scores on data model 2 with different configurations of
kernel function and sparsity level. From the left column we observe
again that Semi-EFGMs significantly outperform the other two
considered methods on heat kernel based models. From the right
column we can see that on polynomial kernel based models, our model
is significantly better than the other two considered methods when
the graph structure is extremely sparse (i.e., $P=0.02$) while it is
slightly better than the other two considered methods when the graph
structure becomes less sparse (i.e., $P=0.05, 0.1$). The numerical
figures to generate Figure~\ref{fig:fscore_model_1}
and~\ref{fig:fscore_model_2} are listed in
Table~\ref{tab:synthetic_results_fscore}. To visually inspect the
support recovery performance of different methods, we show in
Figure~\ref{fig:heatmaps} several selected heatmaps reflecting the
percentage of each graph matrix entry being identified as a nonzero
element. Visual inspection on these heatmaps confirm that Semi-EFGMs
perform favorably in graph structure recovery, especially when heat
kernels are used as pairwise sufficient statistics (see the top two
rows).

\begin{figure}
\centering \subfigure[Sparsity $P = 0.02$]{
\includegraphics[width=78mm]{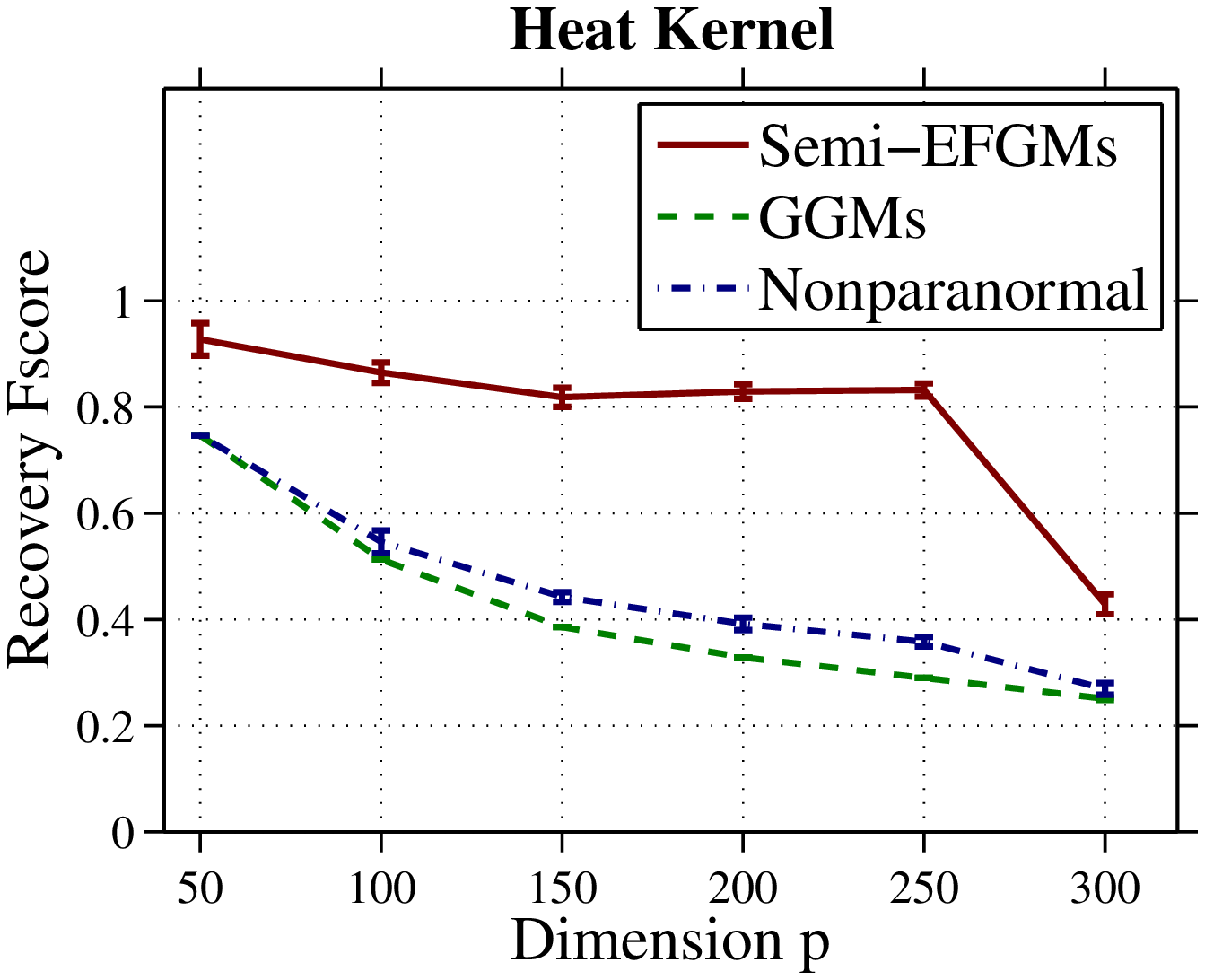}\label{fig:mode2_heat_0.02}
\includegraphics[width=78mm]{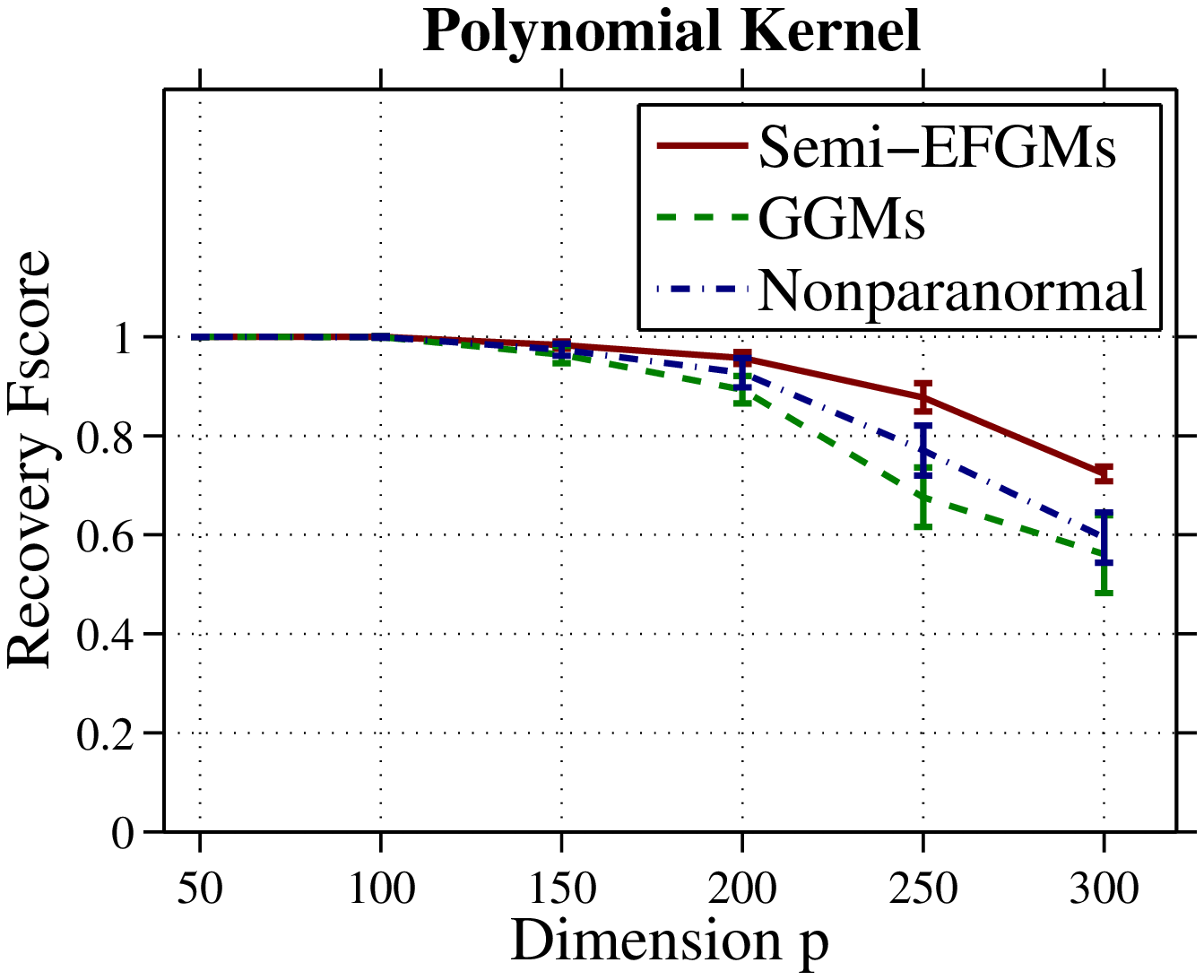}\label{fig:mode2_poly_0.02}
} \subfigure[Sparsity $P = 0.05$]{
\includegraphics[width=78mm]{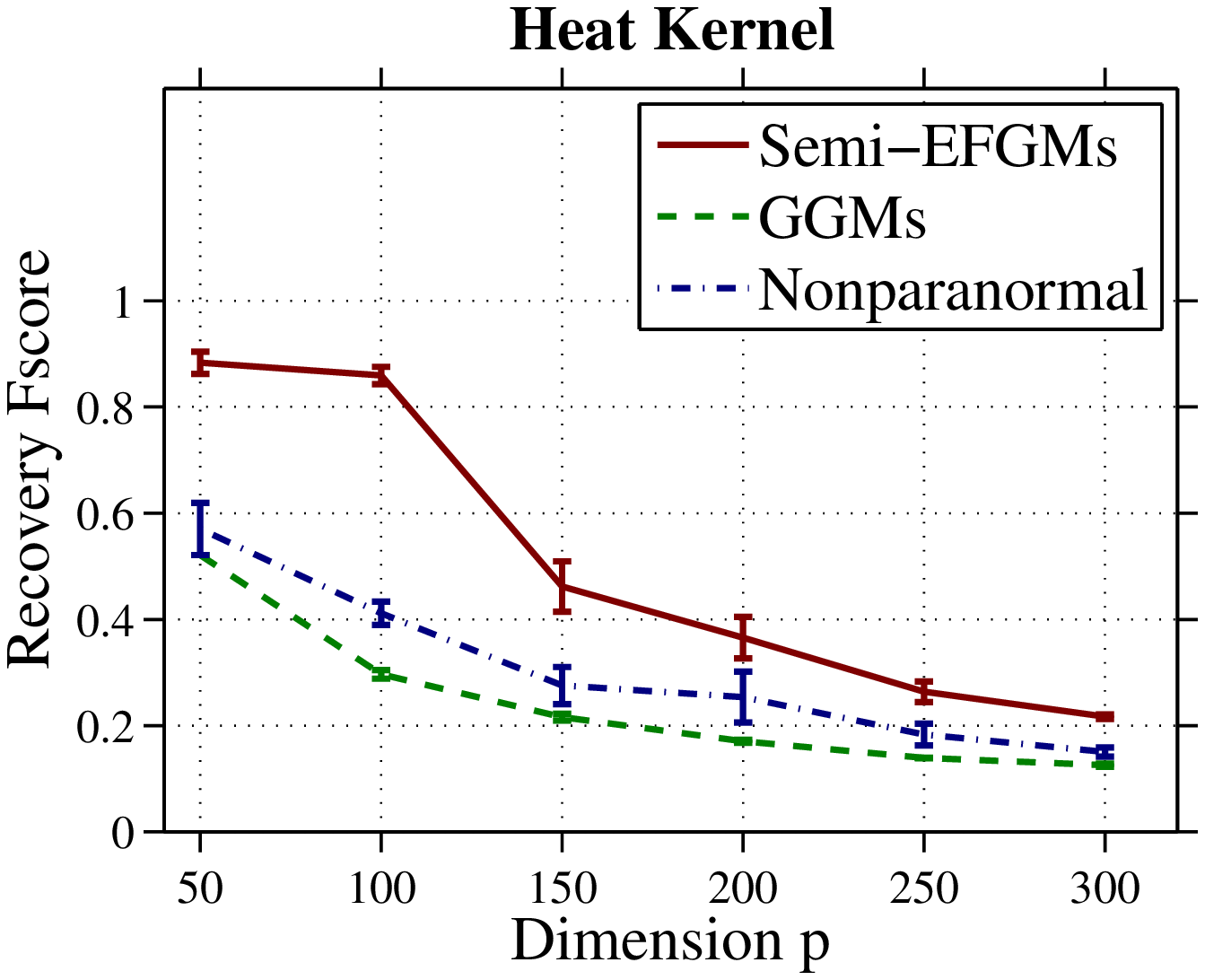}\label{fig:mode2_heat_0.05}
\includegraphics[width=78mm]{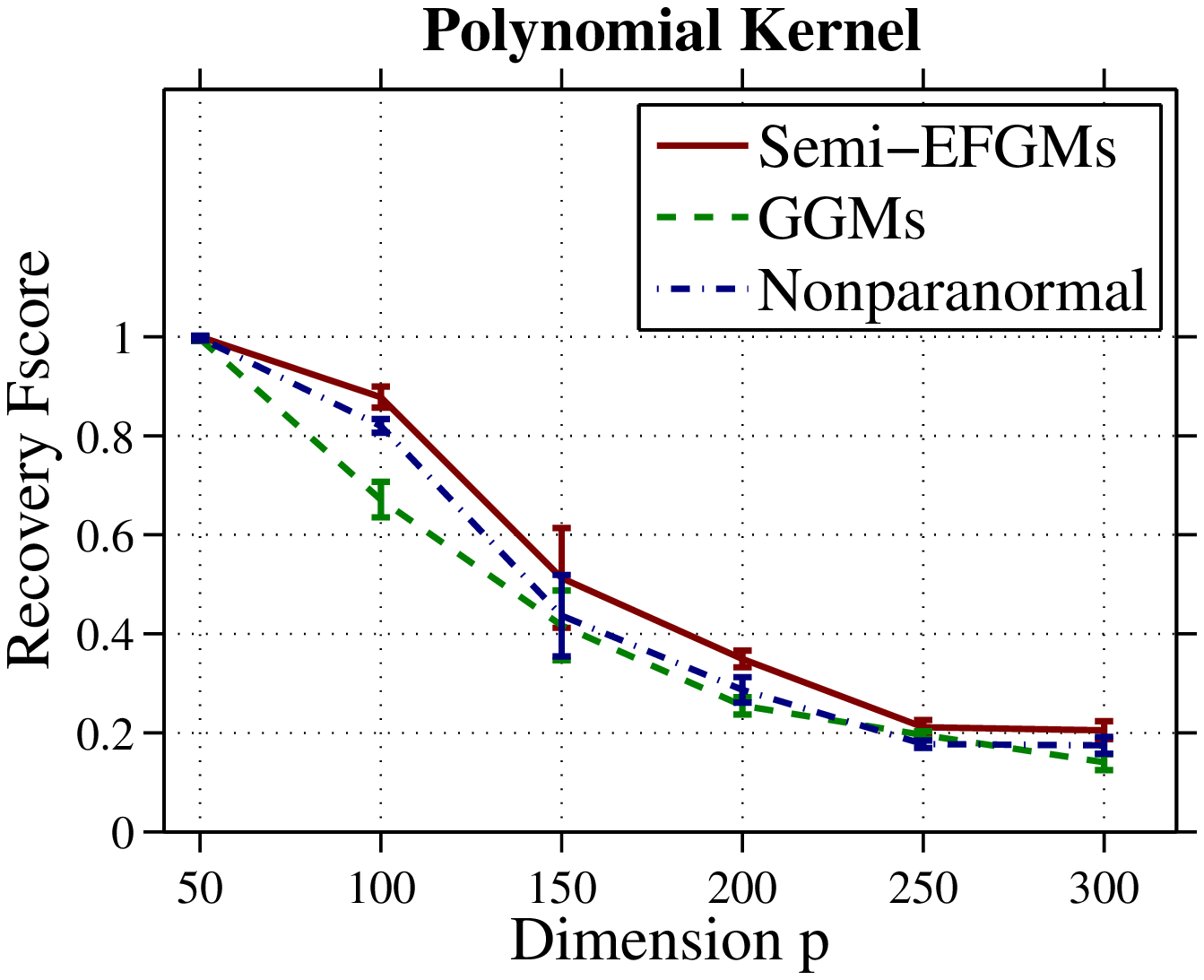}\label{fig:mode2_poly_0.05}
} \subfigure[Sparsity $P = 0.1$]{
\includegraphics[width=78mm]{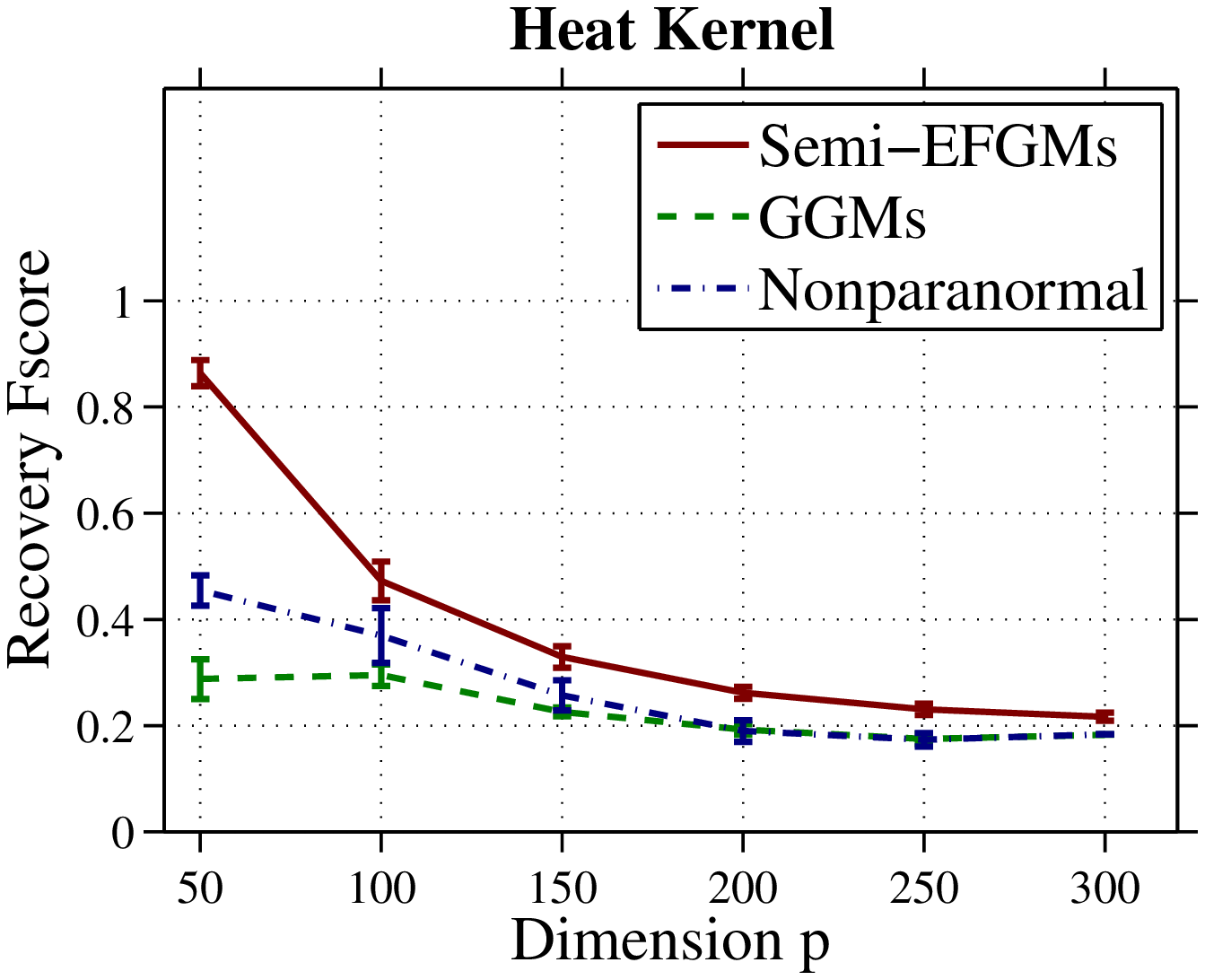}\label{fig:mode2_heat_0.1}
\includegraphics[width=78mm]{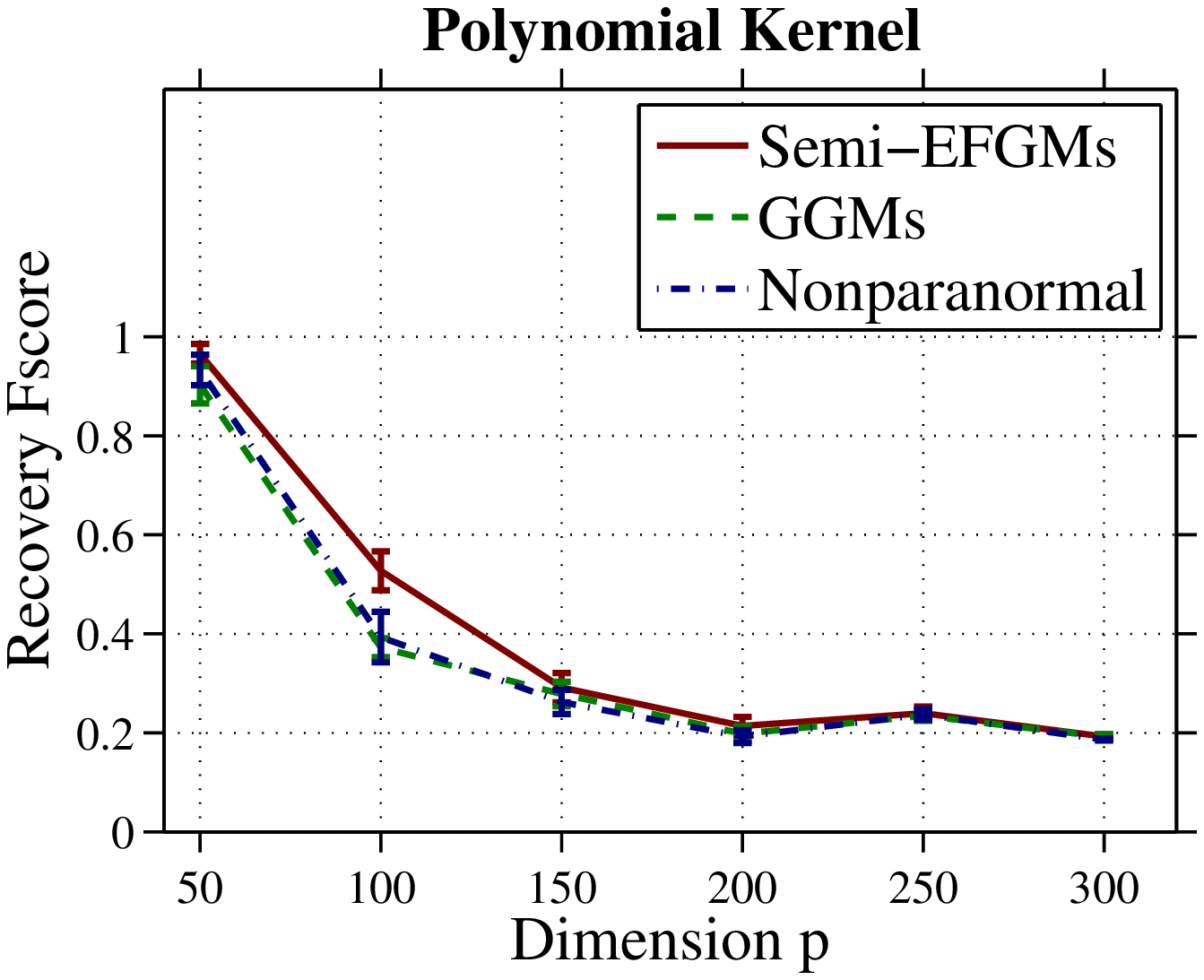}\label{fig:mode2_poly_0.1}
} \caption{Support recovery F-score results on data model
2.\label{fig:fscore_model_2}}
\end{figure}

\begin{table}[h!]
\begin{center}
\caption{Comparison of average (SE) of F-score for synthetic data
over 100 replications. \label{tab:synthetic_results_fscore}}
\begin{tabular}{c c c c c c c}
\hline
Methods & $p=50$ & $100$ & $150$ & $200$ & $250$ & $300$\\
\hline \hline
\multicolumn{7}{c}{Data model 1 with heat kernel} \\
Semi-EFGMs      & \textbf{0.97} ($0.02$)& \textbf{0.96} (0.04) & \textbf{0.93} (0.05) & \textbf{0.93} (0.06) & \textbf{0.81} (0.07) & \textbf{0.71} (0.10) \\
GGMs   & 0.47 (0.06) & 0.34 (0.11) & 0.44 (0.06) & 0.27 (0.01) & 0.18 (0.09)& 0.23 (0.01) \\
Nonparanormal  & 0.81 (0.04) & 0.65 (0.06) & 0.48 (0.05) & 0.45 (0.03) & 0.32 (0.03) & 0.26 (0.02) \\
\hline
\multicolumn{7}{c}{Data model 1 with polynomial kernel} \\
Semi-EFGMs      & \textbf{1.00} ($0.00$)& \textbf{0.98} (0.03) & \textbf{0.99} (0.02) & \textbf{0.99} (0.02) & \textbf{1.00} (0.00) & \textbf{0.99} (0.03) \\
GGMs   & 0.99 (0.01) & 0.94 (0.06) & 0.96 (0.04) & 0.94 (0.04) & 0.94 (0.05) & 0.90 (0.05) \\
Nonparanormal  & \textbf{1.00} (0.00) & 0.97 (0.03) & 0.95 (0.03) & 0.92 (0.04) & 0.93 (0.07) & 0.87 (0.04) \\
\hline
\multicolumn{7}{c}{Data model 2 with heat kernel, sparsity $P = 0.02$} \\
Semi-EFGMs      & \textbf{0.93} ($0.03$)& \textbf{0.86} (0.02) & \textbf{0.82} (0.02) & \textbf{0.83} (0.01) & \textbf{0.83} (0.01) & \textbf{0.43 }(0.02) \\
GGMs   & 0.75 (0.01) & 0.51 (0.01) & 0.39 (0.01) & 0.33 (0.01) & 0.29 (0.01) & 0.25 (0.01) \\
Nonparanormal  & 0.75 (0.01) & 0.55 (0.02) & 0.44 (0.01) & 0.39 (0.01) & 0.36 (0.01)  & 0.27 (0.01)  \\
\hline
\multicolumn{7}{c}{Data model 2 with polynomial kernel, sparsity $P = 0.02$} \\
Semi-EFGMs      & \textbf{1.00} ($0.00$)& \textbf{0.99} (0.01) & \textbf{0.98} (0.01) & \textbf{0.96} (0.01) & \textbf{0.88} (0.03) & \textbf{0.72} (0.02) \\
GGMs   & \textbf{1.00 }(0.00) & \textbf{0.99} (0.01) & 0.96 (0.02) & 0.89 (0.03) & 0.68 (0.06) & 0.56 (0.08)  \\
Nonparanormal  & \textbf{1.00 }(0.00) & \textbf{0.99} (0.01) & 0.97 (0.01) & 0.93 (0.03) & 0.77 (0.05)  & 0.59 (0.05) \\
\hline
\multicolumn{7}{c}{Data model 2 with heat kernel, sparsity $P = 0.05$} \\
Semi-EFGMs  & \textbf{0.88} ($0.02$)& \textbf{0.86} (0.02) & \textbf{0.46} (0.05) & \textbf{0.37} (0.04) & \textbf{0.26} (0.02) & \textbf{0.22 }(0.01) \\
GGMs   & 0.52 (0.01) & 0.30 (0.01) & 0.22 (0.01) & 0.17 (0.01) & 0.14 (0.01)  & 0.13 (0.01)  \\
Nonparanormal  & 0.57 (0.05) & 0.41 (0.02) & 0.28 (0.03) & 0.25 (0.05) & 0.18 (0.02)  &  0.15 (0.01) \\
\hline
\multicolumn{7}{c}{Data model 2 with polynomial kernel, sparsity $P = 0.05$} \\
Semi-EFGMs   & \textbf{1.00} ($0.00$)& \textbf{0.88} (0.02) & \textbf{0.51} (0.10) & \textbf{0.35} (0.01) & \textbf{0.21} (0.02) & \textbf{0.21} (0.02)\\
GGMs   & 0.99 (0.01) & 0.67 (0.04) & 0.42 (0.07) & 0.25 (0.02) & 0.20 (0.01)  & 0.14 (0.05)  \\
Nonparanormal  & 0.99 (0.01) & 0.82 (0.01) & 0.44 (0.08) & 0.29 (0.03) & 0.18 (0.01)  &  0.18 (0.02) \\
\hline
\multicolumn{7}{c}{Data model 2 with heat kernel, sparsity $P = 0.1$} \\
Semi-EFGMs   & \textbf{0.86} ($0.02$)& \textbf{0.47} (0.04) & \textbf{0.33} (0.02) & \textbf{0.26} (0.01) & \textbf{0.23} (0.01) & \textbf{0.22 }(0.01) \\
GGMs   & 0.29 (0.04) & 0.29 (0.02) & 0.23 (0.01) & 0.19 (0.01) & 0.17 (0.01)  & 0.18 (0.01)  \\
Nonparanormal  & 0.45 (0.03) & 0.37 (0.05) & 0.26 (0.03) & 0.19 (0.02) & 0.17 (0.01) & 0.18 (0.01)  \\
\hline
\multicolumn{7}{c}{Data model 2 with polynomial kernel, sparsity $P = 0.1$} \\
Semi-EFGMs  & \textbf{0.97} (0.02)& \textbf{0.53} (0.04) & \textbf{0.29} (0.03) & \textbf{0.21} (0.02) & \textbf{0.24} (0.01) & \textbf{0.19} (0.01) \\
GGMs   & 0.90 (0.04) & 0.37 (0.02) & 0.28 (0.02) & 0.20 (0.02) &  0.23 (0.01) &\textbf{0.19} (0.01)  \\
Nonparanormal  & 0.93 (0.03) & 0.39 (0.05)  & 0.26 (0.02) & 0.19 (0.01) & \textbf{0.24} (0.01) &  \textbf{0.19} (0.01)  \\
\hline
\end{tabular}
\end{center}
\end{table}

\begin{figure}
\centering \subfigure[Data model 1 with heat kernel, $p=100$]{
\includegraphics[width=40mm]{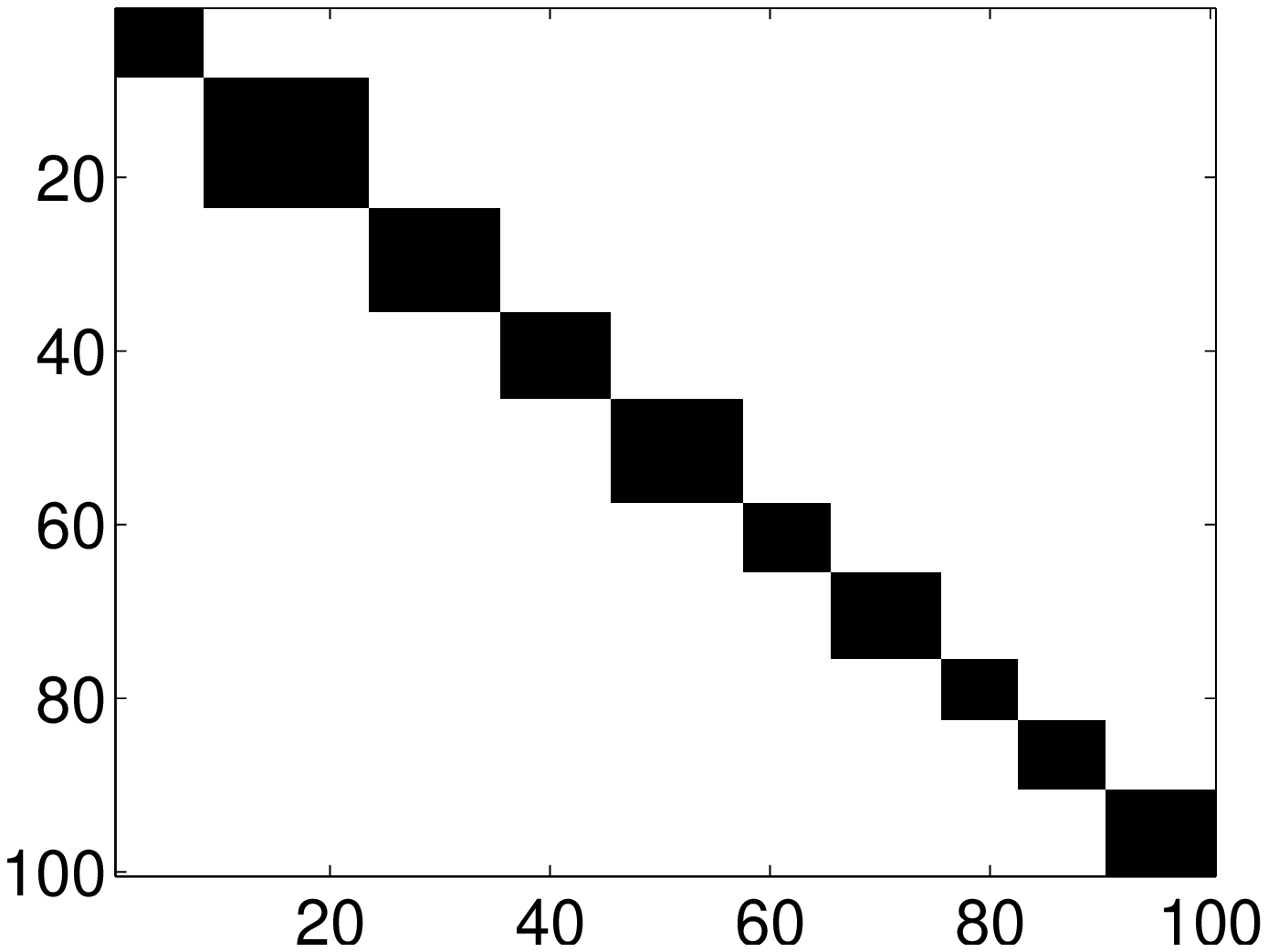}\label{fig:model1_heat_p_100}
\includegraphics[width=40mm]{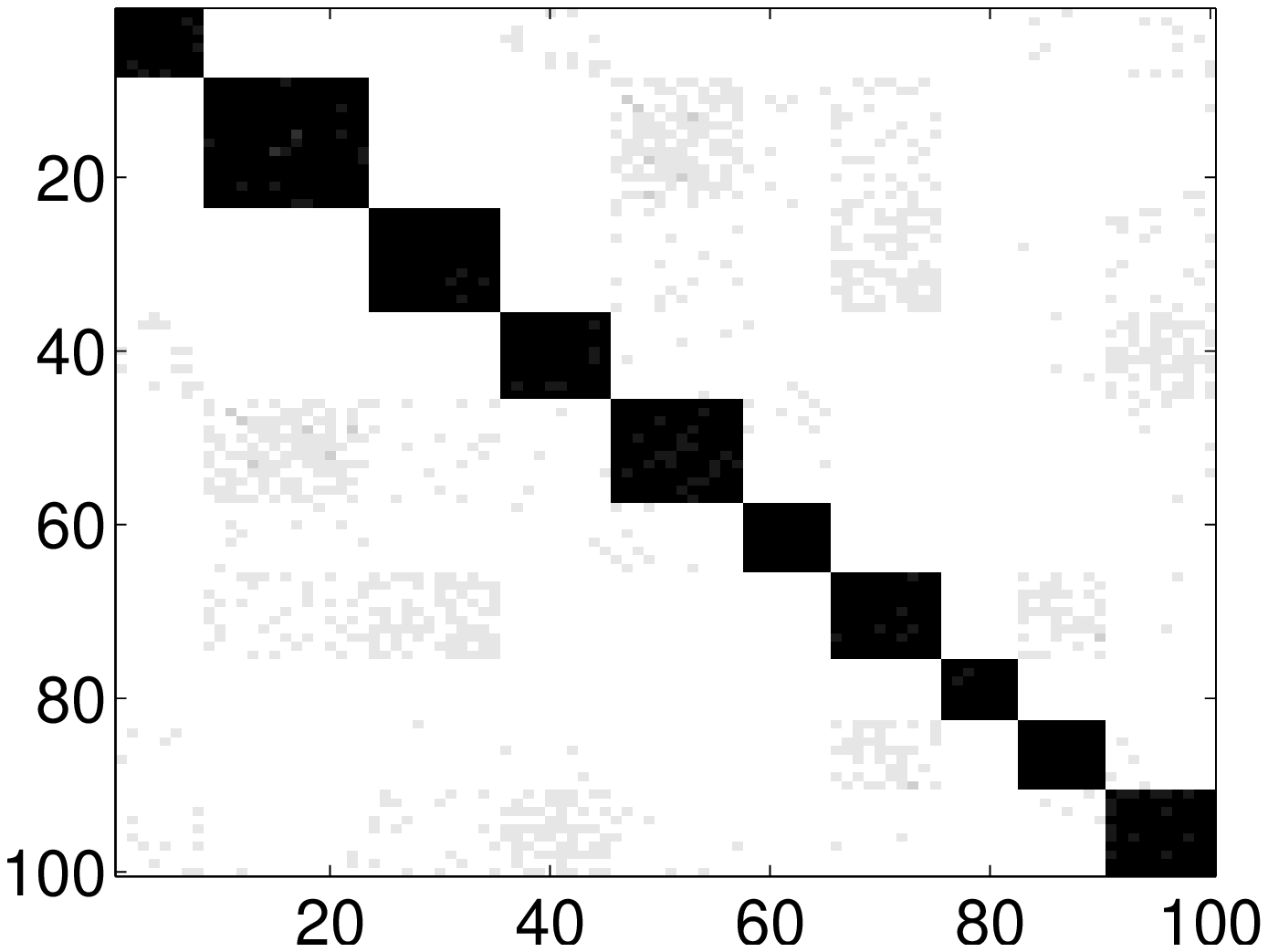}\label{fig:model1_heat_p_100_NLGGM}
\includegraphics[width=40mm]{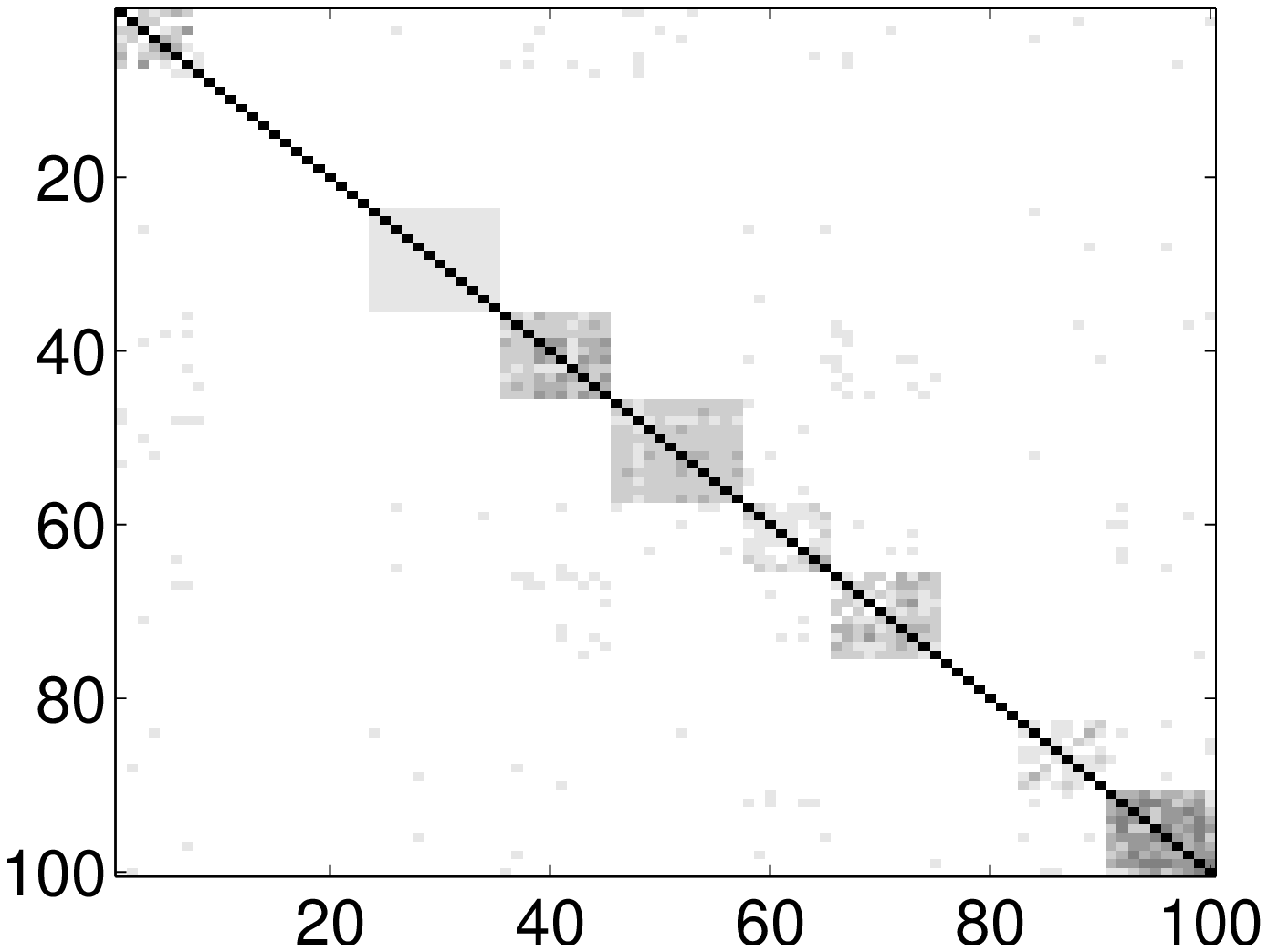}\label{fig:model1_heat_p_100_GLasso}
\includegraphics[width=40mm]{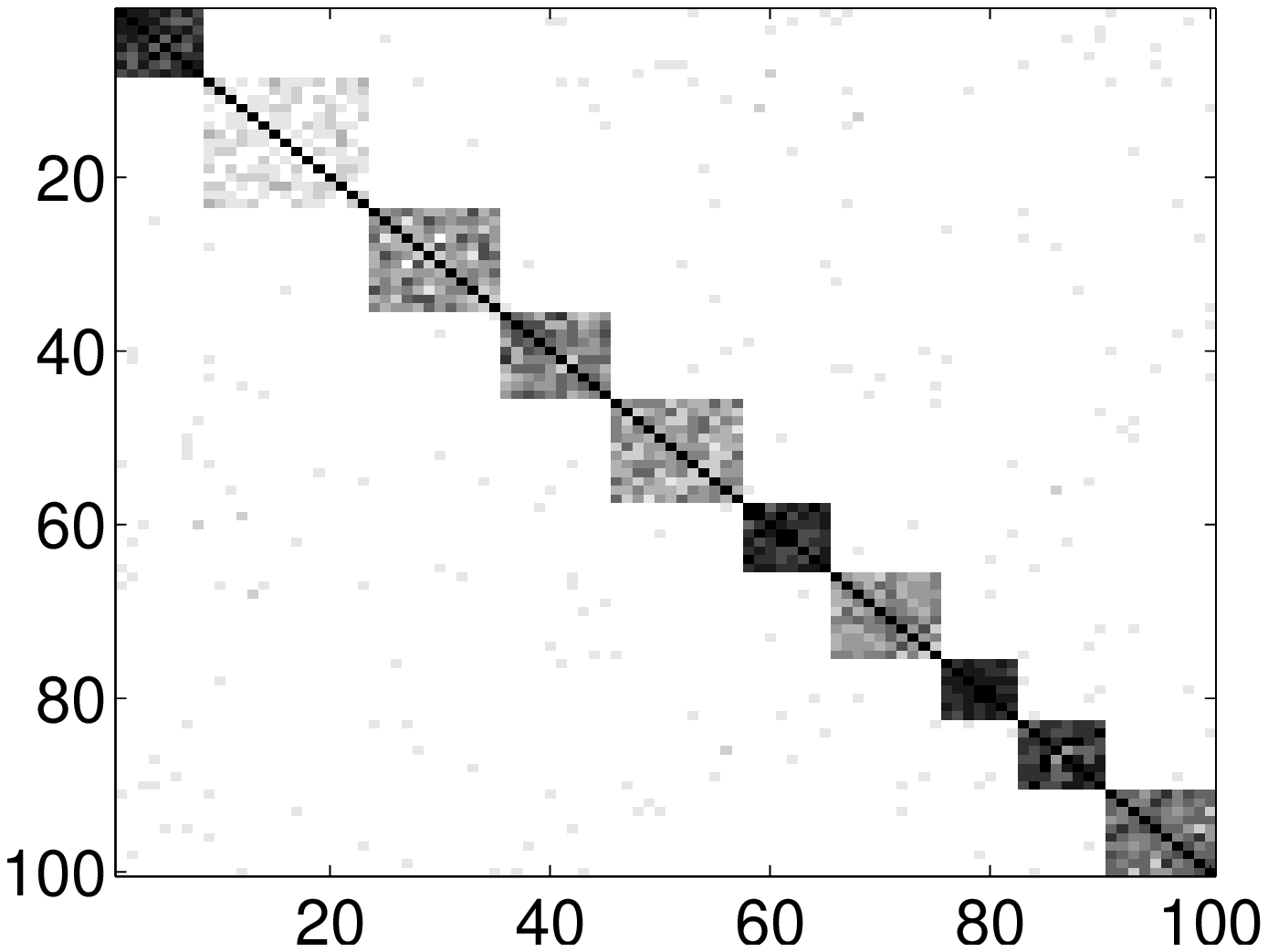}\label{fig:model1_heat_p_100_Nonparanormal}
} \subfigure[Data model 2 ($P=0.02$) with heat kernel, $p=100$]{
\includegraphics[width=40mm]{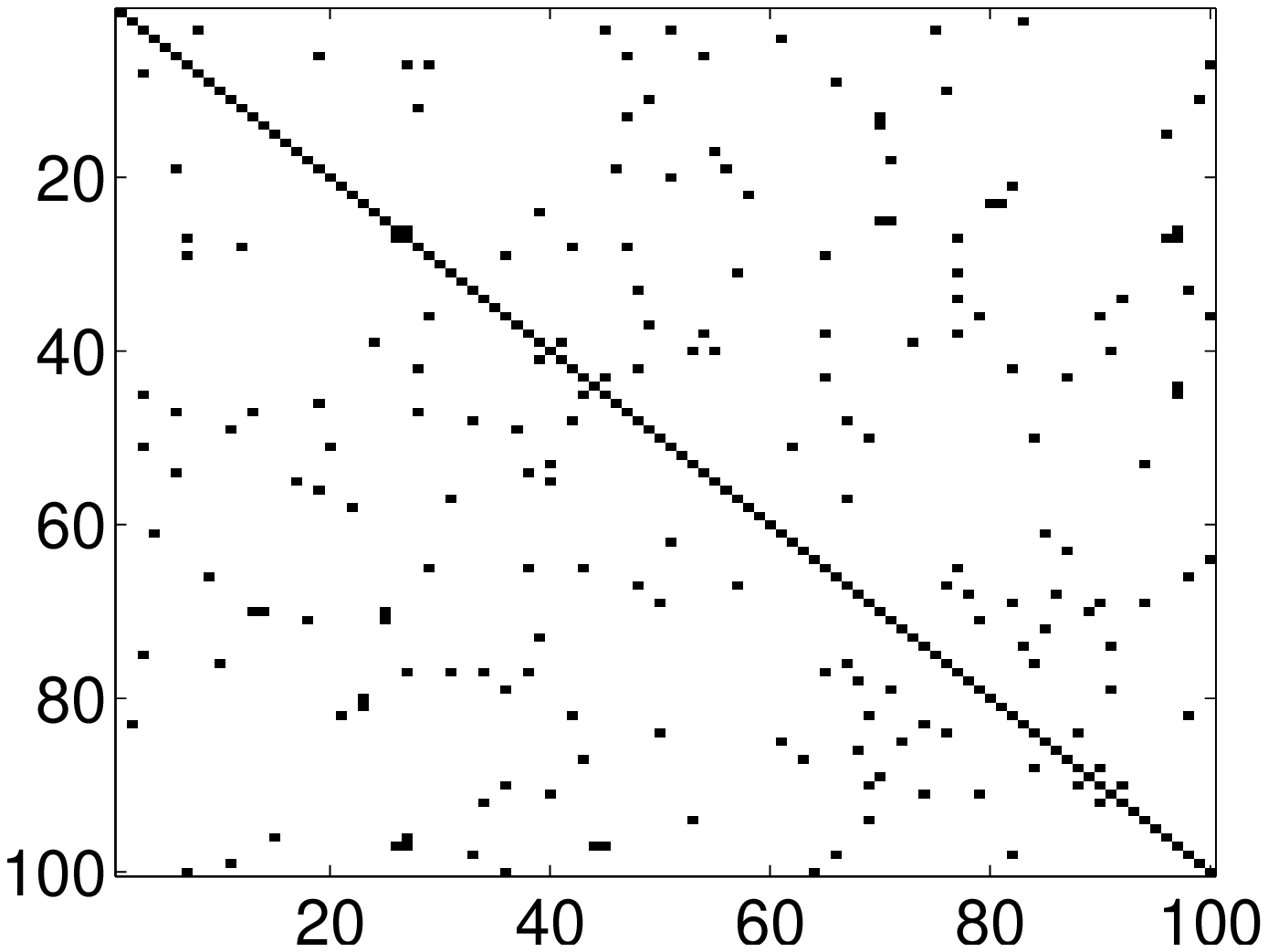}\label{fig:model2_heat_p_100_0.02}
\includegraphics[width=40mm]{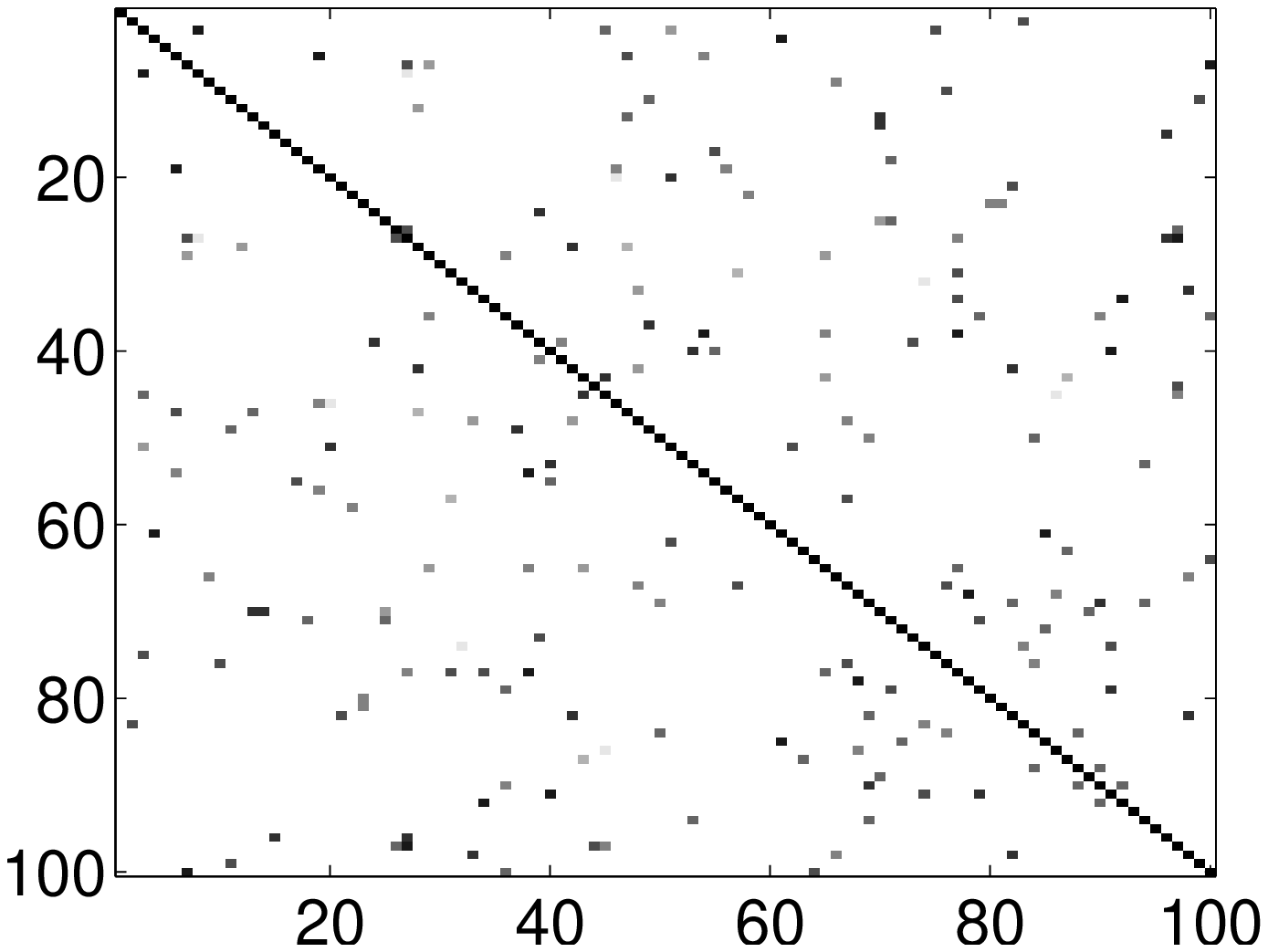}\label{fig:model2_poly_p_100_NLGGM_0.02}
\includegraphics[width=40mm]{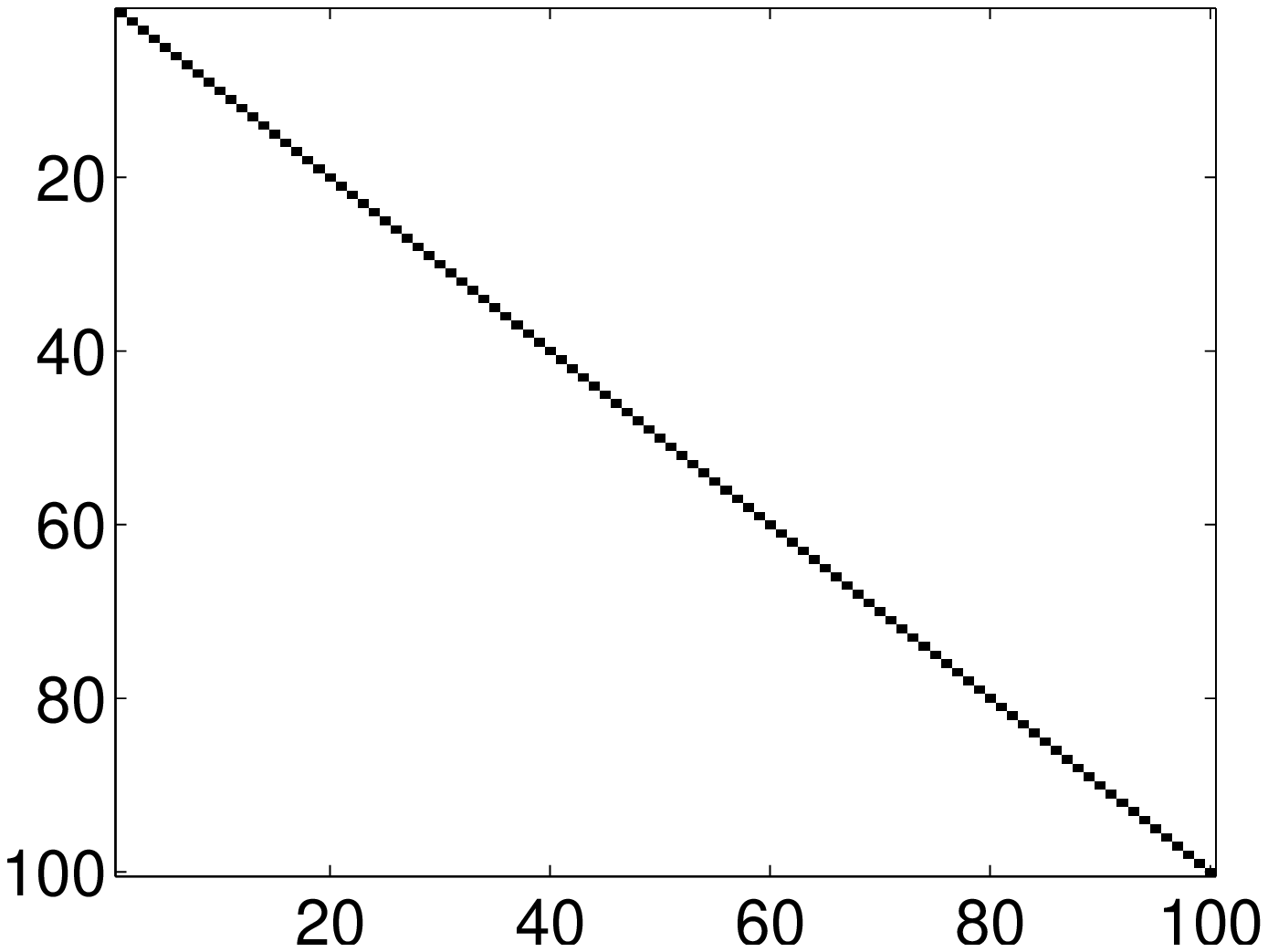}\label{fig:model2_poly_p_100_GLasso_0.02}
\includegraphics[width=40mm]{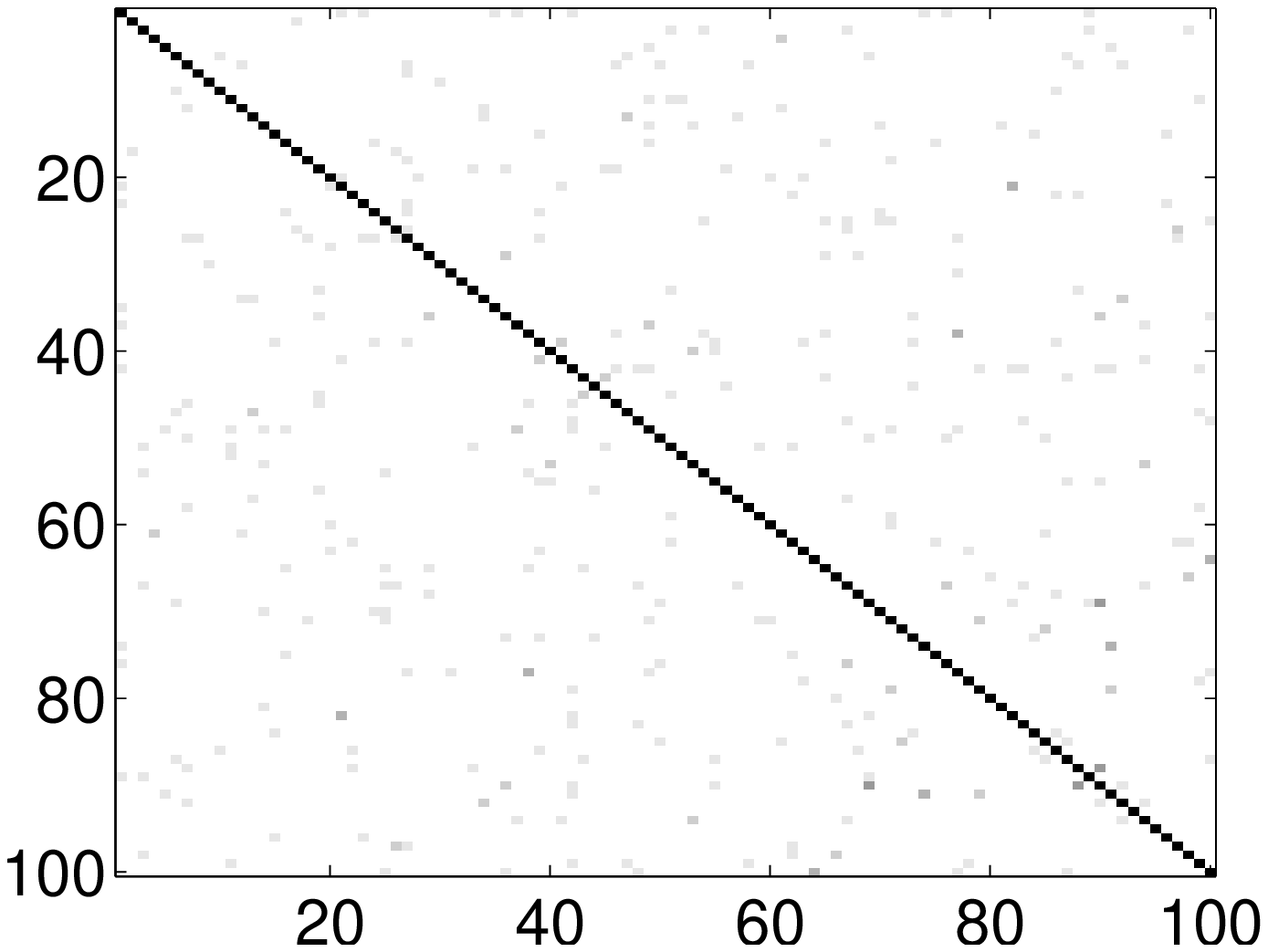}\label{fig:model2_poly_p_100_Nonparanormal_0.02}
} \subfigure[Data model 1 with polynomial kernel, $p=250$]{
\includegraphics[width=40mm]{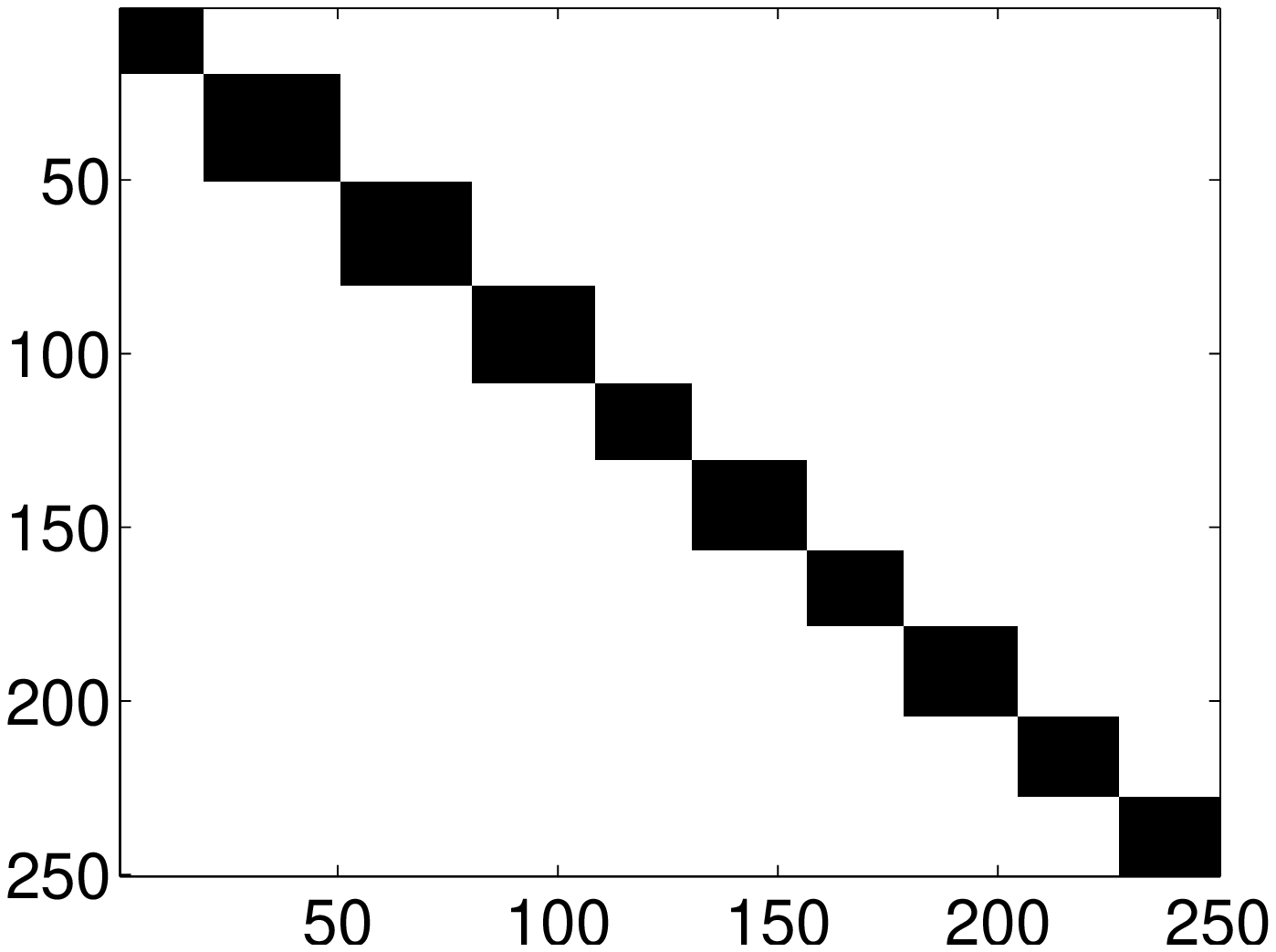}\label{fig:model1_heat_p_250}
\includegraphics[width=40mm]{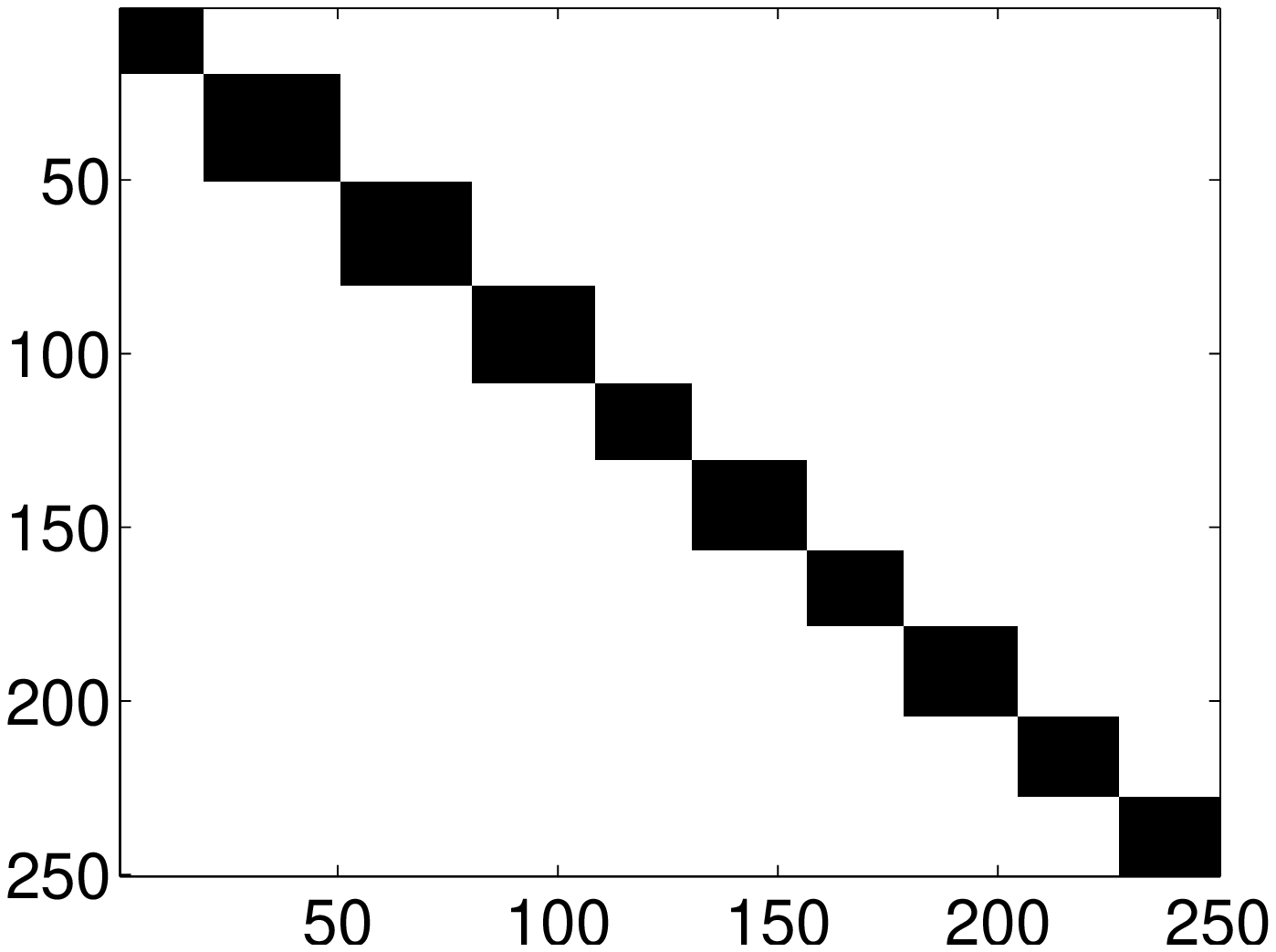}\label{fig:model1_heat_p_250_NLGGM}
\includegraphics[width=40mm]{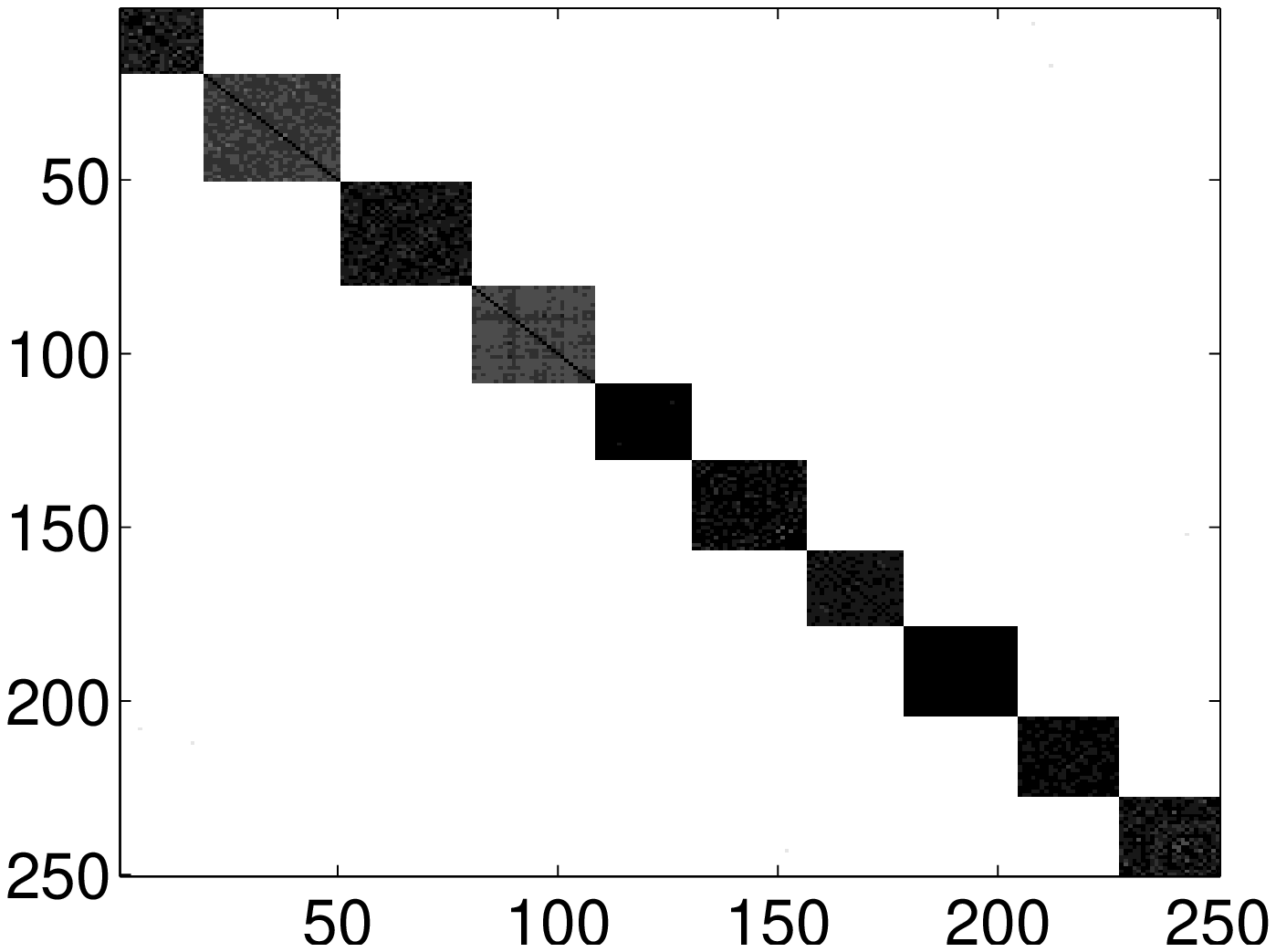}\label{fig:model1_heat_p_250_GLasso}
\includegraphics[width=40mm]{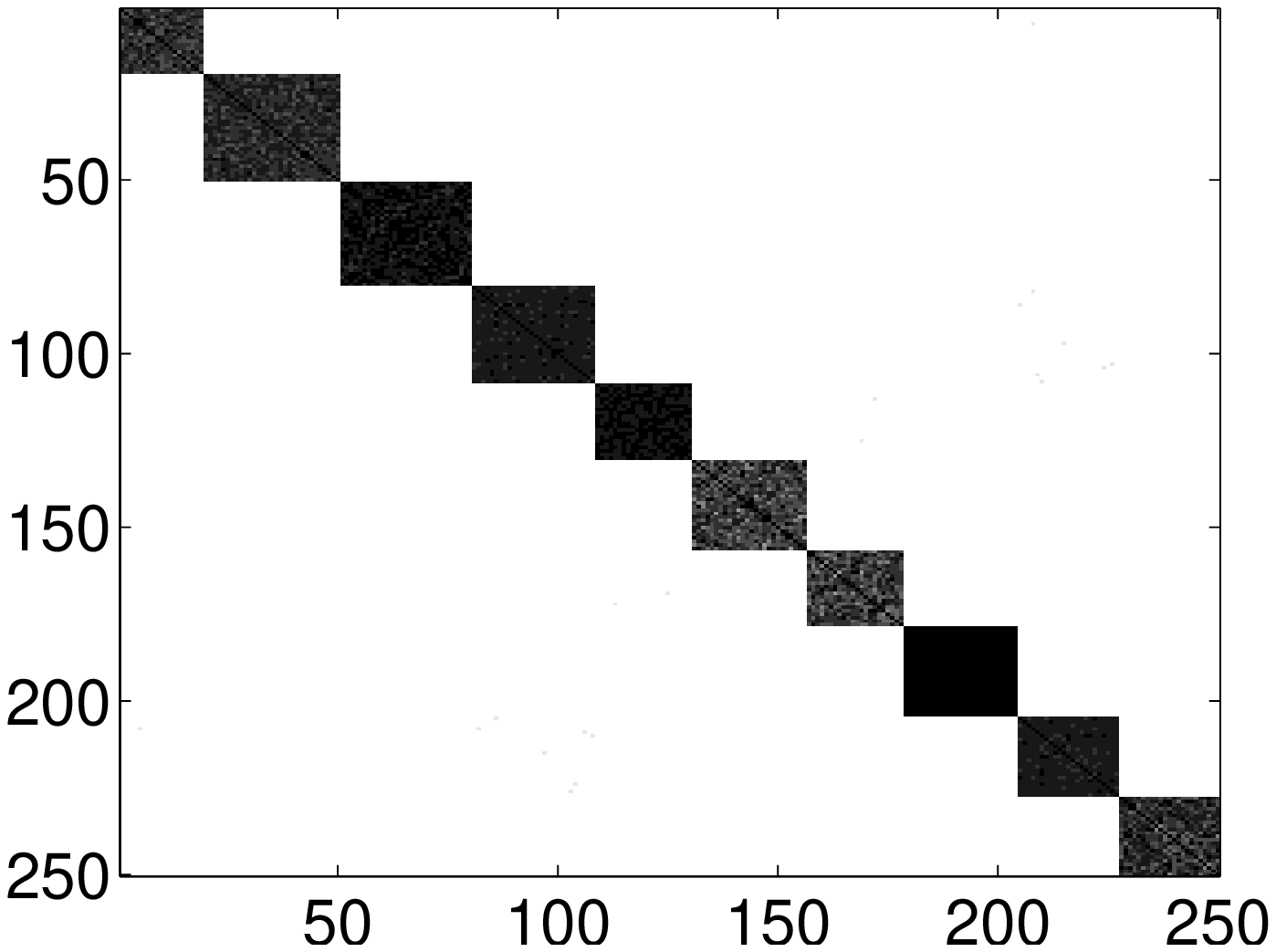}\label{fig:model1_heat_p_250_Nonparanormal}
} \subfigure[Data model 2 ($P = 0.02$) with polynomial kernel,
$p=250$]{
\includegraphics[width=40mm]{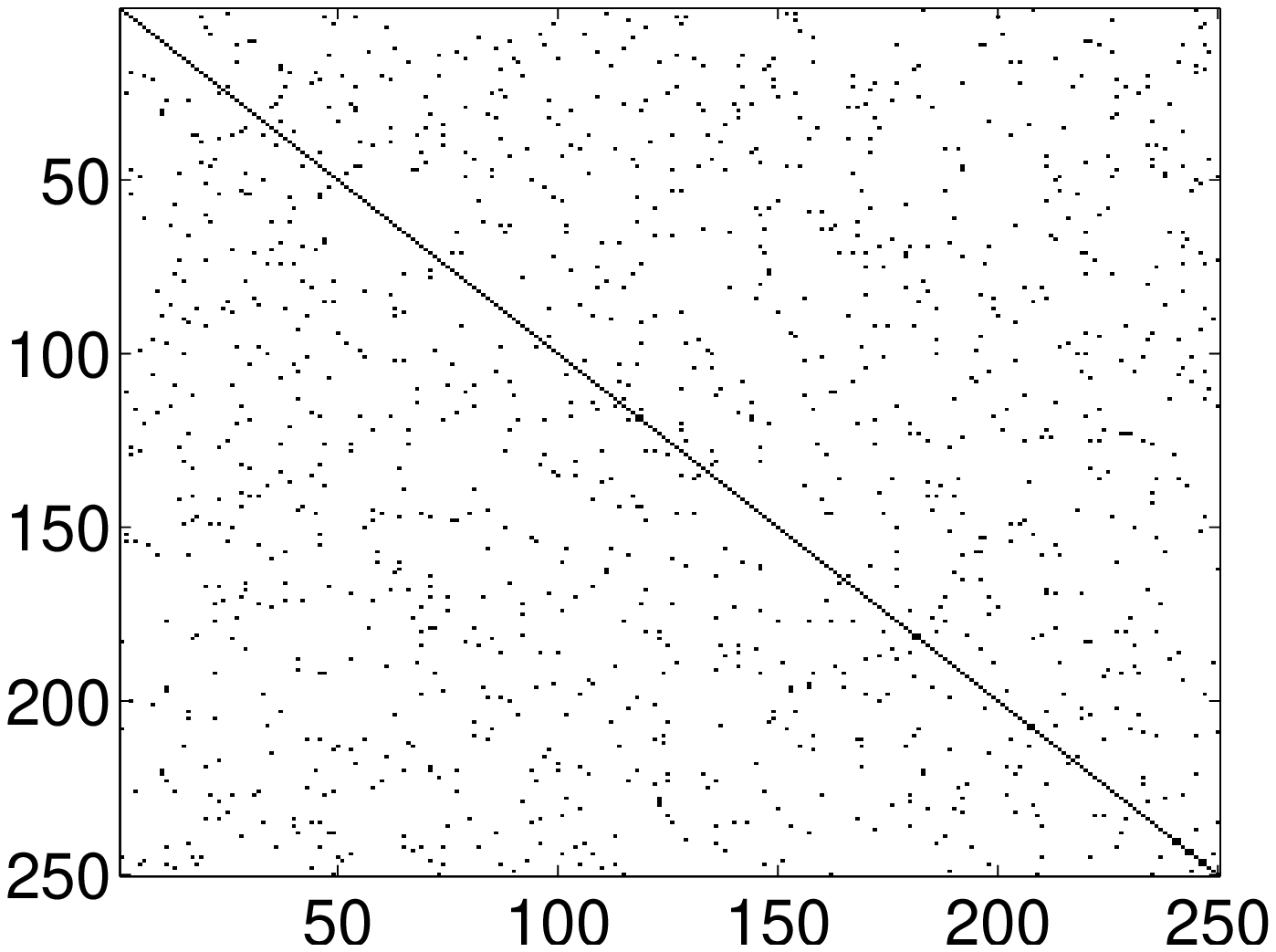}\label{fig:model2_heat_p_250_0.02}
\includegraphics[width=40mm]{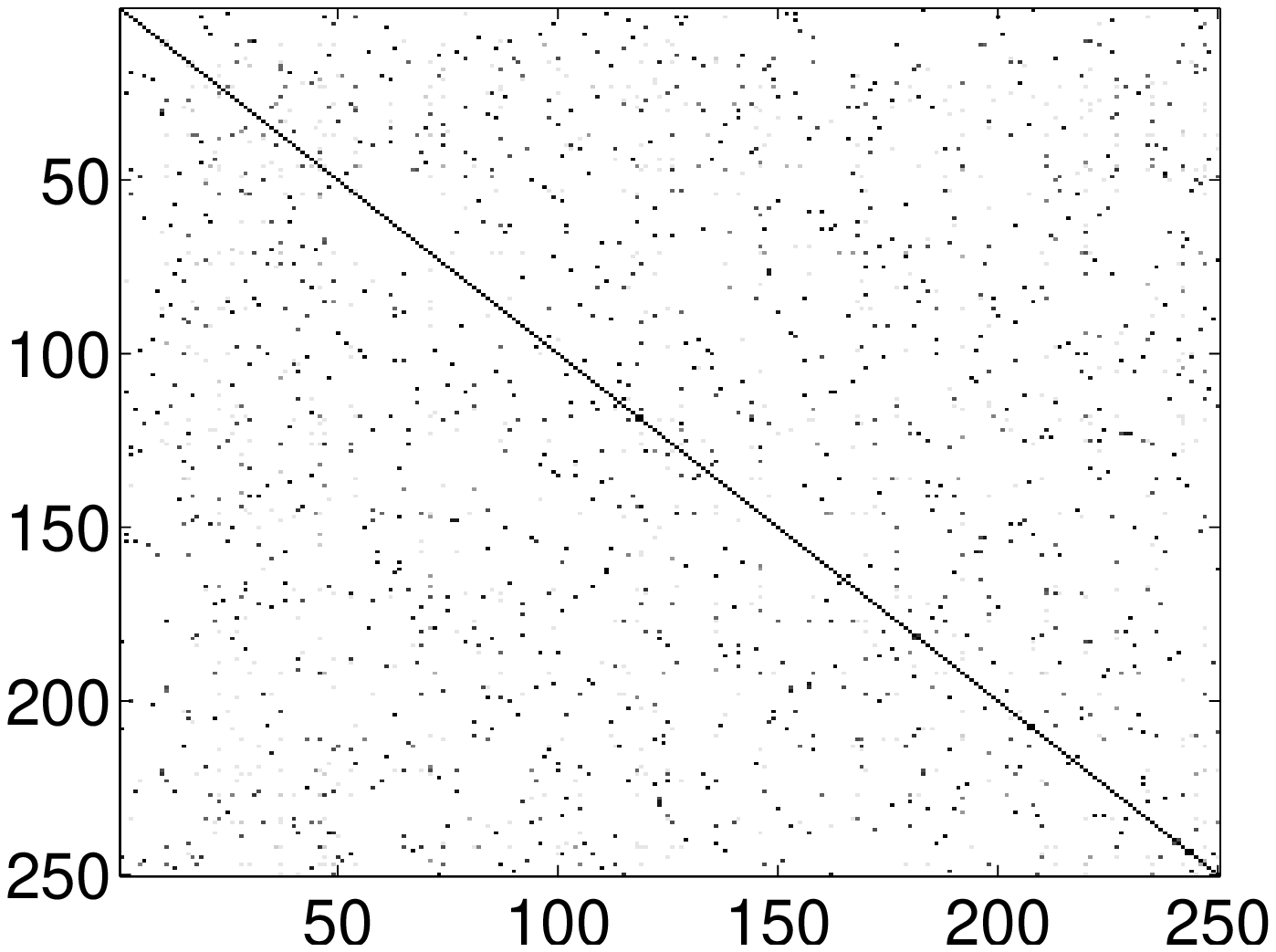}\label{fig:model2_poly_p_250_NLGGM_0.02}
\includegraphics[width=40mm]{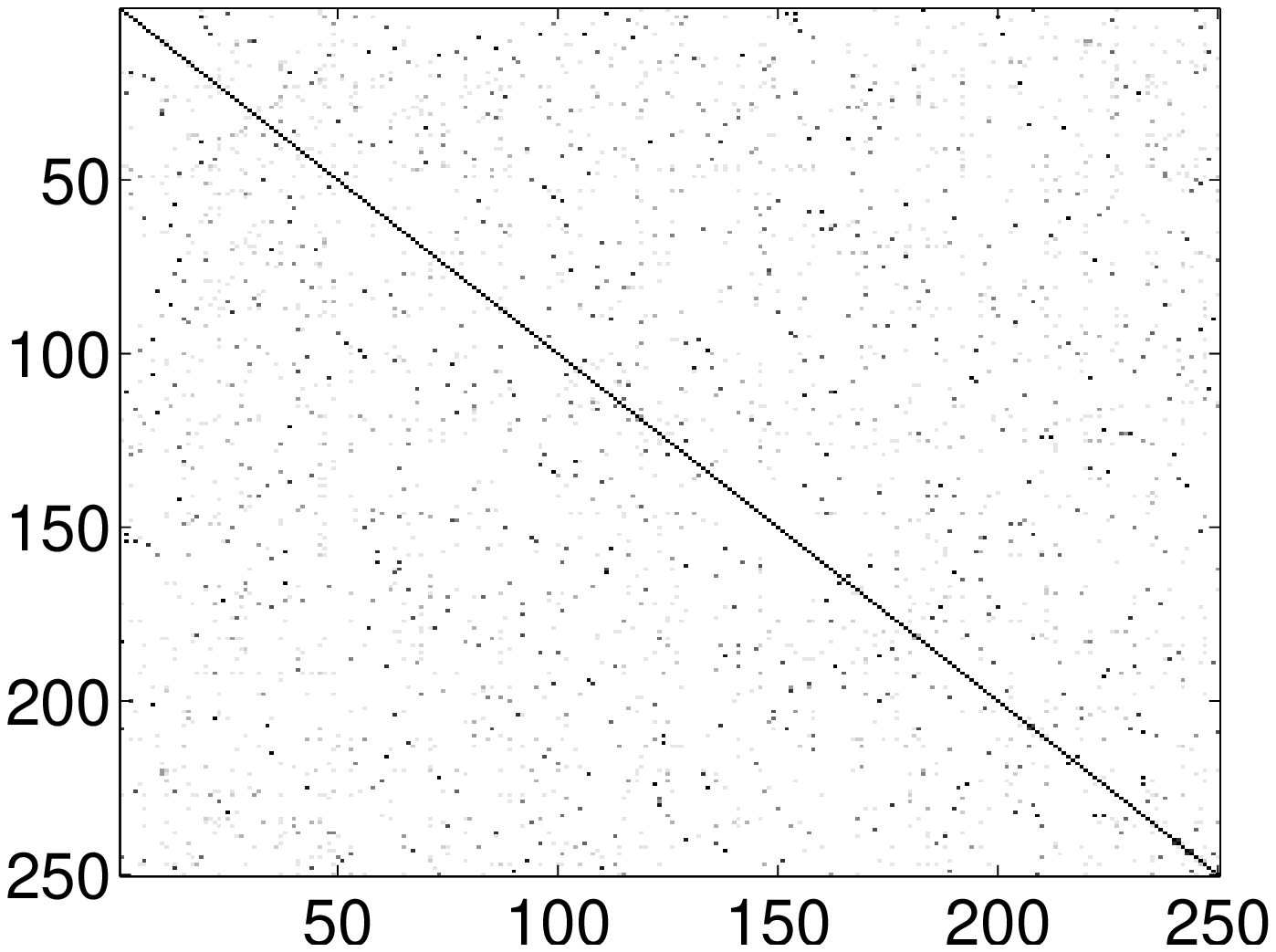}\label{fig:model2_poly_p_250_GLasso_0.02}
\includegraphics[width=40mm]{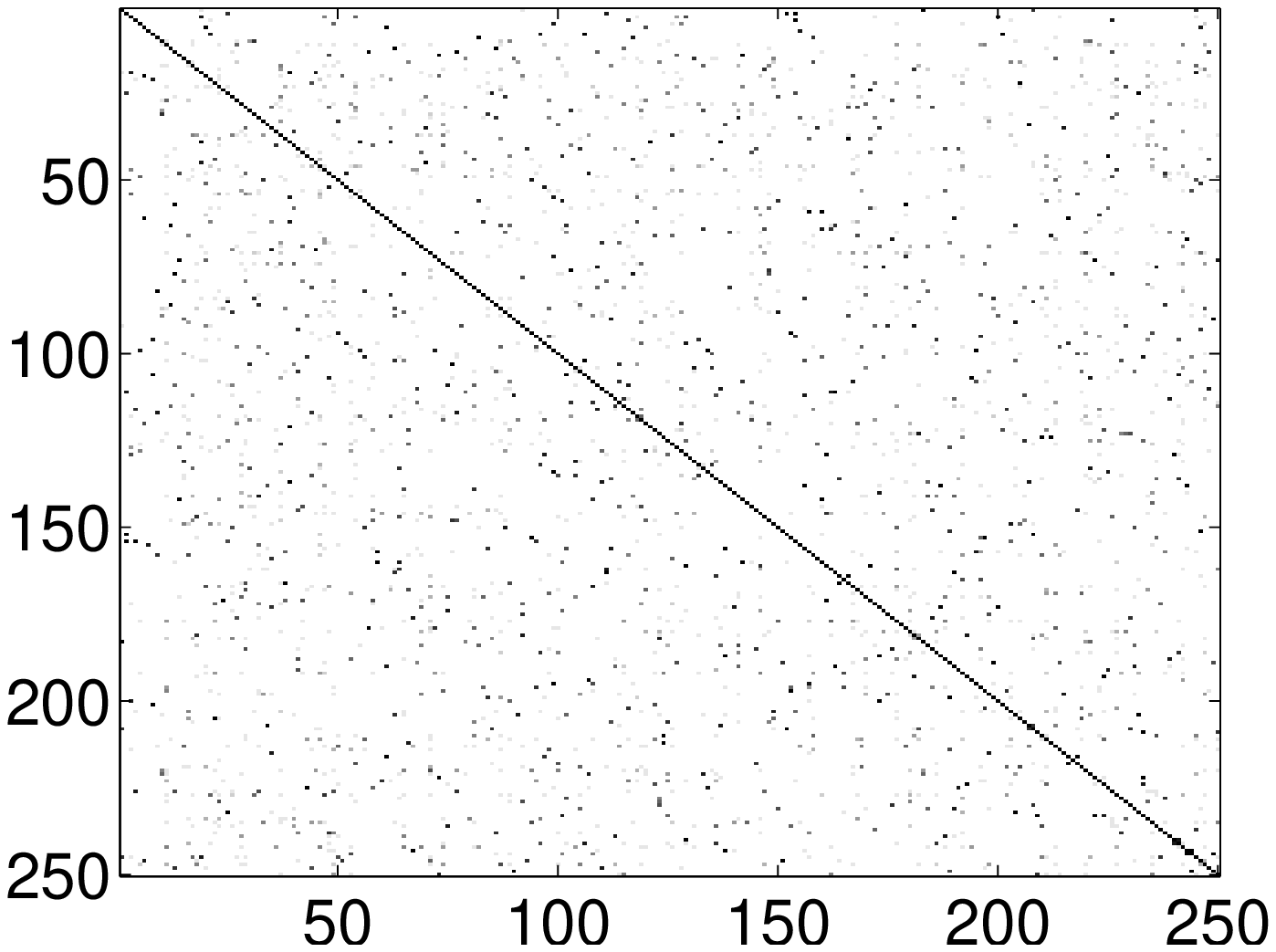}\label{fig:model2_poly_p_250_Nonparanormal_0.02}
}\caption{Heatmaps of the frequency of nonzeros identified for each
entry of the graph matrix out of 10 replications. White indicates 10
zeros identified out of 10 runs, and black indicates 0/10. For each
row, from left to right: \textbf{Ground truth, Semi-EFGMs, GGMs,
Nonparanormal}. \label{fig:heatmaps}}
\end{figure}

\subsection{Stock Price Data}

We further study the performance of Semi-EFGMs on a stock price
data. This data contains the historical prices of \textsf{S\&P500}
stocks over 5 years, from January 1, 2008 to January 1, 2013. By
taking out the stocks with less than 5 years of history, we end up
with 465 stocks, each having daily closing prices over 1,260 trading
days. The prices are first adjusted for dividends and splits and the
used to calculate daily log returns. Each day's return can be
represented as a point in $\mathbb{R}^{465}$. To apply Semi-EFGMs to
this data, we use the polynomial kernel $\phi(X_s,X_t)= ( \beta +
\varphi(X_s)^\top \varphi(X_t))^\alpha$ to model the pairwise
interactions between stocks. The feature vector $\varphi(X_s)$ is
defined as the $5$-day prices of each stock (thus the number of
samples reduces to $n=252$). Since the category information of
\textsf{S\&P500} is available, we  measure the performance by
precision, recall and F-score of the top $k$ links (edges) on the
constructed graph. A link is regarded as \emph{true} if and only if
it connects two nodes belonging to the same category. Note that the
category information is \emph{not} used in any of the graphical
model learning procedures. The parameters $\alpha$, $\beta$ and
$\lambda_n$ are tuned with cross-validation. Figure~\ref{fig:stocks}
shows the curves of precision, recall and F-score as functions of
$k$. It can be seen that Semi-EFGMs significantly outperform the
GGMs and Nonparanormal for identifying correct category links. This
result suggests that the interactions among the \textsf{S\&P500}
stocks is potentially highly nonlinear. We can also observe from
Figure~\ref{fig:stocks} that Nonparanormal is comparable or slightly
inferior to GGMs on this data.

\begin{figure}[h!]
\begin{center}
\mbox{
\includegraphics[width=2.2in]{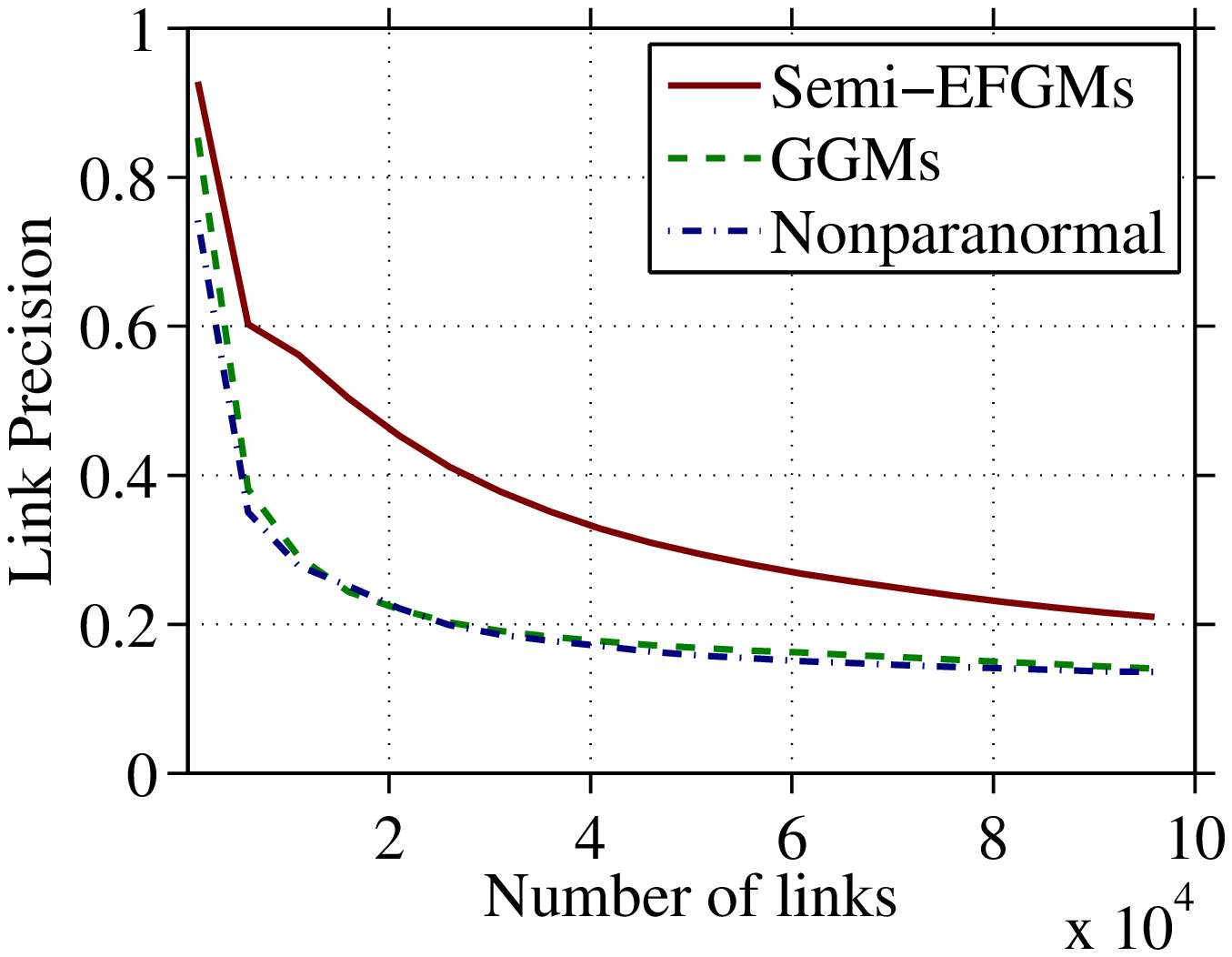}\label{fig:precision}\hspace{-0.1in}
\includegraphics[width=2.2in]{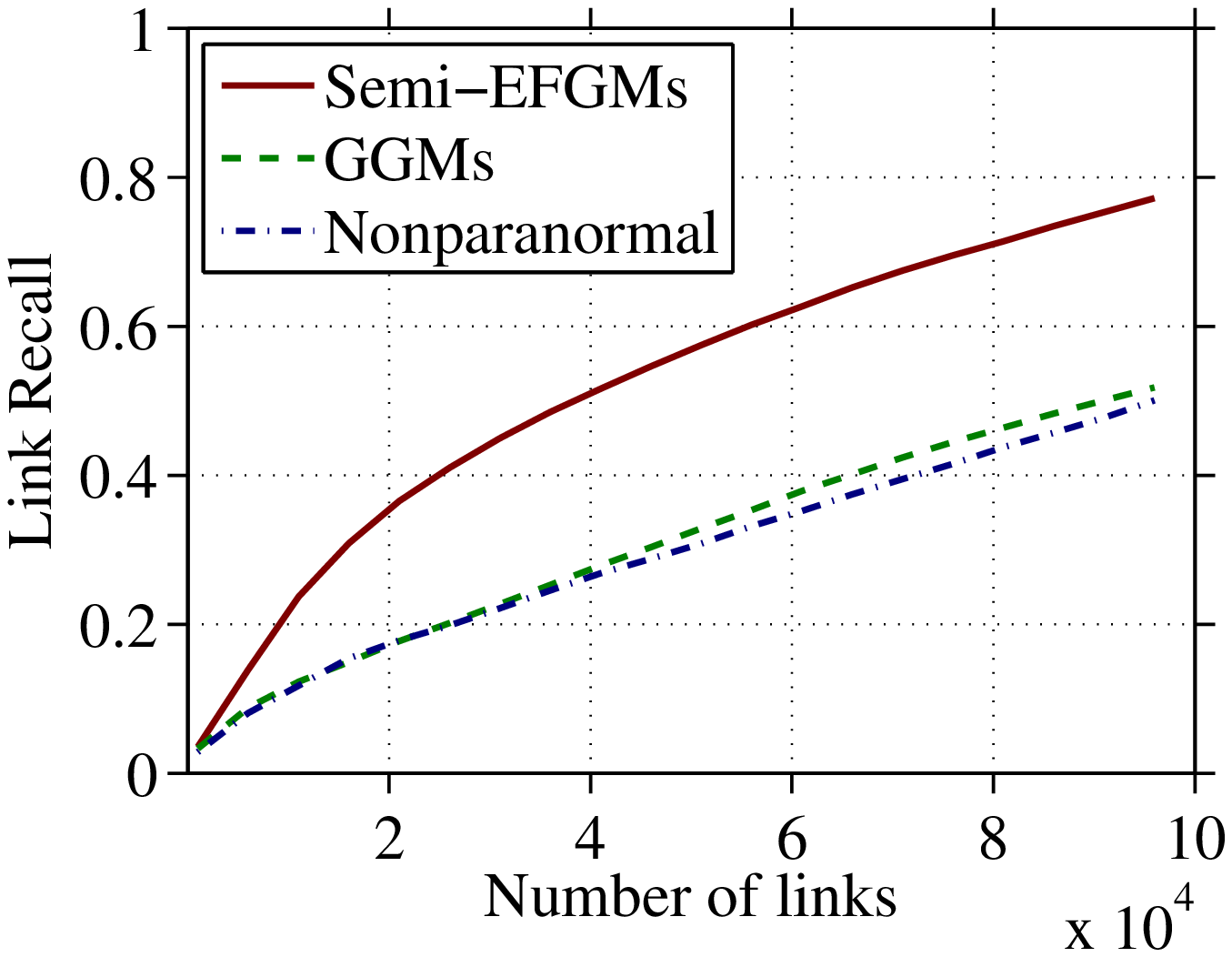}\label{fig:recall}\hspace{-0.1in}
\includegraphics[width=2.2in]{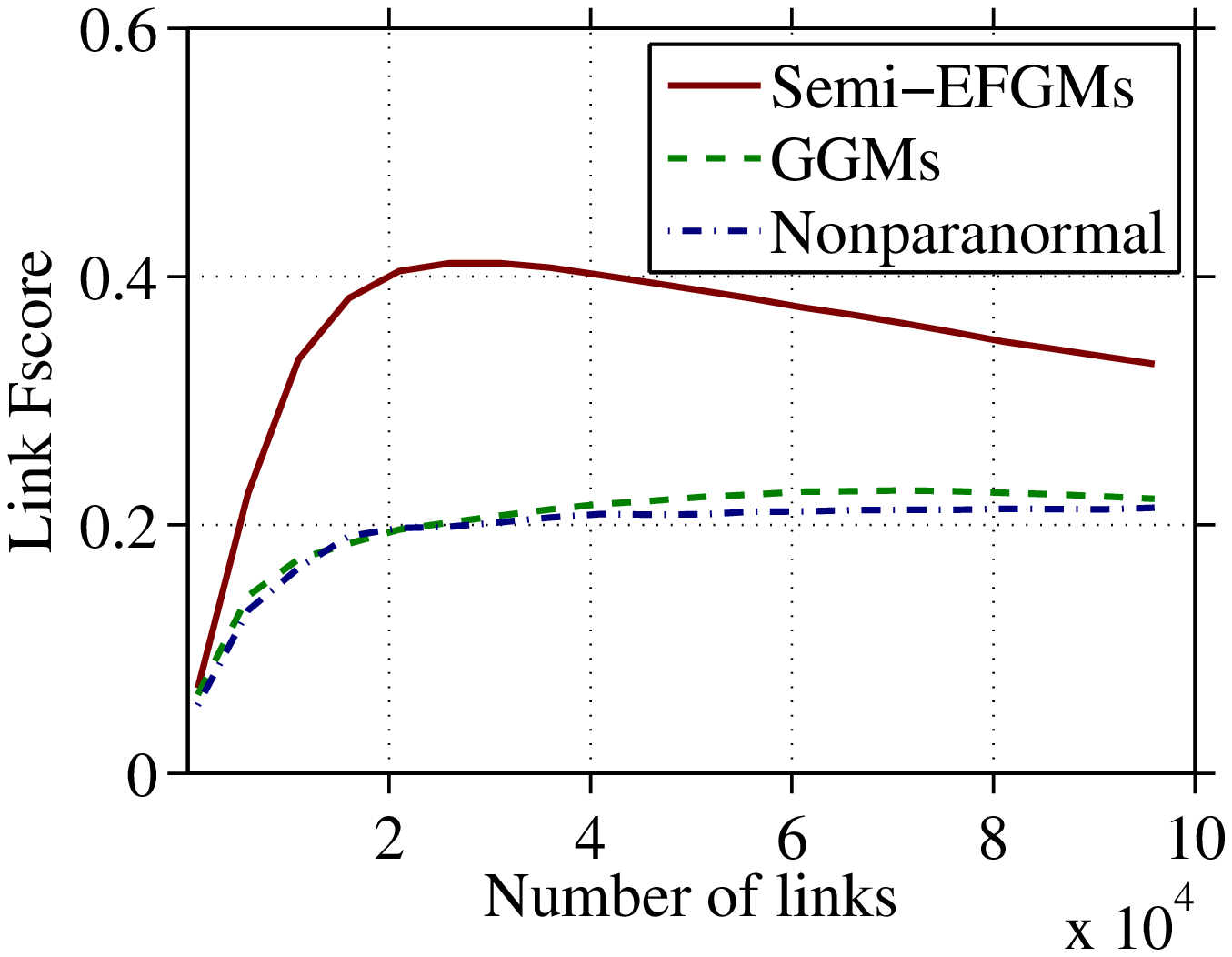}\label{fig:fscore}
}
\end{center}
\vspace{-0.25in}
 \caption{Category link
Precision, Recall and F-score curves of the considered methods on
the stock data \textsf{S\&P500}.}\label{fig:stocks}
\end{figure}

\vspace{-0.1in}
\section{Conclusion}\label{sect:conclusion}

In this paper, we propose Semi-EFGMs as a novel class of
semiparametric exponential family graphical models. The main idea is
to use a parametric nonlinear mapping families, e.g., Mercer
kernels, to compute pairwise sufficient statistics. This allows us
to capture complex interactions among variables which are not
uncommon in modern engineering applications. We investigate two
types of estimators, an $\ell_1$-regularized joint MLE estimator and
an $\ell_1$-regularized node-conditional MLE estimator, for model
parameters learning. Theoretically, we prove that under proper
conditions, our proposed estimators are consistent in parameter
estimation and the rates of convergence are optimal.
Computationally, we show that with proper relaxations, the
proposed estimators can be efficiently optimized via off-the-shelf
GGMs solvers. Empirically, we demonstrate the advantage of
Semi-EFGMs over the state-of-the-art parametric/semiparametric
methods when applied to synthetic and real data. To conclude,
Semi-EFGMs are statistically and computationally suitable for
learning pairwise graphical models with nonlinear sufficient
statistics. In the current model, we assume that the bivariate
mapping $\phi$ is known up to the tunable parameters. In future
work, we will investigate a more general model where $\phi$ admits a
linear combination of basis functions, e.g., over RKHS, so that the
sufficient statistics can be automatically learned in a data-driven
fashion.

\vspace{-0.1in}
\section*{Acknowledgment}
\vspace{-0.1in}

Xiao-Tong Yuan  was a postdoctoral research associate supported by
NSF-DMS 0808864 and NSF-EAGER 1249316. Ping Li is supported by
ONR-N00014-13-1-0764, AFOSR-FA9550-13-1-0137, and NSF-BigData
1250914. Tong Zhang is supported by NSF-IIS 1016061, NSF-DMS 1007527, and NSF-IIS 1250985.

\newpage

\appendix

\section{Proofs of Lemmas}
\subsection{Proof of Lemma~\ref{lemma_tail}}
\label{append:proof_lemma_tail}
\begin{proof}
Since $X^{(i)}$ are i.i.d. samples of $X$, we have that
$Z_{st}^{(i)} = \phi(X^{(i)}_s, X^{(i)}_t) - \mathbb{E}_{\theta^*}
[\phi(X_s, X_t)]$ are also i.i.d. samples of $Z_{st}$. We use the
exponential Markov inequality for the sum $Z = \sum_{i=1}^n
Z_{st}^{(i)}$ and with a parameter $\eta>0$
\[
\mathbb{P} \left(Z > \epsilon\right) = \mathbb{P} \left(\exp\{\eta
Z\}
> \exp\{\eta\epsilon\}\right) \le
\frac{\mathbb{E}[\exp\{\eta Z\}]}{\exp\{\eta\epsilon\}} =
\frac{\prod_{i=1}^n\mathbb{E}\left[\exp\left\{\eta
Z^{(i)}_{st}\right\}\right]}{\exp\{\eta \epsilon\}}.
\]
If $\eta\le\zeta$, Assumption~\ref{assump:tail_1} yields
\[
\mathbb{P} \left(Z > n\varepsilon\right) \le
\frac{\exp\left\{n\sigma^2\eta^2/2\right\}}{\exp\{\eta n\varepsilon\}} =
\exp\left\{-\eta n\varepsilon + n\sigma^2\eta^2/2\right\},
\]
whose minimum is attained at $\eta = \min\left(\frac{\varepsilon}{\sigma^2},\zeta\right)$. Thus, for any $\varepsilon\le\sigma^2\zeta$, we have
\[\mathbb{P}\left(Z > n\varepsilon\right) \le
\exp\left\{-\frac{n\varepsilon^2}{2\sigma^2}\right\}.
\]
Repeating this argument for $-Z^{(i)}_{st}$ instead of
$Z^{(i)}_{st}$, we obtain the same bound for $\mathbb{P}(-Z >
n\varepsilon)$. Combining these two bounds yields
\[
\mathbb{P}\left(\left|\frac{1}{n}\sum_{i=1}^n
\phi(X^{(i)}_s,X^{(i)}_t) -
\mathbb{E}_{\theta^*}[\phi(X_s,X_t)]\right|
> \varepsilon\right) = \mathbb{P}\left(|Z| > n\varepsilon\right) \le 2
\exp\left\{-\frac{n\varepsilon^2}{2\sigma^2}\right\}.
\]
This completes the proof.
\end{proof}

\subsection{Proof of Lemma~\ref{lemma:norm_joint}}
\label{append:proof_lemma_norm_joint}
\begin{proof}
From the gradient term~\eqref{equat:joint_derivatives} and
Lemma~\ref{lemma_tail} we have that for any index pair $(s,t)$ and
$\varepsilon < \sigma^2\zeta$
\begin{eqnarray}
\mathbb{P}\left(\left|\frac{\partial L
(\theta^*;\mathbb{X}_n)}{\partial \theta^*_{st}}\right| >
\varepsilon\right) &=& \mathbb{P}\left(\left|\frac{1}{n}\sum_{i=1}^n
\phi(X^{(i)}_s,X^{(i)}_t)- \mathbb{E}_{\theta^*} [\phi(X_s,X_t)]
\right| > \varepsilon\right) \le  2
\exp\left\{-\frac{n\varepsilon^2}{2\sigma^2}\right\}\nonumber.
\end{eqnarray}
By the union  bound we obtain
\begin{eqnarray}
\mathbb{P}(\|\nabla L(\theta^*;\mathbb{X}_n))\|_\infty >
\varepsilon) &\le&
2p^2\exp\left\{-\frac{n\varepsilon^2}{2\sigma^2}\right\} \nonumber.
\end{eqnarray}
Let us choose $\varepsilon = \sigma\sqrt{6\ln p/n}$. Since $n > 6
\ln p/(\sigma^2\zeta^2)$, we have $ \varepsilon < \sigma^2\zeta$.
Therefore we obtain that with probability at least $1-2p^{-1}$,
\[
\|\nabla L(\theta^*;\mathbb{X}_n)\|_\infty \le \sigma \sqrt{6\ln
p/n}.
\]
This completes the proof.
\end{proof}

\subsection{Proof of Lemma~\ref{lemma:error_bound_joint}}
\label{append:proof_lemma_error_bound_joint}

\begin{proof}
Let $\Delta \theta = \hat\theta_n - \theta^*$ and we define
$\Delta\tilde\theta = t\Delta\theta$ where we pick $t=1$ if
$\|\Delta\theta\|<r$ and $t \in (0,1)$ with
$\|\Delta\tilde\theta\|=r$ otherwise. By definition, we have
$\|\Delta\tilde\theta\| < r$. We now claim that $\|\Delta
\tilde\theta_{\bar S}\|_1 \le 3 \|\Delta\tilde\theta_S\|_1$. Indeed,
since $\theta^*_{\bar S} = 0$, we have
\begin{eqnarray}\label{equat:lemma_theta_joint_bound_1}
\|\theta^* + \Delta \tilde\theta\|_1 - \|\theta^*\|_1 = \|(\theta^*
+ \Delta\tilde\theta)_S\|_1 + \|\Delta\tilde\theta_{\bar S}\|_1 -
\|\theta^*_S\|_1 \ge \|\Delta\tilde\theta_{\bar S}\|_1 -
\|\Delta\tilde\theta_S\|_1.
\end{eqnarray}
From the convexity of function $L(\theta;\mathbb{X}_n)$ and $\lambda
\ge 2\gamma_n = 2\|\nabla L(\theta^*;\mathbb{X}_n)\|_\infty$ we have
\begin{equation}\label{equat:lemma_theta_joint_bound_2}
L(\theta^* + \Delta \tilde\theta;\mathbb{X}_n) -
L(\theta^*;\mathbb{X}_n) \ge  \langle \nabla
L(\theta^*;\mathbb{X}_n), \Delta \tilde\theta\rangle \ge -\|\nabla
L(\theta^*;\mathbb{X}_n)\|_\infty \|\Delta\tilde\theta\|_1 \ge -
\frac{\lambda_n }{2} \|\Delta\tilde\theta\|_1.
\end{equation}
Due to the optimality of $\hat\theta_n$ and the convexity of
$L(\theta;\mathbb{X}_n)$, it holds that
\begin{equation}\label{equat:lemma_theta_joint_bound_3}
L(\theta^*+ \Delta\tilde\theta;\mathbb{X}_n) + \lambda_n \|\theta^*
+ \Delta\tilde\theta\|_1 \le L(\theta^*;\mathbb{X}_n) + \lambda_n
\|\theta^*\|_1.
\end{equation}
By combining the proceeding three
inequalities~\eqref{equat:lemma_theta_joint_bound_1},~\eqref{equat:lemma_theta_joint_bound_2}
and~\eqref{equat:lemma_theta_joint_bound_3}, we obtain that
\begin{eqnarray}
0 &\ge& L(\theta^* + \Delta \tilde\theta;\mathbb{X}_n) +
\lambda_n\|\theta^* + \Delta \tilde\theta\|_1 -
L(\theta^*;\mathbb{X}_n) -
\lambda_n \|\theta^*\|_1 \nonumber \\
&\ge& -\frac{\lambda_n}{2} (\|\Delta\tilde\theta_S\|_1 +
\|\Delta\tilde\theta_{\bar S}\|_1) + \lambda_n
(\|\Delta\tilde\theta_{\bar S}\|_1 - \|\Delta\tilde\theta_S\|_1),
\nonumber
\end{eqnarray}
which implies $\|\Delta\tilde\theta_{\bar S}\|_1 \le 3
\|\Delta\tilde\theta_S\|_1$. From second-order Taylor expansion we
know that there exists a real number $\xi \in [0,1]$ such that
\begin{equation}
L(\theta^* + \Delta \tilde\theta;\mathbb{X}_n) =
L(\theta^*;\mathbb{X}_n) +  \langle \nabla L(\theta^*;\mathbb{X}_n),
\Delta \tilde\theta\rangle + \frac{1}{2}\tilde\Delta\theta^\top
\nabla^2 L(\theta^* + \xi\Delta
\tilde\theta;\mathbb{X}_n)\tilde\Delta\theta. \nonumber
\end{equation}
By using Assumption~\ref{assump:positive_definite} (note that
$\|\xi\tilde\Delta\theta\| \le \|\tilde\Delta\theta\|< r$)
and~\eqref{equat:lemma_theta_joint_bound_2} we have
\begin{equation}\label{equat:lemma_theta_joint_bound_4}
L(\theta^* + \Delta \tilde\theta;\mathbb{X}_n) -
L(\theta^*;\mathbb{X}_n) \ge  \langle \nabla
L(\theta^*;\mathbb{X}_n), \Delta \tilde\theta\rangle +
\frac{\beta}{2}\|\tilde\Delta\theta\|^2 \ge -\frac{\lambda_n }{2}
\|\Delta\tilde\theta\|_1 + \frac{\beta}{2}\|\tilde\Delta\theta\|^2.
\end{equation}
By combining the
inequalities~\eqref{equat:lemma_theta_joint_bound_1},~\eqref{equat:lemma_theta_joint_bound_3}
and~\eqref{equat:lemma_theta_joint_bound_4}, we obtain that
\begin{eqnarray}
0 &\ge& L(\theta^* + \Delta\tilde\theta;\mathbb{X}_n) +
\lambda_n\|\theta^* + \Delta\tilde\theta\|_1 -
L(\theta^*;\mathbb{X}_n) -
\lambda_n \|\theta^*\|_1 \nonumber \\
&\ge& -\frac{\lambda_n }{2} \|\Delta\tilde\theta\|_1  +
\frac{\beta}{2}\|\tilde\Delta\theta\|^2+
\lambda_n(\|\tilde\Delta\theta_{\bar S}\|_1 -
\|\tilde\Delta\theta_S\|_1)
\nonumber \\
&\ge& \frac{\lambda_n}{2} (\|\Delta\tilde\theta_{\bar S}\|_1 - 3
\|\Delta\tilde\theta_S\|_1) + \beta\|\Delta\tilde\theta\|^2
\nonumber \\
&\ge& - 1.5 \lambda_n \|\Delta\tilde\theta_S\|_1 +
\beta\|\Delta\tilde\theta\|^2 \ge - 1.5 \lambda_n
\sqrt{|S|}\|\Delta\tilde\theta\| + \beta\|\Delta\tilde\theta\|^2,
\nonumber
\end{eqnarray}
which implies that
\[
\|\Delta\tilde\theta\| \le 1.5 \lambda_n \beta^{-1}
\sqrt{\|\theta^*\|_0} \le 1.5 c_0 \sqrt{\|\theta^*\|_0} \beta^{-1}
\gamma_n =\gamma.
\]
Since $\gamma<r$, we claim that $t=1$ and thus
$\Delta\tilde\theta=\Delta\theta$. Indeed, if otherwise $t<1$, then
$\|\Delta\tilde\theta\|=r>\gamma$ which contradicts the above
inequality. This completes the proof.
\end{proof}

\subsection{Proof of Lemma~\ref{lemma:tail_mgf}}
\label{append:proof_lemma_tail_mgf}

\begin{proof}
Note that for any $\eta$, $\exp\{\eta x\}$ is convex with respect to
$x$. By applying Jensen's inequality we have
\[
\exp\left\{\eta \mathbb{E}_{\theta^*_s} [\phi(X_s,X_t) \mid
X_{\s}]\right\} \le \mathbb{E}_{\theta^*_s} [\exp\left\{\eta
\phi(X_s,X_t) \right\}\mid X_{\s}].
\]
By taking the expectation $\mathbb{E}_{\theta^*_{\s}}[\cdot]$ with
respect to the marginal distribution of $X_{\s}$, and using the rule
of iterated expectation, we obtain
\[
\mathbb{E}_{\theta^*_{\s}}\left[\exp\left\{\eta
\mathbb{E}_{\theta^*_s} [\phi(X_s,X_t) \mid X_{\s}]\right\}\right]
\le \mathbb{E}_{\theta^*_{\s}}\left[ \mathbb{E}_{\theta^*_s}
[\exp\left\{\eta \phi(X_s,X_t) \right\}\mid X_{\s}]\right] =
\mathbb{E}_{\theta^*}[\exp\{\eta \phi(X_s,X_t)\}].
\]
By using the ``law of the unconscious statistician'' and the above
inequality we obtain
\[
\mathbb{E}[\exp\{\eta\tilde Z_{st}\}] \le \mathbb{E}[\exp\{\eta
Z_{st}\}] \le \exp\left\{\sigma^2 \eta^2/2\right\},
\]
where the last inequality follows from
Assumption~\ref{assump:tail_1}. This completes the proof.
\end{proof}

\subsection{Proof of Lemma~\ref{lemma:norm}}
\label{append:proof_lemma_norm}

\begin{proof}
Recall the formulation of gradient $\nabla \tilde
L(\theta_s;\mathbb{X}_n)$ in~\eqref{equat:derivatives}. For any node
$t \in V\s$, we have
\begin{eqnarray}
&&\left|\frac{\partial \tilde L(\theta_s; \mathbb{X}_n)}{\partial
\theta_{st}}\right| \nonumber \\
&=& \left|\frac{1}{n}\sum_{i=1}^n - \phi(X^{(i)}_s,X^{(i)}_t) +
\mathbb{E}_{\theta^*_s}[\phi(X_s,X^{(i)}_t)\mid X^{(i)}_{\s}]\right| \nonumber \\
&\le& \left|\frac{1}{n}\sum_{i=1}^n \phi(X^{(i)}_s,X^{(i)}_t) -
\mathbb{E}_{\theta^*} [\phi(X_s,X_t)] \right| + \left|
\frac{1}{n}\sum_{i=1}^n
\mathbb{E}_{\theta^*_s}[\phi(X_s,X^{(i)}_t)\mid X^{(i)}_{\s}] -
\mathbb{E}_{\theta^*} [\phi(X_s,X_t)] \right| \nonumber.
\end{eqnarray}
Therefore, for any $\varepsilon \le 2\sigma^2\zeta$,
\begin{eqnarray}
\mathbb{P}\left(\left|\frac{\partial \tilde L(\theta_s;
\mathbb{X}_n)}{\partial \theta_{st}}\right|> \varepsilon \right)
&\le& \mathbb{P}\left(\left|\frac{1}{n}\sum_{i=1}^n
\phi(X^{(i)}_s,X^{(i)}_t)- \mathbb{E}_{\theta^*} [\phi(X_s,X_t)] \right| > \frac{\varepsilon}{2}\right) \nonumber \\
&& + \mathbb{P}\left(\left| \frac{1}{n}\sum_{i=1}^n
\mathbb{E}_{\theta^*_s}[\phi(X_s,X^{(i)}_t)\mid X^{(i)}_{\s}] -
\mathbb{E}_{\theta^*} [\phi(X_s,X_t)] \right|> \frac{\varepsilon}{2} \right) \nonumber \\
&\le&  4 \exp\left\{-\frac{n\varepsilon^2}{8\sigma^2}\right\}
\nonumber,
\end{eqnarray}
where the last ``$\le$'' follows from Lemma~\ref{lemma_tail} and
Lemma~\ref{lemma_tail_0}. By the union  bound we obtain
\begin{eqnarray}
\mathbb{P}(\|\nabla \tilde L(\theta^*_s;\mathbb{X}_n)\|_\infty >
\varepsilon) &\le&
4p\exp\left\{-\frac{n\varepsilon^2}{8\sigma^2}\right\} \nonumber.
\end{eqnarray}
Let us choose $\varepsilon = 2\sigma\sqrt{6\ln p/n}$. Since $n >
6\ln p/(\sigma^2\zeta^2)$, we have $ \varepsilon < 2\sigma^2\zeta$.
We conclude that with probability at least $1-4p^{-2}$,
\[
\|\nabla \tilde L(\theta^*_s;\mathbb{X}_n)\|_\infty \le
2\sigma\sqrt{6\ln p/n}.
\]
This proves the desired bound.
\end{proof}

\end{document}